\def\draft{0}
\documentclass[final,12pt,nameinlink,capitalise]{colt2026} 
\usepackage{cleveref}

\usepackage{etex}
\usepackage{verbatim}
\usepackage{xspace,enumerate}
\usepackage[dvipsnames]{xcolor}
\usepackage[T1]{fontenc}
\usepackage[full]{textcomp}
\usepackage[american]{babel}
\usepackage{sansmath}
\usepackage{mathtools}

\usepackage{algorithm}

\usepackage{nicefrac}
\usepackage{tcolorbox}
\usepackage{multicol}
\usepackage{graphicx}
\usepackage{textcomp} %
\usepackage{bm} %
\usepackage{microtype}
\crefname{lemma}{Lemma}{Lemmas}
\crefname{fact}{Fact}{Facts}
\crefname{theorem}{Theorem}{Theorems}
\crefname{corollary}{Corollary}{Corollaries}
\crefname{claim}{Claim}{Claims}
\crefname{example}{Example}{Examples}
\crefname{algorithm}{Algorithm}{Algorithms}
\crefname{problem}{Problem}{Problems}
\crefname{definition}{Definition}{Definitions}
\crefname{exercise}{Exercise}{Exercises}
\newtheorem*{theorem*}{Theorem}
\newtheorem*{lemma*}{Lemma}
\newtheorem{fact}[theorem]{Fact}
\newtheorem*{fact*}{Fact}
\newtheorem*{proposition*}{Proposition}
\newtheorem*{corollary*}{Corollary}
\newtheorem*{hypothesis*}{Hypothesis}
\newtheorem*{conjecture*}{Conjecture}
\newtheorem*{definition*}{Definition}
\newtheorem*{construction*}{Construction}
\newtheorem*{example*}{Example}
\newtheorem*{assumption*}{Assumption}
\newtheorem*{problem*}{Problem}

\newtheorem*{openquestion*}{Open Question}
\usepackage{paralist}
\let\originalleft\left
\let\originalright\right
\renewcommand{\left}{\mathopen{}\mathclose\bgroup\originalleft}
\renewcommand{\right}{\aftergroup\egroup\originalright}
\usepackage{turnstile}
\usepackage{mdframed}
\usepackage{tikz}
\usetikzlibrary{positioning}
\usepackage{caption}
\DeclareCaptionType{Algorithm}
\usepackage{newfloat}
\usepackage{array}
\usepackage{bbm}
\usepackage{xparse}
\makeatletter
\let\latexparagraph\paragraph
\RenewDocumentCommand{\paragraph}{som}{%
  \IfBooleanTF{#1}
    {\latexparagraph*{#3}}
    {\IfNoValueTF{#2}
       {\latexparagraph{\maybe@addperiod{#3}}}
       {\latexparagraph[#2]{\maybe@addperiod{#3}}}%
  }%
}
\newcommand{\maybe@addperiod}[1]{%
  #1\@addpunct{.}%
}
\makeatother

\usepackage{boxedminipage}

\newcommand{\Paren}[1]{\left(#1\right)}

\newcommand{\Brac}[1]{\left[#1\right]}

\newcommand{\Abs}[1]{\left\lvert#1\right\rvert}

\newcommand{\card}[1]{\lvert#1\rvert}

\newcommand{\Set}[1]{\left\{#1\right\}}

\newcommand{\norm}[1]{\lVert#1\rVert}
\newcommand{\Norm}[1]{\left\lVert#1\right\rVert}

\newcommand{\Snorm}[1]{\Norm{#1}^2}

\newcommand*{\dyad}[1]{#1#1{}^{\mkern-1.5mu\top}}

\newcommand{\sge}{\succeq}

\newcommand{\Esymb}{\mathbb{E}}

\DeclareMathOperator*{\E}{\Esymb}

\RequirePackage[outline]{contour} 
\contourlength{0.065pt}
\contournumber{10}%

\renewcommand{\ij}{{ij}}

\newcommand{\from}{\colon}

\newcommand{\mper}{\,.}

\newcommand\bdot\bullet

\DeclareMathOperator{\Ind}{\mathbf 1}

\DeclareMathOperator{\poly}{poly}

\DeclareMathOperator{\argmax}{argmax}

\DeclareMathOperator{\supp}{supp}

\newcommand{\ie}{i.e.,\xspace}

\newcommand{\Erdos}{Erd\H{o}s\xspace}
\newcommand{\Renyi}{R\'enyi\xspace}

\newcommand{\N}{\mathbb N}
\newcommand{\R}{\mathbb R}

\newcommand{\cA}{\mathcal A}
\newcommand{\cB}{\mathcal B}

\newcommand{\cE}{\mathcal E}

\newcommand{\cG}{\mathcal G}

\newcommand{\cO}{\mathcal O}
\newcommand{\cP}{\mathcal P}

\newcommand{\bbP}{\mathbb P}

\renewcommand{\leq}{\leqslant}
\renewcommand{\le}{\leqslant}
\renewcommand{\geq}{\geqslant}
\renewcommand{\ge}{\geqslant}

\let\epsilon=\varepsilon
\numberwithin{equation}{section}
\newcommand\MYcurrentlabel{xxx}
\newcommand{\MYstore}[2]{%
  \global\expandafter \def \csname MYMEMORY #1 \endcsname{#2}%
}
\newcommand{\MYload}[1]{%
  \csname MYMEMORY #1 \endcsname%
}
\newcommand{\MYnewlabel}[1]{%
  \renewcommand\MYcurrentlabel{#1}%
  \MYoldlabel{#1}%
}
\newcommand{\MYdummylabel}[1]{}
\newcommand{\torestate}[1]{%
  \let\MYoldlabel\label%
  \let\label\MYnewlabel%
  #1%
  \MYstore{\MYcurrentlabel}{#1}%
  \let\label\MYoldlabel%
}
\newcommand{\restatedef}[1]{%
  \let\MYoldlabel\label
  \let\label\MYdummylabel
  \begin{definition*}[Restatement of \cref{#1}]
    \MYload{#1}
  \end{definition*}
  \let\label\MYoldlabel
}
\newcommand{\restatetheorem}[1]{%
  \let\MYoldlabel\label
  \let\label\MYdummylabel
  \begin{theorem*}[Restatement of \cref{#1}]
    \MYload{#1}
  \end{theorem*}
  \let\label\MYoldlabel
}
\newcommand{\restatelemma}[1]{%
  \let\MYoldlabel\label
  \let\label\MYdummylabel
  \begin{lemma*}[Restatement of \cref{#1}]
    \MYload{#1}
  \end{lemma*}
  \let\label\MYoldlabel
}
\newcommand{\restateprop}[1]{%
  \let\MYoldlabel\label
  \let\label\MYdummylabel
  \begin{proposition*}[Restatement of \cref{#1}]
    \MYload{#1}
  \end{proposition*}
  \let\label\MYoldlabel
}
\newcommand{\restatefact}[1]{%
  \let\MYoldlabel\label
  \let\label\MYdummylabel
  \begin{fact*}[Restatement of \cref{#1}]
    \MYload{#1}
  \end{fact*}
  \let\label\MYoldlabel
}
\newcommand{\restate}[1]{%
  \let\MYoldlabel\label
  \let\label\MYdummylabel
  \MYload{#1}
  \let\label\MYoldlabel
}

\newcommand{\eps}{\epsilon}

\allowdisplaybreaks
\newcommand{\el}{\ell}

\newcommand{\om}{\om}

\ifnum\draft=1
\newcommand{\tom}[1]{\textcolor{WildStrawberry}{[Tommaso: #1]}}
\newcommand{\silvio}[1]{\textcolor{Blue}{[Silvio: #1]}}
\newcommand{\nicolo}[1]{\textcolor{Green}{[Nicolò: #1]}}
\newcommand{\emmanuel}[1]{\textcolor{Orange}{[Emmanuel: #1]}}
\newcommand{\marco}[1]{\textcolor{Purple}{[Marco: #1]}}
\else
\newcommand{\tom}[1]{}
\newcommand{\silvio}[1]{}
\newcommand{\nicolo}[1]{}
\newcommand{\emmanuel}[1]{}
\newcommand{\marco}[1]{}
\fi

\renewcommand{\Ind}[1]{\mathbb{I}\{{#1}\}}

\newcommand{\ignore}[1]{}

\renewcommand{\tilde}{\widetilde}

\title[Active Learning on Adversarially Corrupted Graphs]{Active Learning on Adversarially Corrupted Graphs}
\usepackage{times}

\coltauthor{%
 \Name{Marco Bressan} \Email{marco.bressan@unimi.it}\\
 \addr Università degli Studi di Milano, Italy
 \AND
	\Name{Nicolò Cesa-Bianchi} \Email{nicolo.cesa-bianchi@unimi.it}\\
	\addr Università degli Studi di Milano, Italy
	\AND
	\Name{Tommaso d'Orsi} \Email{tommaso.dorsi@unibocconi.it}\\
	\addr Bocconi University, Italy
	\AND
	\Name{Emmanuel Esposito} \Email{emmanuel@emmanuelesposito.it}\\
	\addr Università degli Studi di Milano, Italy
	\AND
	\Name{Silvio Lattanzi} \Email{silviol@google.com}\\
	\addr Google Research
}

\newcommand\blfootnote[1]{%
  \begingroup
  \renewcommand\thefootnote{}%
  \renewcommand\footnoteseptext{}%
  \footnote{#1}%
  \addtocounter{footnote}{-1}%
  \endgroup
}

\begin{document}
\blfootnote{Authors are listed in alphabetical order.}
\ifarxivsubmission
\jmlrpages{}
\fi
\maketitle

\begin{abstract}%
Motivated by real-world scenarios where malicious entities tamper with existing networks, we define a model where an adversary seeks to hide a set of \emph{corrupted vertices} inside a  graph $G^*$.
To this end, the adversary can add edges between the corrupted vertices, as well as edges between the corrupted vertices and $G^*$, and its power is then measured by the size of the \emph{neighborhood} of the corrupted vertices in $G^*$.
Our goal is to design an active learning algorithm that efficiently finds the subset of corrupted vertices using a small number of label queries.
We devise an efficient algorithm that approximately recovers the corrupted vertices with a query complexity that depends polynomially on both the power of the adversary and the \emph{vertex expansion} of $G^*$, a fundamental measure of graph connectivity.
At the heart of this result is a polynomial-time algorithm, obtained by carefully adapting sum-of-squares algorithms for approximating minimum expansion, that finds a set with small vertex expansion subject to cardinality constraints.
To the best of our knowledge, this is the first time that the vertex expansion is shown to play a key role in determining the query complexity of active learning algorithms robust to structural adversarial attacks.%
\end{abstract}

\begin{keywords}%
  active learning, weak recovery, adversarial robustness%
\end{keywords}

\section{Introduction}\label{sec:introduction}

Graph-based machine learning is a powerful paradigm for analyzing relational data across diverse domains such as social networks, bioinformatics, web analysis, and recommendation systems. A key application is node classification, aiming to infer node labels or attributes from the graph structure and any available node or edge features. While standard approaches typically depend on the integrity of the observed graph structure, this presumption is often undermined by adversarial interventions. Malicious actors can, in fact, manipulate the graph by creating deceptive nodes or engineering spurious connections. This threat is especially pronounced in real-world applications like anti-abuse systems, where adversaries strategically deploy fake entities and artificial links to propagate spam, misinformation, or engage in fraudulent activities~\citep{yu2008sybillimit, yu2008sybilguard, danezis2009sybilinfer, tran2011optimal, alvisi2013sok}.

This paper addresses the problem of active learning on graphs in the presence of such structural adversarial manipulations. Active learning in graphs aims to minimize the cost of data labeling by exploiting the graph structure to select which nodes to query for their true labels~\citep{guillory2009label}. Most previous work focuses on adversaries who choose the node labeling, rather than adversaries who modify the graph structure itself. In this paper, we focus instead on adversarial structural changes and on algorithms with formal guarantees for general graphs.

Towards this end, we introduce an adversary that captures many practical safety scenarios. Given an initial graph $G^*$, the adversary can (1) introduce an arbitrary graph $\tilde{G}$ with up to $\card{V(G^*)}$ vertices,  (2) arbitrarily connect at most $b$ vertices of $G^*$ with arbitrarily many vertices in $\tilde{G},$ and (3) add arbitrarily many edges in $G^*$. 
The algorithm is then given access to the resulting graph $G$, and has to distinguish $V(G^*)$ from $V(\tilde G)$.

The intuition behind this adversary is that a malicious actor can easily create new corrupted vertices and connect them, but has to put a significant effort in order to corrupt an existing vertex of $G^*$ and connect it to $\tilde G$. 
This captures real-world experimental observations where, for example, in social networks attackers create arbitrary structures between malicious users and tend to connect only to a smaller number of users that are more likely to accept friendship~\citep{yang2014uncovering} or in link farms on the Web where an arbitrary structure is generated between malicious pages but only a few pages (often corrupted or acquired) have links to bad ones~\citep{zhang2004making, becchetti2008link}.

In this model, the main goal of the adversary is to maximize the number of corrupted vertices (the fake profiles) that remain unidentified by the learning algorithm.
The algorithm, in turn, seeks to identify a high fraction of corrupted nodes by querying the labels of a small number of nodes.\footnote{A similar model has also been studied in the security literature~\citep{yu2008sybillimit, yu2008sybilguard, danezis2009sybilinfer, tran2011optimal, alvisi2013sok} where the budget of the adversary was linked to the number of edges that the adversary would add to the corrupted nodes. The setting analyzed in this paper is more realistic and strictly harder as we allow the adversary to add an unbounded number of edges to the corrupted nodes.}

\paragraph{Our result} Our main contribution is an efficient active learning algorithm tailored to this adversarial setting. We prove that the query complexity required by our algorithm to achieve a high accuracy is fundamentally linked to the vertex expansion of the input graph and the adversary's budget $b$. Vertex expansion is a measure of connectivity of the graph, quantifying the minimum ratio of the frontier of a set to its size (appropriately normalized). Intuitively, graphs with low vertex expansion allow adversaries to ``hide'' new nodes more easily within sparsely connected components. Our analysis formally connects this structural property to the number of queries needed for detection. This explicit link between vertex expansion and the query complexity of active learning under structural attacks is, to the best of our knowledge, novel, and provides theoretical grounding for designing algorithms robust to this class of adversarial behavior. We believe this connection may inspire further research into leveraging graph expansion properties for robust learning on graphs.

\paragraph{Problem formulation} We can now define our model and our problem more formally.
If $G=(V,E)$ is a graph and $S \subseteq V$, the \emph{frontier} $\partial_G(S)$ of $S$ in $G$ is the set of neighbors of $S$ in $V \setminus S$.
We consider graphs $G$ constructed in the following way.

\begin{definition}[Adversarial model]\label[definition]{def:model}
Let $G^*=(V^*,E^*)$ be an arbitrary graph, and let $b \in \N$.
An \emph{adversary with budget $b$} creates a graph $G$ by manipulating $G^*$ in three steps:
\begin{enumerate}[(i)]\itemsep0pt
    \item The adversary creates an arbitrary graph on a set $I$ of fresh vertices, with $|I|\le|V^*| \eqcolon n$. We refer to $I$ as the set of \emph{corrupted} or \emph{malicious} nodes.
    \item The adversary adds an arbitrary set of edges between $I$ and at most $b$ vertices of $V^*$, so that $\card{\partial_G(I)}\le b$. Note that, save for this constraint, the adversary can place the edges arbitrarily.
    \item The adversary adds an arbitrary number of edges \emph{within} $G^*$.
\end{enumerate}
\end{definition}
\ignore{
\begin{definition}[Adversarial graphs]\label{def:model}
    Let $G^*$ be a graph. We denote by $\cG_{m,b}(G^*)$ the class of graphs $G$ (picking one representative per isomorphic class) satisfying:
    \begin{enumerate}\itemsep0pt
        \item $V(G^*)\subseteq V(G)$ and $E(G^*)\subseteq E(G),$
        \item $\card{V(G)\setminus V(G^*)}=m\leq\card{V(G^*)}$
        \item $\card{\partial_G(V(G)\setminus V(G^*))}\leq b$.
    \end{enumerate}
\end{definition}
\noindent We refer to $I=V(G)\setminus V(G^*)$ as the set of \emph{corrupted} or \emph{malicious} nodes, and to $b$ as the adversarial \emph{budget}.
It is worth noting that this model is quite general. It captures an adversary who, starting from the ground-truth graph $G^*$, constructs an arbitrary corrupted graph $\tilde{G}$ and then adds an arbitrary set of edges between $V(\tilde{G})$ and at most $b$ vertices in $V(G^*)$. 
Because both the ground-truth graph $G^*$ and the corrupted graph $\tilde{G}$ are arbitrary, without access to a label oracle they are indistinguishable and  may even be isomorphic. Remarkably, the model even allows the adversary to modify the ground truth graph by adding arbitrarily many edges \emph{within} $V(G^*)$ itself.
Overall, these are significantly weaker assumptions than, for example, requiring the corrupted graph to be dense (e.g., a clique) or expanding, or imposing that the number of edges between the corrupted graph and the rest of the graph is bounded by $b$.
}
It shall be noted that (iii) is not intended to model the adversary's behavior; we include it simply because our algorithm is unaffected by this kind of perturbations. 
Given a graph $G$ constructed as in \cref{def:model}, we seek to identify the corrupted vertices $I$ with good accuracy.
Note that this model grants significant power to the adversary. 
For instance, already in the simplest settings in which $G$ is a  stochastic block with communities $V^*$ and $I$ (so that the starting graph $G^*$ is \Erdos-\Renyi, and the edits introduced in (i), (ii) are random), the monotone perturbations introduced in (iii) are known to make the task information-theoretically harder for an important range of parameters~\citep{moitra2016robust}.
In general, the original graph $G^*$ and the corrupted subgraph $G[I]$ are both arbitrary, and the connections between them adversarial. Therefore, the structural properties of the resulting graph $G$ we may rely on are significantly weaker than other common assumptions studied in the literature, such as requiring $I$ to be dense (e.g., a clique) or expanding.\footnote{See the discussion in \Cref{sec:related-work} for a more in-depth comparison with the related literature.}
Similarly, bounding the number of \emph{neighbors} of $I$ in $G^*$, rather than the number of \emph{edges} between $I$ and $G^*$, gives more power to the adversary, since there are at least as many edges as the number of neighbors. 

Clearly, without further assumptions, $G^*$ and $G[I]$ are indistinguishable and may even be isomorphic, thus in general there is little hope to recover $I$ even approximately.
In several concrete scenarios, however, one can \emph{learn} whether a given vertex $v$ is malicious or not, for instance by carefully observing the behavior of a profile in a social network.
We model this assumption by equipping the algorithm with a \emph{label oracle}.
A label oracle for $I$ in $G$, denoted by $\cO_{G,I}$, returns $\Ind{v \in I}$ on input $v \in V$.
Clearly, label queries should be considered as expensive.
The algorithm should then recover $I$ efficiently, with good accuracy, and by making few queries to $\cO_{G,I}$.
This leads to the following definition.
\begin{definition}[Weak recovery with oracle]\label[definition]{def:weak-recovery}
    Let $G^*$ be a graph, and let $G$ be generated from $G^*$ by an adversary with budget $b$. For $\gamma, \delta \in (0,1]$ and $q \ge 0$, an algorithm achieves \emph{$(\gamma,\delta,q,b)$-weak recovery} of $I$ in $G$ if, given solely $G,\gamma,\delta$, and a label oracle $\cO_{G,I}$, the algorithm performs at most $q$ calls to $\cO_{G,I}$ and with probability at least $1-\delta$ returns a set $\hat{I} \subseteq V(G)$ satisfying 
    $
       \card{\hat{I} \triangle I} \leq \gamma\cdot |V(G)|
    $.
\end{definition}
\noindent Importantly, note that $|I|$ and $b$ are unknown to the algorithm.
It is immediate to see that, in general, one cannot achieve $(\gamma,\delta,q,b)$-weak recovery with a small query budget $q$, even given $b = 0$; for instance, if $G$ is edgeless, then one needs $\Omega(n)$ queries to learn $I$ for any constant $\gamma,\delta$.
Therefore $(\gamma,\delta,q,b)$-weak recovery must exploit some additional property of $G$.
For $0 < m < n$, the \emph{$m$-large frontier of $G$} is:
\begin{align}
    \partial_m(G) \coloneq \min_{\substack{S \subseteq V \\ m \le |S| \le n-m}} \card{\partial_G(S)}\,.
\end{align}
Our main result is:
\begin{theorem}[Weak recovery]\label{thm:main}
    There exists a randomized polynomial-time algorithm\footnote{Here and unless otherwise specified, by randomized polynomial-time algorithm we mean a Monte Carlo algorithm.} 
    that achieves $(\gamma,\delta,q,b)$-weak recovery whenever:
    \begin{enumerate}[(i)]
        \item $q\geq \Omega\Paren{\frac{\poly\log\frac{1}{\gamma}}{\gamma} \Paren{ \log\frac{1}{\delta} + b \sqrt{\log n} }}$\,;
        \item $b\leq O\Paren{\frac{\gamma^6}{\log^3(1/\gamma) \sqrt{\log n}} \cdot \partial_{\eta n}(G^*)}\,, \quad \eta = O\Paren{\frac{\gamma}{\log\frac{1}{\gamma}}}$\,.
    \end{enumerate}
\end{theorem}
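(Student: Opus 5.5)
The plan is to reduce weak recovery to repeatedly finding, in polynomial time, a set with small vertex frontier under cardinality constraints, and to use label queries only to decide which side of each cut is corrupted. The structural fact driving everything is that in $G$ the set $I$ itself has $\card{\partial_G(I)} \le b$. Conversely, any set $S$ with $\eta n \le \card{S \cap V^*} \le (1-\eta)n$ has a large frontier inside $V^*$. Indeed, $\partial_G(S) \supseteq \partial_{G^*}(S\cap V^*) \setminus (\text{stuff touching } I)$, and edges added in step (iii) only enlarge frontiers. Hence $\card{\partial_G(S)} \ge \partial_{\eta n}(G^*) - O(\card{\partial_G(S)\cap I})$ roughly. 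Therefore any set with frontier much smaller than $\partial_{\eta n}(G^*)$ must contain either almost none or almost all of $V^*$, up to $\eta n$ vertices.

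\textbf{Step 1: a sum-of-squares routine.} I would assume, or prove by adapting the Louis--Raghavendra--Vempala style SDP/SoS rounding for vertex expansion, a polynomial-time algorithm with the following guarantee. Given $G$ and a target size $k$ such that some set of size about $k$ has frontier $\le b$, it returns a set $S$ with $\Omega(\gamma) k \le \card{S} \le (1-\Omega(\gamma))\card{V}$ and $\card{\partial_G(S)} \le O(b\sqrt{\log n}\,\poly(1/\gamma))$. Condition (ii) is tailored exactly so that this output frontier is below $\partial_{\eta n}(G^*)$, with the $\gamma^6/\log^3(1/\gamma)$ loss coming from the rounding and the balance constraints. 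Since $\card{I}$ is unknown, I would try all $k$ on a geometric grid and validate each candidate with queries.

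\textbf{Step 2: classification by sampling.} By the dichotomy above, the returned $S$, and likewise its complement, is essentially pure: either $\card{S\cap V^*}\le \eta n$ or $\card{V^*\setminus S}\le\eta n$. Sampling $O(\gamma^{-1}\log(1/\delta))$ random vertices of $S$ and querying them tells us with which class $S$ is mostly aligned. This sampling accounts for the $\log(1/\delta)/\gamma$ term in (i). The remaining issue is that $S$ may contain both the bulk of $I$ and some of $V^*$. It may also capture only a piece of $I$, since $G[I]$ is arbitrary and may itself be poorly connected.

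\textbf{Step 3: iterative peeling.} I would handle this with an iterative peeling scheme. I would maintain a current uncertain region, repeatedly extract low-frontier sets, and label each by sampling. Throughout, I would charge the accumulated frontier vertices: each piece removed has frontier $O(b\sqrt{\log n}\,\poly(1/\gamma))$, and I would query all frontier vertices directly. This produces the $b\sqrt{\log n}\poly\log(1/\gamma)/\gamma$ term in (i), so that cross-contamination is controlled. The process stops when the uncertain region has size $\le \gamma\card{V}$, and I would bound the number of rounds by $\poly\log(1/\gamma)/\gamma$ via the geometric size grid.

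\textbf{Main obstacle.} I expect the hardest step to be Step 1. Standard SoS vertex-expansion algorithms approximate the expansion ratio, not a frontier subject to a lower bound on the set size; they can return tiny sets with good ratio. I would first try adding the cardinality constraint $\sum_v x_v \approx k$ to the degree-4 SoS relaxation. I would then round with the $\ell_2^2$-triangle-inequality ARV-style rounding, which yields the $\sqrt{\log n}$ factor. Finally, I would iterate the extraction, as in balanced-separator arguments, to accumulate pieces until the size constraint is met, at the cost of polynomial losses in $\gamma$.
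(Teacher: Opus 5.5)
Your dichotomy is correct, and simpler than you make it. For any $S\subseteq V$ one has $\partial_{G[V^*]}(S\cap V^*)\subseteq\partial_G(S)$ directly, so $\card{\partial_G(S)}<\partial_{\eta n}(G^*)$ forces $\card{S\cap V^*}<\eta n$ or $\card{V^*\setminus S}<\eta n$. With the right Step 1, this would be a cleaner substitute for the paper's ratio-based \cref{lem:correlation-malicious-vs-non-expanding}.

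The problem is how you combine it with Steps 1 and 3. The dichotomy is an \emph{additive} guarantee, and it says nothing when $\card{S}\lesssim\eta n$. Your Step 1 only promises $\card{S}\ge\Omega(\gamma)k$ with $k\approx\card{I}\ge\gamma n$. That allows $\card{S}=\Theta(\gamma^2 n)\ll\eta n=\Theta(\gamma n/\log(1/\gamma))$. Such an $S$ can be an arbitrary mixture of corrupted and uncorrupted vertices: hypothesis (ii) says nothing about subsets of $G^*$ smaller than $\eta n$, and $G[I]$ is arbitrary. So sampling reveals the majority label of $S$ but not that $S$ is pure.

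Even for large $S$, every round may misclassify up to $\eta n$ vertices of $V^*$. With the $\poly\log(1/\gamma)/\gamma$ rounds you budget (which removing only an $\Omega(\gamma)$ fraction of the residual $I$ per round indeed forces), the accumulated error is of order $n$, not $\gamma n$. You cannot compensate by shrinking $\eta$: $\partial_{\eta' n}(G^*)$ for $\eta'<\eta$ is a minimum over a larger family and may be tiny.

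The missing ingredient is a rounding that guarantees \emph{constant-fraction} balance, $\min\{\card{S},\card{V\setminus S}\}\ge c_3\card{I_{\mathrm{cur}}}$. This is exactly what \cref{thm:unbalanced-vertex-expansion} delivers, at the price of the additive $n/m\le 1/\gamma$ in the approximation factor. Each round then removes a constant fraction of the residual corrupted set, so $k=O(\log(1/\gamma))$ rounds suffice and the total contamination is $k\cdot\eta n=O(\gamma n)$. This is why $\eta$ carries the $1/\log(1/\gamma)$ factor.

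Second, you never show that the dichotomy survives the peeling. In round $i$ it must be applied to $G^*_i=G^*\setminus(S_0\cup\dots\cup S_{i-1})$, and deleting uncorrupted vertices can lower $\partial_{\eta n}$; querying frontier vertices does not change this. What is needed is the deterministic inequality $\partial_t(G^*\setminus S)\ge\partial_t(G^*)-\card{\partial_G(S)}$ (\cref{lem:partial_diff_bound}, proved by comparing each $X\subseteq V^*\setminus S$ with $X\cup(S\cap V^*)$), summed over all rounds. You also need the monotonicity $\partial_{G_i}(I_i)\subseteq\partial_G(I)$, which keeps every round's target frontier at most $b$ and hence the per-round output frontier a bounded multiple of $b$. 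Hypothesis (ii) must pay for this cumulative loss (number of rounds times per-round frontier), not just for a single round as you state.

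Frontier queries do have a genuine role, which you should make explicit. When the complement $V\setminus S$ is the corrupted side, its frontier is the inner boundary of $S$, which $\card{\partial_G(S)}$ does not control. The paper therefore first moves the uncorrupted vertices of $\partial_G(S)$ into $S$, at a cost of $\card{\partial_G(S)}$ queries, before removing the complement.
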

\noindent 
To appreciate the result, consider the case $|I| = \tilde{\Omega}(n)$.
By letting $\gamma = \eps \cdot \frac{|I|}{n}$, and ignoring polylogarithmic factors in the parameters, \cref{thm:main} says that one can recover $I$ with a constant \emph{multiplicative} accuracy of $\eps$ by making roughly $b/\eps$ queries, as long as $b$ is significantly smaller than the smallest frontier of subsets of $G^*$ of size comparable to $I$.
Thus, the query budget of the algorithm is linear in the budget of the adversary.
\Cref{thm:main} continues to hold even for $|I|=O(n^{a})$, but, as $a$ decreases, the second constraint will degenerate to $b=0$, requiring the corrupted graph $G[I]$ to be disconnected from $G^*$.
Alternatively, one can tolerate a nontrivial budget $b$, accepting in exchange a larger misclassification rate.
Similarly, one can consider $\gamma=o(1)$, and \Cref{thm:main} gives nontrivial bounds (i.e., admits $b>0$) as long as $\gamma=\tilde{\Omega}\Paren{n^{-1/6}}$.
In summary, \cref{thm:main} gives a tradeoff between the accuracy $\gamma$, the query budget $q$, and the adversary's budget $b$.
More generally, the theorem shows that weak recovery remains possible even when $\gamma$ is small, at the cost of proportionally increasing the number of queries and considering adversaries with reduced budget. 

On a more technical level, note that the second constraint of \cref{thm:main} only imposes a bound on the vertex expansion of subsets of $V(G^*)$ with size at least $\eta n = O\Paren{\frac{\gamma \, n}{\log(1/\gamma)}}$, that is, a $O\Paren{1/{\log(1/\gamma)}}$ fraction of the sought error $\gamma n$.
On smaller subsets, the theorem imposes no restriction whatsoever.
That is, \textit{no} structural assumption is required on those sets.
This means that the ground-truth graph $G^*$ may be far from being a (small-set) vertex expander; in fact, it may even contain exponentially many vertex cuts that are significantly smaller than $\card{\partial_G(I)} \le b$.

\ignore{
To understand \cref{thm:main_intro} consider first the case $m\geq\Omega(n).$ We may also assume $\gamma\ll m/n$ so that the recovery problem is non-trivial. In this regime, the theorem states  that $\tilde{\Omega}(b/\gamma)$ queries are sufficient to efficiently achieve weak recovery, provided the adversarial budget is smaller than the outer boundary of all sets of size $\gamma^2 m$ in $G^*$ by a factor $O(\gamma^{O(1)}/\sqrt{\log n}).$ Note that naively one needs $O(n)$ queries for weak recovery. In contrast, our algorithm achieves this using a number of queries that does \textit{not} depend polynomially on the size of the adversarial set, but only on its budget and the weak-recovery goal.

We stress that property \textit{(ii)} only imposes a bound on the vertex expansion for sufficiently large sets; it places no restrictions on subsets of size strictly smaller than $\gamma^2 n$. That is, \textit{no} structural assumption is required on smaller sets.
This means that the ground-truth graph $G^*$ may be far from being a (small-set) vertex expander, and may even contain exponentially many vertex cuts that are significantly smaller than the minimum vertex separator corresponding to the partition $(I,V(G)\setminus I).$
More generally, the theorem shows that weak recovery remains possible even when $\gamma$ is vanishingly small, at the cost of proportionally increasing the number of queries and considering adversaries with reduced budget. 

Remarkably, for smaller values of $m$ the number of  queries does \textit{not} increase, although the assumption on the adversarial budget becomes stronger.
This behavior is reminiscent of the related problem of small-set vertex expansion, for which existing algorithms also exhibit an approximation ratio that deteriorates with  $m/n$  (see, e.g., \cite{louis2016approximation}). 
Finally, we note that \cref{thm:main} achieves weak recovery even for $\gamma$ polynomially small in $n.$
}

\paragraph{Auxiliary result: unbalanced vertex expansion} A by-product of our analysis is a randomized polynomial-time algorithm that finds a set with small vertex expansion and size within $\Brac{m,n-m}.$ We obtain this result by carefully modifying the algorithm of \cite{feige2005improved}. We believe this statement may be of independent interest.
The \emph{vertex expansion} of $S$ in $G$ is:
\begin{align*}
    \phi_G(S)\coloneq\frac{\card{\partial_G(S)}}{\card{S}\cdot \card{V\setminus S}}\,.
\end{align*}
For $0<m(n)< n=|V(G)|$, the \emph{$m$-large vertex expansion} of $G$ is:
\begin{align*}
    \phi_{m}(G)\coloneq\min_{\substack{\emptyset \ne S\subset V(G)\\m\leq \card{S}\leq n-m}}\phi_G(S)\,.
\end{align*}
\noindent Note that $\phi_m(G)$ is a generalization of the canonical notion of  vertex expansion, which is in fact $\phi_1(G).$
For $m\neq 1$ we obtain the following theorem.

\begin{theorem}[$(m/n)$-balanced vertex expansion]
\torestate{\label{thm:unbalanced-vertex-expansion}
     There exists a randomized polynomial-time algorithm that, given an $n$-vertex graph $G$ and $0<m\leq n/2$, returns $S\subset V$  satisfying 
     \begin{enumerate}[(i)]
         \item $\phi_G(S)\leq \phi_{m}(G) \cdot O\Paren{\sqrt{\log n} + \frac{n}{m}}\,,$
         \item $\min\Set{\card{S}\,,\card{V\setminus S}} = \Omega(m)\,.$
     \end{enumerate}
}
\end{theorem}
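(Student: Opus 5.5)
We will follow the SDP-plus-rounding template of \cite{feige2005improved} for vertex expansion and add cardinality constraints to the relaxation. We introduce unit-scale vectors $v_i$ for $i\in V$ and auxiliary vectors $u_i$ encoding whether $i$ lies on the frontier. The objective is to minimize $\sum_i \norm{u_i}^2$, which stands in for $\card{\partial_G(S)}$, subject to the following constraints:
\begin{enumerate}[(a)]
\item for every edge $ij$, the vector $u_i$ dominates $\norm{v_i-v_j}^2$ in the Feige--Hajiaghayi--Lee sense;
\item the $\ell_2^2$ triangle inequalities hold for all triples among $\{v_i\}$;
\item the spread normalization $\sum_{i<j}\norm{v_i-v_j}^2 = 1$ holds.
\end{enumerate}
For the integral solution given by the optimal $S^*$, we scale $v_i$ to $\pm c$ with $c^2\propto 1/(|S^*||V\setminus S^*|)$. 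Its objective is then $\phi_G(S^*)=\phi_m(G)$, so the SDP value is at most $\phi_m(G)$. To encode $m\le|S|\le n-m$, we add balance constraints. The natural one is that for every $i$, $\sum_j \norm{v_i-v_j}^2 \le \frac{n}{m}\cdot\frac{1}{n}\cdot O(1)$ relative to the total spread. Equivalently, no single vertex carries more than an $O(1/m)$ fraction of the spread, because in the integral solution every vertex is at distance $c^2$ from at least $m$ others and at most $n$ others.

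\textbf{Rounding.} Following Feige--Hajiaghayi--Lee, which builds on Arora--Rao--Vazirani, we find two sets $A,B$ that are $\Delta$-separated in the $\ell_2^2$ metric with $\Delta \gtrsim 1/\sqrt{\log n}$ times the typical distance. The ARV structure theorem needs the vectors to be well spread. The balance constraint ensures that at least a constant-fraction "core" of $\Omega(m)$-sized balls is well spread, but possibly only at scale $m/n$. This is where the two terms of the approximation come from. In the well-spread regime we obtain $|A|,|B|=\Omega(m)$, and the $\sqrt{\log n}$ factor follows as in FHL. When the mass concentrates, so that a heavy ball of radius $r$ around some vertex contains $\ge n-m$ points, we instead use a simple threshold cut $S_t=\{i: d(i,x_0)\le t\}$ for a random $t$. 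Averaging via the coarea formula bounds $\card{\partial(S_t)}$ against $\sum_i\norm{u_i}^2$ divided by a spread lower bound. By the balance constraint, the latter loses only an $O(n/m)$ factor. We then run the FHL vertex analog of the Leighton--Rao sweep from $A$ outwards, grow $S_t$ for $t\in[0,\Delta]$, and charge frontier vertices to the $u_i$ terms.

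\textbf{Size guarantee.} Since $A\subseteq S_t$ and $B\subseteq V\setminus S_t$, both sides have size $\Omega(m)$, which gives (ii). For (i), the sweep gives $\card{\partial(S_t)}\le \frac{1}{\Delta}\sum_i \norm{u_i}^2\cdot\mathrm{poly}$, and $|S_t||V\setminus S_t|\ge |A||B|$. Comparing with the SDP normalization gives a ratio $O(\sqrt{\log n})$ in the balanced case and $O(n/m)$ in the concentrated case. These add to the claimed $O(\sqrt{\log n}+n/m)$.

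\textbf{Main obstacle.} We expect the hardest step to be certifying that the cardinality-balance constraint is valid for the integral solution and also strong enough to invoke the ARV/FHL structure theorem with sets of size $\Omega(m)$ rather than $\Omega(n)$. Our first attempt would be to define heavy vertices as those whose $\ell_2^2$ ball of radius $1/(4\cdot\text{avg})$ contains more than $n-m/2$ points. If heavy vertices exist, we take the threshold-cut argument around one of them. Otherwise, we restrict to a subset of $\Omega(m)$ points that is well spread and apply ARV there, accepting the $n/m$ loss incurred when passing from spread on the subset to spread on $V$.
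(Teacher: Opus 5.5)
There is a genuine gap in your non-concentrated branch. You split on whether a small ball holds more than $n-m/2$ (or at least $n-m$) points. That branch therefore has to cover every configuration in which a small ball holds a constant fraction, or even all but $o(n)$, of the points. There you only assert that a well-spread subset $W$ of $\Omega(m)$ points exists at a useful scale.

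Under your normalization $\sum_{i<j}\norm{v_i-v_j}^2=1$, ARV on $W$ at scale $R$ gives separation $\Omega(R/\sqrt{\log n})$. The sweep then certifies only $\phi(S_t)\le O(\sqrt{\log n})\cdot\mathrm{SDP}/(R\,|W|\,n)$, so you need $R\,|W|\,n=\Omega(1)$. Nothing in your constraints forces this. For instance, let $L=\log_2(n/m)$ and take:
\begin{itemize}
\item a cluster of $n-n/L$ coincident points;
\item for $j=0,\dots,L-1$, a group of $2^j m/L$ coincident points at squared distance $2^{-j}/(mn)$ from the cluster, in mutually orthogonal directions.
\end{itemize}
This is an $\ell_2^2$ configuration with total spread $\Theta(1)$, and it obeys your per-vertex bound up to a constant. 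For $m<n/(2L)$, no ball of radius at most the average distance contains $n-m$ points. Yet one checks that every subset $W$ that is well spread at some scale $R$ has $R\,|W|\,n=O(1/L)$. So the ratio your argument can certify is no better than $\sqrt{\log n}\cdot\log(n/m)$, which is not $O(\sqrt{\log n}+n/m)$ (take $m=n/\sqrt{\log n}$). A sweep from the cluster captures a constant fraction of the spread here, so it is your case split, not the statement, that fails.

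Your closing paragraph, which accepts an $n/m$ loss when passing from spread on $W$ to spread on $V$, shows the same problem. That loss multiplies ARV's $\sqrt{\log n}$ and gives $O(\sqrt{\log n}\cdot n/m)$, not the additive bound you claim in your size-guarantee paragraph. Separately, in the concentrated branch you never show that the chosen threshold leaves $\Omega(m)$ vertices outside $S_t$. You only know that at most $m$ points lie outside the heavy ball, and they may lie arbitrarily close to it.

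The paper avoids this by splitting at the global average scale $r$ and at a constant fraction of $n$.
\begin{itemize}
\item Either some ball $B(i,2r)$ has internal spread at least $rn^2/16$. Rescaled by $r$, it meets the ARV hypotheses with constant spread, so \cref{lem:well-separated} yields sets of size $\Omega(n)$, not merely $\Omega(m)$, at distance $\Omega(r/\sqrt{\log n})$. The sweep then gives ratio $O(\sqrt{\log n})$ with both sides of size $\Omega(n)$.
\item Or, by \cref{lem:concentrated-vs-well-spread}, some ball $X$ of radius $r/4$ contains $n/4$ points, and one sweeps from $X$. The triangle inequality gives $\sum_j d(j,X)\ge rn/4$, hence $\E\card{V\setminus\hat X}=\Omega(\bar m)$. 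Paley--Zygmund gives $\card{V\setminus\hat X}=\Omega(\bar m)$ with probability $\Omega(\bar m/n)$. Combining this with Markov on the separator size is exactly where the additive $n/m$ term comes from.
\end{itemize}
This route needs no per-vertex constraint; the paper instead guesses $\bar m$ and fixes the total spread to $\bar m(n-\bar m)$. Your constraint is valid for integral solutions, but it is the case analysis that must change.
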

Observe that, for $n/m = O\Paren{\sqrt{\log n}}$, the approximation factor of \cref{thm:unbalanced-vertex-expansion} remains $\sqrt{\log n}$. Interestingly, for the related but possibly more challenging problem of small-set vertex expansion, known efficient algorithms \citep{louis2016approximation} only achieve an approximation of the order  $\tfrac{n}{m}\cdot\log\tfrac{n}{m}\cdot\log\log\tfrac{n}{m}\cdot  \sqrt{\log n}.$
\section{Related work} \label{sec:related-work}
The literature on active learning, spam, and abuse is vast. Here we focus on summarizing the contributions closest to our work.

\paragraph{Sybil attacks.}
A very related area of research is the design of algorithms to protect social networks under Sybil attack \citep{yu2008sybillimit, yu2008sybilguard, danezis2009sybilinfer, tran2011optimal, alvisi2013sok}.
Similarly to our setting, the attacker is able to add nodes to the networks.
However, there are two main differences with our approach.
First, the budget of the adversary is based on the number of edges added by the adversary between the malicious nodes and the good nodes, and not on the size of the boundary like in our setting. Second, the good side of the graph is often assumed to be an expander---a property that is not often true in practice \citep{leskovec2008statistical, mohaisen2010measuring}.
In comparison, our paper analyzes a harder and more realistic setting.

\paragraph{Foundations of active learning.}
The seminal paper by \citet{balcan2006agnostic} introduced the first active learning algorithm, $A^2$, with sample complexity bounds showing that active learning can achieve exponential label savings over passive learning even in the non-realizable case.
The thread of work started by that work has later evolved into a complex theory of disagreement-based active learning, nicely summarized in the monograph by \cite{hanneke2014theory}.
However, it seems not possible to achieve our query bounds via general-purpose active learning algorithms, even ignoring computation.
To see why, given $G^*$ define the concept class $\mathcal{C}=\{S \subseteq V : |\partial_{G^*}(S)| \le b\}$; note that the task of (weakly) recovering $I$ with oracle queries can be cast as an active learning problem over $\mathcal{C}$.
Now suppose $G^*$ is a $3$-regular expander, let $\gamma,\eta=\Omega(1)$, and suppose $\delta_G(I) = \frac{b}{2}$ for $b = \Omega(\partial_{\eta n}(G^*)/\sqrt{\log n}) = \Omega(n)$.
Note that the assumptions of~\Cref{thm:main} are satisfied.
However, for almost all subsets $U \subseteq V^*$ with $|U| \le b/6$, the set $S=I \cup U$ has boundary $\delta_G(S) \le b$, and the $\eta n$-large frontier of $G^* \setminus S$ is still $\Omega(\eta n) = \Omega(n)$.
As there are ${n^* \choose b/6} = 2^{\Omega(n)}$ such subsets $U$, we get $|\mathcal{C}|=2^{\Omega(n)}$.
Thus, the bound on the number of active learning queries is no better than the trivial $O(n)$.

\paragraph{Active learning on graphs.}
Previous theoretical work on active learning on graphs has mainly explored query strategies for node classification based on minimizing prediction mistakes under label smoothness assumptions, often relating performance to the graph's cut size or related measures---like the $\Psi(L)$ function introduced by \cite{guillory2009label}---which quantify the difficulty of separating unlabeled nodes from the labeled set $L$.
The works by \cite{cesa2010active,cls25} use $\Psi(L)$ to prove results related to the optimal placement of queries on trees and general graphs.
The performance of these algorithms scales linearly in the size of the cut between infected and non-infected nodes which, in our setting, can be superlinear in the adversary's budget.
The $S^2$ algorithm \citep{dasarathy2015s2} uses binary search to locate the nodes to query---see also \cite{afshani2007complexity} for earlier results along these lines.
Similarly to our analysis, the performance of $S^2$ is affected by the relative size of the infected subset.
However, the $S^2$ query budget scales linearly with the size $|\partial C|$ of the boundary of the cut set $C$, which, again, can be superlinear in the adversary's budget (in our asymmetric model, the adversary only pays the non-infected nodes in the boundary of the cut set).
More specifically, if the infected nodes have $m$ connected components each of size $\Omega(n)$, and these connected components are well clustered, then the query budget for exact recovery with probability $1-\delta$ is of order $m\log n + |\partial C| + \log\frac{1}{\delta}$.
The work by \cite{thiessen2021active} assumes that the sets of infected and non-infected nodes are geodesically convex in $G$. Under this assumption, the query budget for exact recovery is $h + \log d + 2t$, where $h$ is the hull number of $G$ (smallest number of vertices whose convex closure returns $V$), $d$ is the diameter, and $t$ is the treewidth. It is unclear how the convexity assumption can be related to the adversary's budget in our setting.
Active learning under other notions of convexity for graphs have been also investigated in \cite{bressan2024efficient,bressan2025efficient}. \citet{wu2024robust} introduce a query strategy based on spectral sparsification techniques achieving near-optimal query complexity while remaining robust to noisy node labels.
\citet{gu2012towards} base their query selection strategy on a data-dependent generalization bound derived via transductive Rademacher complexity.

\paragraph{Data poisoning on graphs.}
In the area of adversarial machine learning, data poisoning in graph-based learning is an established line of work---see the survey by \cite{chen2020survey} (also \cite{jin2021adversarial} for applications to graph neural networks).
The work by \cite{liu2019unified} focuses on graphs induced by datasets of feature vectors, where the adversary modifies the graph by flipping the labels or perturbing the features.
A more general model of active learning under poisoning attacks is studied in \cite{balcan2022robustly}, where they consider pool-based active learning in a traditional statistical learning setting.
\citet{yang2024gnncert} prove robustness of classification in a train/test learning model against a bounded number of edge additions or deletions.
Their work is for graph classification, but can also be applied to node classification.
This line of work is further explored in \cite{li2025deterministic}.
These results are based on graph partitioning techniques combined with a majority vote.
The robustness against graph perturbations is established by comparing the number of perturbations with the unbalance in the vote.



\paragraph{Graph partitioning.} 
A substantial body of work has considered graph partitioning questions in semi-random models  \citep{makarychev2012approximation,buhai2023algorithms,cohen2024near,blasiok2024semirandom}.
For community detection, a celebrated line of research considered models in which the input graph is first sampled from a known stochastic block model, then an adversarially corrupted copy is received in input by the algorithm.
The corruptions studied include monotone perturbations \citep{moitra2016robust,liu2022minimax}, edge perturbations \citep{banks2021local,ding2022robust,mohanty2024robust} and node perturbations \citep{ding2023reaching}.
As \cref{def:model} is significantly more general and captures all these models, the algorithms introduced in these prior works cannot recover the original graph in \cref{def:model}.
Indeed, our algorithm necessarily relies on access to a label oracle to achieve weak recovery.

From an algorithmic perspective, related to our results are the the state-of-the-art algorithms for the vertex expansion problem and its variants \citep{feige2005improved, trevisan2009max, raghavendra2010approximations,louis2013complexity, louis2016approximation, ghoshal2024new, kwok2022cheeger}. 
We discuss these results more in detail in \cref{sec:techniques}.

\paragraph{Organization}
The rest of the paper is organized as follows.
In \cref{sec:techniques} we outline the main ideas behind \cref{thm:main}.
The appendix contains deferred proofs and the necessary background.
In particular, \cref{sec:algorithm-all} contains the proof of \cref{thm:main} and \cref{sec:algorithm-vertex-expansion} contains the proof of \cref{thm:unbalanced-vertex-expansion}.
Finally, \cref{sec:background} contains the notation and technical results about sum-of-squares framework, which is the main algorithmic tool used to derive \cref{thm:unbalanced-vertex-expansion}.

\paragraph{Notation}
We write $G^*=(V^*,E^*)$ and $G=(V,E)$.
When the context is clear, we use $n$ to denote the number of vertices in the graph at hand.
We use the notation $\tilde{\Omega}(\cdot),\tilde{O}(\cdot)$ to hide polylogarithmic factors in $n$ plus the arguments of the notation.
For a graph $G$ and a set of vertices $S\subseteq V(G)$,
we let $G[S]$ be the induced subgraph of $G$ on $S\,.$
For two sets $S,S'\subseteq V(G)\,,$ we denote by $S\triangle S' = (S\setminus S') \cup (S'\setminus S)$ their symmetric difference. 
When $G$ is clear from the context we may omit it from the subscripts, for instance by writing $\partial$ in place of $\partial_G$.

\section{Techniques}\label{sec:techniques}
We present the main ideas behind \cref{thm:main}.
Let again $G^*=(V^*,E^*)$ be any graph, and let again $G=(V,E)$ be created from $G^*$ according to our adversarial model.
Let $m=|I|$ and $n=|V|$.
Note that, by querying the labels of $q=O\Paren{\frac{1}{\gamma} \log \frac{1}{\delta}}$ uniform random vertices of $V$, we can detect whether $m < \gamma n$ with probability $1-\delta$ and return the empty set, which satisfies \cref{thm:main} regardless of~$b$.
Let us then assume $m \ge \gamma n$.
For simplicity of exposition we assume $\gamma\geq \Omega(1)$ small enough, $b=\card{\partial_G(I)}$, and that $m$ is known (our algorithm needs only a constant-factor estimate, which takes again $O\Paren{\frac{1}{\gamma} \log \frac{1}{\delta}}$ queries).
Finally, again for simplicity of exposition, many of the inequalities and  constraints are presented below in a simplified form (for example, using $\gamma n$ in place of $\frac{c \gamma n}{ \log (1/\gamma)}$ for some universal $c>0$).
However, along the discussion we also provide the full formal statements, with the correct parameterization.

\subsection{A vertex expansion problem}
As a starting point, we devise sufficient conditions on $G$ so that we can recover a subset $S$ that is heavily correlated with the subset $I \subseteq V$ of corrupted vertices.
For the moment let us assume that $G^*$ and $G[I]$ are arbitrary; the only properties we have are therefore the bound $|\partial_G(I)| \le b$ on the boundary of $I$ and the bound $m=\card{I} \ge \gamma n$ on the size of $I$.
Now, suppose we can find efficiently \emph{some} subset $S \subseteq V$ that has these properties (even approximately):
\begin{enumerate} \setlength{\itemsep}{0pt}
    \item[\textit{(i)}] $|\partial_G(S)| = O(b)$\,,
    \item[\textit{(ii)}] $\card{S} = \Theta(\card{I})$\,.
\end{enumerate}
Then, we would like these two properties to imply that $S$ is a good proxy for $I$---that is, that $S$ is heavily correlated with $I$, in the sense that:
\begin{align}\label{eq:I_cap_S_good}
\frac{\card{I \cap S}}{\card{S}} \ge 1-O(\gamma) \,.
\end{align}
This means that almost all vertices of $S$ are in $I$, and moreover, together with \textit{(ii)} above, $S$ contains a constant fraction of points of $I$.
We could then add $S$ to our approximate set $\tilde I$ of corrupted vertices, delete $S$ from $G$, and repeat.

A natural way to rule out the existence of subsets $S$ satisfying \textit{(i)} and \textit{(ii)} but violating \cref{eq:I_cap_S_good} is to require $G^*$ to satisfy a \emph{boundary} condition.
Suppose indeed that every $S \subseteq V^*$ with $\min\Set{\card{S}, \card{V^*\setminus S}}\geq \gamma n$ has in $G^*$ an outer boundary much larger than the outer boundary of $I$ in $G$.
More precisely, suppose:
\begin{align}\label{eq:techniques-∂_G*_S}
    \card{\partial_{G^*}(S)} > \frac{\card{\partial_{G}(I)}}{\gamma^2} \qquad \forall S \subseteq V^*, \min\Set{\card{S}, \card{V^*\setminus S}}\geq \gamma n \,.
\end{align}
Then it is possible to show that \emph{every} set $S \subseteq V$ satisfying \textit{(i)} and \textit{(ii)} satisfies:
\begin{align}\label{eq:techniques-correlation}
    \min\Set{\frac{\card{I\cap S}}{\card{S}}\,, \frac{\card{I\cap \bar S}}{\card{\bar S}}}\geq 1- O(\gamma) \,.
\end{align}
where $\bar S = V \setminus S$.
Thus the idea is that, if every bipartition $(S,V^*\setminus S)$ of $G^*$ which is not too unbalanced has boundaries significantly larger than the one of $I$, as given by \cref{eq:techniques-∂_G*_S}, then every bipartition $(S,V\setminus S)$ of $G$ satisfying \textit{(i)} and \textit{(ii)} above is strongly correlated with $(I,V^*)$ in the sense of \cref{eq:techniques-correlation}.
We now use \cref{eq:techniques-∂_G*_S} to devise a constraint on the \emph{vertex expansion} of $G^*$.
In particular, suppose:
\begin{align}\label{eq:techniques-requirement-phi-it}
    \phi_G(I) < \gamma^3 \cdot \phi_{\gamma n}(G^*) \,.
\end{align}
Then, using the definition of $\phi$, we obtain:
\begin{align}
    \displaystyle{\min_{\substack{S\subset V^*\\\min\Set{\card{S}, \card{V^*\setminus S}}\geq \gamma n}}}\!\!\!\!\!\!\!\!\!
    \card{\partial_{G^*}(S)}
    &\ge \phi_{\gamma n}(G^*) \cdot \gamma n(n^* - \gamma n)
    \\&> \frac{1}{\gamma^3} \cdot \phi_G(I)
    \cdot \gamma n(n^* - \gamma n) 
    \\&=
     \frac{1}{\gamma^3} \cdot \frac{\card{\partial_G(I)}}{m(n-m)}\cdot \gamma n(n^* - \gamma n)
    \\&\ge \frac{\card{\partial_G(I)}}{\gamma^2} 
\end{align}
where in the last inequality we used $\gamma n \le m \le \frac{n}{2} \le n^*$ and $\gamma$ small enough to ensure $\frac{n(n^* - \gamma n)}{m(n-m)} \ge 1$.
\emmanuel{To be precise, it should suffice to assume $\gamma \le 1/4$. Then, with the previous chain of inequalities of this sentence, we have $\frac{n(n^* - \gamma n)}{m(n-m)} \ge \frac{2(n^* - \gamma n)}{n-m} \ge \frac{2(n/2 - \gamma n)}{n-m} \ge \frac{4(n/2 - \gamma n)}{n} \ge 1$. This is the only thing needed at the last step of the math display above, isn't it?}
What we obtained, however, is precisely \cref{eq:techniques-∂_G*_S}.
Thus \cref{eq:techniques-requirement-phi-it} is a sufficient condition to ensure, again, that every set satisfying \textit{(i)} and \textit{(ii)} above satisfies \cref{eq:techniques-correlation} too.
Let us then turn \cref{eq:techniques-requirement-phi-it} into a formal definition.
\begin{definition}[Poorly expanding set]\label{def:poorly-expanding-set}
Let $G$ be an $n$-vertex graph. Let $0<m\leq n/2$, $0<t<n$, and $0<\eps<1$. We say that a set $U\subseteq V$ is $(\eps, t, m)$-expanding in $G$ if $m\leq \card{U}\leq \frac{n}{2} $ and
\begin{align*}
    \phi_G(U) < \eps \cdot \phi_{t}(G[V\setminus U]) \,.
\end{align*}
\end{definition}
In words, the definition says that $U$ has vertex expansion significantly smaller than the one of $G[V \setminus U]$, when the latter is measured only over ``balanced'' subsets (i.e., with size at least $t$ and at most $|V\setminus U|-t$).
Applying this definition to the set $I$ of corrupted vertices, we obtain that if $I$ is $(\gamma^3,\gamma n, m)$-expanding in $G$ then \cref{eq:techniques-requirement-phi-it} holds.
We now see how, starting from the assumption that $I$ is $(\gamma^3,\gamma n, m)$-expanding in $G$, we can actually compute an $S \subseteq V$ that satisfies \cref{eq:techniques-correlation}.

\ignore{
where we take the minimum over the bipartition $(S,T)$ because $V^*$ itself may have small outer boundary in $V$, due to the freedom of the adversary in planting $I$.

\begin{align}\label{eq:techniques-requirement-phi-it}
    \gamma^3\cdot \phi_t(G^*)> \phi_G(I) = \frac{b}{m(n-m)}\,.
\end{align}
As $m\geq \gamma n$ and , this implies:
\begin{align*}
    \partial_{\gamma^2 m}(G^*) := \displaystyle{\min_{\substack{S\subset V^*\\\min\Set{\card{S}, \card{V^*\setminus S}}\geq \gamma^2 m}}}\card{\partial_{G^*}(S)} > \frac{b}{m(n-m)}\cdot \frac{\gamma^2m(n-\gamma^2m)}{\gamma^3}> \frac{b}{\gamma}\,.
\end{align*}
That is, every sufficiently balanced partition of $G^*$ must have a vertex separator significantly larger than $b.$ 
Notice that, because the size of vertex cuts is monotone with respect to edge additions (as opposed to edge expansion), this shows why the adversary may also introduce new edges in $G^*$ without violating \cref{eq:techniques-requirement-phi-it}. 

This vertex expansion property, which is our key structural requirement for the recovery of $I$, is justified when the graph $G^*$ attacked by the adversary is, for example, a digital community that is macroscopically expanding, but potentially locally disconnected and sparse.
}

\ignore{
A sufficient---and seemingly necessary---condition for the bound $|\partial_G(I)| \le b$ to be useful is that there are not many bipartitions $(S,T)$ of $V=V(G)$ with the following properties:
\begin{enumerate}
    \item[\textit{(i)}] the outer boundary of the smaller side, say $S$, satisfies $|\partial_G(S)|\le b$,
    \item[\textit{(ii)}] $\card{S} = \Theta(\card{I})$,
    \item[\textit{(iii)}] the bipartition $(S,T)$ is  uncorrelated with $(I, V^*)$, in the sense that their intersections look pseudorandom; more precisely, $\min\Set{\frac{\card{V^* \cap S}}{\card{S}}\,, \frac{\card{V^*\cap T}}{\card{T}}} = \Omega\Paren{\frac{\card{V^*}}{\card{V}}}\,.$
\end{enumerate}
}
\ignore{
Indeed, suppose every subset $V'$ of $V^*$ of size $ \gamma^2 m\leq\card{V'}\leq \card{V^*}-\gamma^2 m$ has outer boundary larger than $b/\gamma^2$ in $G^*$, for some $0<\gamma< 1.$
Then it is possible to show that any set $S$ satisfying \textit{(i)} and \textit{(ii)} must have
\begin{align}\label{eq:techniques-correlation}
    \min\Set{\frac{\card{I\cap S}}{\card{S}}\,, \frac{\card{I\cap T}}{\card{T}}}\geq 1- O(\gamma)
\end{align}
where we take the minimum over the bipartition $(S,T)$ because $V^*$ itself may have small outer boundary in $V$, due to the freedom of the adversary in planting $I$.
This implies the partition $(S,T)$ is heavily correlated with $(I, V^*)$, and can thus be used to recover all but a $\gamma$-fraction of the corrupted nodes.
To do this we need only to determine which one among $S$ and $T$ has large intersection with $I$, which can be achieved with a small number of queries to the oracle $\cO_{G,I}$.
}

\subsection{Computing approximations to poorly expanding sets} 
The above discussion suggests that we shall compute a set $S \subseteq V$  that satisfies \textit{(i)} and \textit{(ii)}, that is, $S$ has small outer boundary and relatively large size compared to $I$.
This problem is a variation of vertex expansion and thus, unfortunately, NP-hard in general. 
In particular, existing efficient algorithms for vertex expansion fall short of finding the desired solution in two ways.
First, they return sets whose vertex expansion can be a factor $O(\sqrt{\log n})$ \citep{feige2005improved} or $O(\sqrt{\phi\log d})$ \citep{louis2013complexity} larger than the minimum vertex expansion, where $d$ is the maximum degree of the graph. Second, they offer no guarantees on the size of returned sets.
The algorithm of \cite{louis2013complexity} relies on a connection between vertex expansion and a spectral profile parameter, called $\lambda_\infty$, studied in \cite{raghavendra2010approximations}. As this connection breaks down upon restricting the cardinality of admissible sets  as in \cref{eq:techniques-requirement-phi-it}, our approach more closely resembles that of \cite{feige2005improved}.
Note that variations of algorithms for small-set vertex expansion \citep{raghavendra2010approximations,louis2016approximation, ghoshal2024new} could also be explored in principle. \cite{ghoshal2024new} ensures the returned set is of size $\Theta(m)$ but requires time $O(n^{n/m})\,.$ \cite{louis2016approximation} does not guarantee a lower bound on the size of the returned set and ultimately has a dependency on the ratio $\tfrac{n}{m}$ that is $O(\sqrt{\log n})$ times worse compared to \cref{thm:unbalanced-vertex-expansion}. Because of these drawbacks, we do not directly build on these works.

To circumvent these computational hardness barriers, we strengthen the constraint of \cref{eq:techniques-requirement-phi-it} by increasing the gap between the vertex expansion of $I$ and the vertex expansion of $G^*$, to obtain a constraint in the form:
\begin{align}\label{eq:techniques-requirement-phi-comp}
    \phi_G(I) < \frac{\gamma^3}{C\sqrt{\log n}}\cdot \phi_{\gamma n}(G^*)  
\end{align}
where $C>0$ is a sufficiently large constant.
Note that on could view the difference between \cref{eq:techniques-requirement-phi-it} and \cref{eq:techniques-requirement-phi-comp} as an information-computation gap for the weak recovery of $I$. Indeed, under hardness assumptions such as the Small Set Expansion Hypothesis (SSEH), it seems plausible that our variant of vertex expansion is as hard as the original problem.
In terms of \cref{def:poorly-expanding-set}, we require $I$ to be $\Paren{\frac{\gamma^3}{C\sqrt{\log n}},\gamma n, m}$-expanding in $G$.
At this point, we are able to compute a set $S$ with outer boundary size $\card{\partial_G(S)}$ close to $b=\card{\partial_G(I)}$. 
To this end, we describe a sum-of-squares program based on the semidefinite relaxation for vertex expansion of \cite{feige2005improved}.\footnote{More accurately, we use a degree-$4$ sum-of-squares relaxation and analyze its performance in a way similar to the SDP relaxation considered in \cite{feige2005improved}.} 
The crucial adaptation we make is the introduction of a novel, delicate rounding algorithm which ensures that the output set $S$ satisfies $\min\Set{\card{S},\card{V\setminus S}}\geq\Omega(\card{I})$; this is where the gap of \cref{eq:techniques-requirement-phi-comp} is exploited.
Overall, we obtain a set $S$ which satisfies both \textit{(i)} and \textit{(ii)} above, and thus, as argued, \cref{eq:techniques-correlation} as well.
At this point we know that one among $S$ and $\bar S = V \setminus S$ strongly overlaps with $I$ in the sense of \cref{eq:I_cap_S_good}.
To find this out, we simply query the oracle for the labels of $\frac{\log(1/p)}{\gamma}$ uniformly random points from $S$ and from $V \setminus S$, and this correctly tells us which set to pick with probability $1-p$.
Formally, we prove the following result.
\begin{lemma}[Simplified version of \cref{thm:alg-correlation-malicious-vs-non-expanding}]\label[lemma]{thm:techniques:alg-correlation-malicious-vs-non-expanding}
There exists a randomized polynomial-time algorithm with the following guarantees.
Suppose $I \subseteq V$ is $\Paren{\frac{\eta^2}{\alpha},\eta m, m}$-expanding in $G=(V,E)$, where $m=|I|$ and $\alpha = O\Paren{\sqrt{\log n}+\frac{n}{m}}$ is large enough and $\eta \in (0,1)$ is small enough.
Then, given $G$ and $\hat m \in [m/2,m]$, access to a label oracle $\cO_{G,I}$ for $I$, and $p\in(0,1)$, the algorithm makes at most $\alpha\cdot\card{\partial_G(I)}+O\Paren{\log \tfrac{1}{p}}$ oracle queries and returns $S\subseteq V(G)$ that with probability $1-p$ satisfies:
\begin{enumerate}[(i)] \setlength{\itemsep}{0pt}
	\item $\card{\partial_G(S)} =O\Paren{\alpha \cdot \card{\partial_G(I)}}$\,,
	\item $\card{S} = \Theta\Paren{|I|}$\,,
	\item $\card{S\setminus I} = O\Paren{\eta \cdot |S|}\,.$
\end{enumerate}
\end{lemma}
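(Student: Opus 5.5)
Our plan is to combine \cref{thm:unbalanced-vertex-expansion} with the structural argument sketched before \cref{def:poorly-expanding-set}, followed by a cheap query step that decides which side of the cut to keep.

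\textbf{Step 1: compute a cut.} We run the algorithm of \cref{thm:unbalanced-vertex-expansion} on $G$ with parameter $\hat m$ (rescaled by a constant so that $I$ is feasible, since $\hat m\le m\le n/2$). This gives a set $S_0$ with $\min\{|S_0|,|V\setminus S_0|\}=\Omega(\hat m)=\Omega(m)$ and
\[
\phi_G(S_0)\le \alpha\,\phi_{\hat m}(G)\le \alpha\,\phi_G(I)=\frac{\alpha\,|\partial_G(I)|}{m(n-m)}.
\]
To obtain the size bound $|S|=\Theta(m)$ on the side we finally keep, we would rather use the variant of the rounding that also caps the smaller side at $O(m)$. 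Failing that, we repeat the procedure inside $G[S_0]$ while the chosen side is too large. Multiplying out the expansion bound, and using $|S_0||V\setminus S_0|\lesssim n\cdot\min\{|S_0|,|V\setminus S_0|\}$ together with $n-m\ge n/2$, gives $|\partial_G(S_0)|=O(\alpha|\partial_G(I)|)$ once the smaller side has size $O(m)$. This establishes (i). We expect the size control to be the main obstacle: \cref{thm:unbalanced-vertex-expansion} only gives a lower bound on the smaller side. We would first check whether $\phi_G(S_0)|S_0||\bar S_0|$ can be compared with $|\partial_G(I)|$ by charging the factor $n/m$ to $\alpha$, which already carries an $n/m$ term.

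\textbf{Step 2: correlation.} Next we argue that one of $S_0,\bar S_0$ is almost entirely contained in $I$. Write $A=S_0\cap V^\star$ and $B=\bar S_0\cap V^\star$, where $V^\star=V\setminus I$. Every vertex of $\partial_{G[V^\star]}(A)$ lies in $\partial_G(S_0)$, so $|\partial_{G[V^\star]}(A)|\le|\partial_G(S_0)|=O(\alpha|\partial_G(I)|)$. Suppose both $|A|$ and $|B|$ were at least $\eta m$. Then the expansion hypothesis $\phi_G(I)<\frac{\eta^2}{\alpha}\phi_{\eta m}(G[V^\star])$ gives
\[
|\partial_{G[V^\star]}(A)|\ \ge\ \phi_{\eta m}(G[V^\star])\,|A||B|\ >\ \frac{\alpha}{\eta^2}\cdot\frac{|\partial_G(I)|}{m(n-m)}\cdot \eta m\cdot\Omega(n)\ \ge\ \Omega\!\left(\frac{\alpha}{\eta}\right)|\partial_G(I)|.
\]
For $\eta$ small this contradicts the upper bound. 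Hence one side, say $A$, satisfies $|A|<\eta m=O(\eta|S_0|)$, so $|S_0\setminus I|=O(\eta|S_0|)$. Since $|S_0|=\Theta(m)$, this gives (ii) and (iii) for that side. (Here $|B|=\Omega(n)$ holds because $|V^\star|\ge n/2$.)

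\textbf{Step 3: pick the side.} We query $O(\log(1/p))$ uniform vertices from each of $S_0$ and $\bar S_0$. The good side has an $I$-fraction of at least $1-O(\eta)$. The other side has an $I$-fraction bounded away from $1$: it contains almost all of $V^\star$, whose size is at least $n/2$. By Chernoff bounds, majority voting picks the correct side with probability $1-p/2$. The query budget $\alpha|\partial_G(I)|$ in the statement covers any repetition or size-estimation queries needed in Step 1. Finally, a union bound over the Monte Carlo failure of the rounding and the sampling failure yields overall success probability $1-p$.
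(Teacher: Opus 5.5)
Your architecture matches the paper's: a cut from \cref{thm:unbalanced-vertex-expansion}, correlation with $I$ via the expansion of $G^\star$, and side selection by sampling. There are, however, two genuine gaps.

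\textbf{Step 2 rests on an unproved bound.} Its contradiction uses $|\partial_G(S_0)|=O(\alpha|\partial_G(I)|)$. Step 1 only delivers this once $\min\{|S_0|,|\bar S_0|\}=O(m)$, which is exactly the size control you leave open. None of your remedies closes it:
\begin{itemize}
\item there is no capped variant of the rounding;
\item the recursion inside $G[S_0]$ is unanalyzed;
\item charging $n/m$ to $\alpha$ only gives $|\partial_G(S_0)|\le\alpha\frac{n}{2m}|\partial_G(I)|$, which is not $O(\alpha|\partial_G(I)|)$ when $m=o(n)$. With this weaker bound your contradiction needs $\eta=O(m/n)$. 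That is not assumed, and it can fail in the regime $\eta\approx\gamma n/m$ used in \cref{lem:multi_iter_new}.
\end{itemize}
The missing idea is the paper's \cref{lem:correlation-malicious-vs-non-expanding}: compare expansions rather than boundary counts. From $\partial_{G^\star}(A)\subseteq\partial_G(S_0)$ you get $\phi_G(S_0)\ge\phi_{G^\star}(A)\cdot\frac{|A|\,|B|}{|S_0|\,|\bar S_0|}$. On the other hand, whenever $|A|,|B|\ge\eta m$, you have $\phi_G(S_0)\le\alpha\phi_G(I)<\eta^2\phi_{\eta m}(G^\star)\le\eta^2\phi_{G^\star}(A)$. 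Hence $\frac{|A|}{|S_0|}\cdot\frac{|B|}{|\bar S_0|}<\eta^2$. The sizes cancel, so no upper bound on them is needed. The case $\min\{|A|,|B|\}<\eta m$ is handled by $\min\{|S_0|,|\bar S_0|\}=\Omega(m)$.

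Size control then comes for free. The side whose non-$I$ fraction is $O(\eta)$ has at most $(1+O(\eta))m$ vertices, so $\min\{|S_0|,|\bar S_0|\}=O(m)$. Your multiplication now does give $|\partial_G(S_0)|=O(\alpha|\partial_G(I)|)$.

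\textbf{Returning $\bar S_0$ is not justified.} When the $I$-heavy side is $\bar S_0$ you simply return $\bar S_0$, but nothing bounds $|\partial_G(\bar S_0)|$. The vertex frontier is not symmetric. $\partial_G(\bar S_0)$ consists of the vertices of $S_0$ that have a neighbor outside $S_0$, and these can vastly outnumber $\partial_G(S_0)$ (for instance, many vertices of $S_0$ all adjacent to a single vertex outside it). \cref{thm:unbalanced-vertex-expansion} controls only $\partial_G(S_0)$.

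This case is what the $\alpha|\partial_G(I)|$ term in the query budget is for, not repetitions in Step 1. The paper labels all of $\partial_G(S_0)$, moves its uncorrupted vertices into $S_0$, and returns the complement of $S'=S_0\cup(\partial_G(S_0)\setminus I)$.

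A query-free repair also works. Let $T=\bar S_0\setminus\partial_G(S_0)$; then $\partial_G(T)\subseteq\partial_G(S_0)$.
\begin{itemize}
\item Any $X\subseteq V^\star$ with $|X|\approx|V^\star|/2$ has $\phi_{G^\star}(X)\le 1/|X|$, so $\phi_{\eta m}(G^\star)=O(1/n)$.
\item Hence $\phi_G(S_0)=O(\eta^2/n)$, and $|\partial_G(S_0)|\le\phi_G(S_0)\,n\,|\bar S_0|=O(\eta^2)|\bar S_0|$.
\end{itemize}
So $T$ retains a $1-O(\eta^2)$ fraction of $\bar S_0$ and inherits (ii) and (iii).
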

    To employ \cref{thm:techniques:alg-correlation-malicious-vs-non-expanding}, recall from above that we assume $I$ is $\Paren{\frac{\gamma^3}{C\sqrt{\log n}}, \gamma n, m}$-expanding in $G$.
    It is not hard to check that $I$ then satisfies the assumptions of the lemma with $\eta = \gamma \cdot \frac{n}{m}$.
    Hence, the lemma ensures we can compute a set $S$ that satisfies \textit{(i)-(iii)}.
    In particular, the outer boundary of $S$ satisfies\footnote{Note that the bound on $\card{\partial_G(S)}$ finally shows a multiplicative $O\bigl(\frac{1}{\gamma}\sqrt{\log n}\bigr)$ factor, while $\alpha = O\bigl(\sqrt{\log n} + 1/\gamma\bigr)$. While one may try to improve the guarantees we provide by being more careful, we opted for this slightly looser bound to simplify the overall presentation.}
    \begin{align}
        \card{\partial_G(S)} 
        =O\Paren{\alpha \cdot \card{\partial_G(I)}}
        =O\Paren{\frac{\sqrt{\log n}}{\gamma} \cdot \card{\partial_G(I)}}
    \end{align}
    \emmanuel{Why is $1/\gamma$ multiplicative and not additive w.r.t.\ $\sqrt{\log n}$?}
    where we used $m \ge \gamma n$.
    Moreover, $S$ satisfies
    \begin{align}
        \card{S\setminus I} = O\Paren{\eta \cdot |S|} = O\Paren{\gamma n}\,,
    \end{align} as $\eta =  \gamma\cdot\frac{n}{m}$, and since it must be $|S|=O(m)$, for otherwise we would not have $\card{S\setminus I} = O\Paren{\eta \cdot |S|}$.
We use these results to analyze the other iterations of the algorithm, in the next section.

\ignore{
To control the size of the returned set, we use a sum-of-squares program based on the semidefinite relaxation for vertex expansion of \cite{feige2005improved}, but crucially introduce a novel, delicate rounding algorithm which ensures the resulting partition $(S,T)$ will satisfy $\min\Set{\card{S},\card{T}}\geq\Omega(\card{I})$ at the cost 

}

\subsection{Refining the partition via recursion}
The algorithm introduced in \cref{thm:techniques:alg-correlation-malicious-vs-non-expanding} in the previous section (with more precise details in \cref{thm:alg-correlation-malicious-vs-non-expanding} from \cref{sec:algorithm-all}) allows us to find a set $S$ of size $\Theta(|I|)$ strongly correlated with $I$ in the sense of \cref{eq:I_cap_S_good}, that is, with $\card{S \setminus I} = O(\gamma n)$.
This is not yet sufficient to get weak recovery, which requires the stronger condition $\card{S \triangle I} \le \gamma n$; see \cref{def:weak-recovery}.
One obvious way to proceed, however, is to remove $S$ from $G$ and apply again the algorithm of \cref{thm:techniques:alg-correlation-malicious-vs-non-expanding} to the resulting graph $G \setminus S$.
The difficulty is that it is not immediately clear that  $G \setminus S$ still satisfies the assumptions of \cref{thm:techniques:alg-correlation-malicious-vs-non-expanding}.
For example, since we are removing vertices from $G^*$ (those in $S \setminus I$), the vertex expansion of small subsets of $G^*$ may increase, failing the assumption of poor expansion of $I \setminus S$ in $G[V \setminus S]$.
More in general, \cref{thm:techniques:alg-correlation-malicious-vs-non-expanding} needs several hypotheses (which is made clearer in \cref{thm:alg-correlation-malicious-vs-non-expanding}), and it is not straightforward to show that, if we iterate several times the procedure of finding a set $S$ and removing it from $G$, those hypotheses are still satisfied for values of the parameters that do not yield vacuous guarantees.
We will now show that this is indeed the case; that is, if the initial set of corrupted vertices $I$ is ``poorly expanding'' enough in $G$, then \cref{thm:techniques:alg-correlation-malicious-vs-non-expanding} can be applied iteratively several times, each time removing the newly found set $S$ from the remaining graph $G$ without an excessive degradation of the relevant parameters.

Then, suppose we have found $S$ through the algorithm of \cref{thm:techniques:alg-correlation-malicious-vs-non-expanding}.
The first observation we make is that, since $S$ is almost entirely contained in $I$, the subgraph $G^* \setminus S$ has roughly the same properties of $G^*$. More precisely, we show (see \cref{lem:partial_diff_bound} in \cref{sec:decay-expansion}) that, for all $t > 0$, 
\begin{align}
    \partial_t(G^* \setminus S) \ge \partial_t(G^*) - \card{\partial_G(S)} \,.
\end{align}
Together with the guarantees of \cref{thm:techniques:alg-correlation-malicious-vs-non-expanding} on $\card{\partial_G(S)}$, we obtain:
\begin{align}\label{eq:techniques:partial_t_decrease}
    \partial_t(G^* \setminus S) \ge \partial_t(G^*) - O\Paren{\frac{\sqrt{\log n}}{\gamma}} \cdot \card{\partial_G(I)} \,.
\end{align}
Thus, after deleting $S$, the outer boundaries of all relevant subsets of $G^*$ remain roughly as large as before.
Moreover, the outer boundary of $I \setminus S$ in $G \setminus S$ cannot be larger than the one of $I$ in $G$.
Now the same inequality can be propagated if we repeat the process multiple times.
Let then $G_0=G$ be the initial graph, and let $k=O\Paren{\log \frac{1}{\gamma}}$ large enough.
For each $i=0,1,\ldots,k-1$, let $G_i$ be the remaining graph after $i$ steps.
At each step $i$, by using the algorithm from \cref{thm:techniques:alg-correlation-malicious-vs-non-expanding} on $G_i$, we compute a subset $S_i \subseteq V(G_i)$ that is heavily correlated with $I_i = I \cap V(G_i)$ in the sense above, and we then remove $S_i$ from $G_i$, obtaining $G_{i+1}=G_i \setminus S_i$.
Let $G_k$ be the resulting graph after $k$ steps, and let $G_k^* = G_k \setminus I$ be the subgraph of $G_k$ consisting of non-corrupted vertices.
By iterating \cref{eq:techniques:partial_t_decrease}, we get:
\begin{align}
    \partial_t(G^*_k) \ge \partial_t(G^*) - O\Paren{\frac{\log(1/\gamma)\cdot \sqrt{\log n}}{\gamma} }  \cdot \card{\partial_G(I)} \,.
\end{align}
Thus, if
\begin{align}
\card{\partial_G(I)} \le  \partial_t(G^*) \cdot \frac{c\gamma}  {\log(1/\gamma)\cdot \sqrt{\log n}}
\end{align}
for a sufficiently small constant $c>0$,
then $\partial_t(G^*_k) = \Omega\Paren{\partial_t(G^*)}$, and therefore $G^*_k$ is essentially the same as $G^*$ as far as the expansion guarantees are concerned.
(To be precise, one must consider the vertex expansion $\phi_t(G^*_k)$, and this changes things by some additional $\gamma$ factor due to the normalization by sizes as small as $\gamma n$).
Formally, we prove the following result:
\begin{lemma}[Simplified version of \cref{lem:multi_iter_new}]\label[lemma]{lem:techniques:multi_iter_new}
Suppose $G$ and $I$ satisfy the hypotheses of \cref{thm:techniques:alg-correlation-malicious-vs-non-expanding} with parameters 
\[
    \epsilon = O\Paren{\frac{\gamma^3}{\alpha}},\quad \eta=O\Paren{\gamma\frac{n}{m}}, \quad \alpha = O\Paren{\frac{1}{\gamma} \cdot \Paren{\sqrt{\log n}+\frac{n}{m}}}
\]
for some $\gamma > 0$ sufficiently small, where $\eps$ essentially replaces $\eta^2/\alpha$ in the poorly expanding assumption on $I$ in $G$ from \cref{thm:techniques:alg-correlation-malicious-vs-non-expanding}.
Moreover, let $k = O\Paren{ \log\frac{1}{\gamma}}$, and suppose the first $k$ applications of \Cref{thm:techniques:alg-correlation-malicious-vs-non-expanding} are successful.
Then $G_k$ and $I_k$ satisfy again the hypotheses of \Cref{thm:techniques:alg-correlation-malicious-vs-non-expanding} with parameters
\[
    \epsilon' = \frac{\eps}{\eta} \cdot \frac{|V_k|}{|I_k|}, \quad \eta' = \eta \frac{|I_k|}{|V_k|}, \quad \alpha'=\alpha \,.
\]
\end{lemma}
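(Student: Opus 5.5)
We will verify, for the pair $(G_k, I_k)$, each hypothesis of \cref{thm:techniques:alg-correlation-malicious-vs-non-expanding} with the new parameters $\eps',\eta',\alpha'$. By induction over $i$, we carry through the $k$ rounds three invariants: a bound on the boundary of the remaining corrupted set, a lower bound on the large frontiers of the remaining clean graph, and the size constraint on $I_i$.

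\textbf{Step 1 (boundary of $I_i$ does not grow).} Since $G_{i+1}=G_i\setminus S_i$ is an induced subgraph, every vertex of $\partial_{G_{i+1}}(I_{i+1})$ lies in $V_{i+1}\setminus I$. It is adjacent to some vertex of $I_{i+1}\subseteq I_i$, so it already belonged to $\partial_{G_i}(I_i)$. Hence $\card{\partial_{G_k}(I_k)}\le\card{\partial_G(I)}=:b$.

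\textbf{Step 2 (frontiers of the clean part survive).} Write $G^*_i=G_i\setminus I$. By \cref{lem:partial_diff_bound}, each deletion costs at most $\card{\partial_{G_i}(S_i)}$. By guarantee (i) of the lemma this is $O(\alpha b)$, using Step 1. Summing over $k=O(\log\frac1\gamma)$ rounds gives
\begin{align*}
\partial_t(G^*_k)\ge\partial_t(G^*)-O\Paren{k\alpha b}.
\end{align*}
The hypothesis $\phi_G(I)<\eps\,\phi_{\eta m}(G^*)$ with $\eps=O(\gamma^3/\alpha)$ translates, exactly as in the chain of inequalities before \cref{def:poorly-expanding-set}, into $\partial_{\eta m}(G^*)\gtrsim b/(\eps\gamma)$. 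This dominates $k\alpha b$ once the constants are small enough. Therefore $\partial_{\eta m}(G^*_k)\ge\frac12\partial_{\eta m}(G^*)$.

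\textbf{Step 3 (renormalize into expansion).} Let $n_k=\card{V_k}$ and $m_k=\card{I_k}$. Guarantee (iii) gives $\card{S_i\setminus I}=O(\eta\card{S_i})$, and $\card{S_i}=\Theta(m_i)$. Hence $V^*$ loses at most $O(k\eta m)=O(\gamma n\log\frac1\gamma)$ vertices, so $n_k-m_k=\Theta(n)$. Take the threshold $t'=\eta' m_k=\eta m_k^2/n_k$. If needed, rescale so that $t'\ge\eta m$ up to constants; otherwise, use that $\partial_t$ is monotone in $t$ and choose which frontier to invoke. We then bound
\begin{align*}
\phi_{G_k}(I_k)\le\frac{b}{m_k(n_k-m_k)},\qquad
\phi_{t'}(G^*_k)\ge\frac{\partial_{t'}(G^*_k)}{(n_k-m_k)^2/4}.
\end{align*}
Combining these with Step 2 and $\phi_G(I)=b/(m(n-m))$ gives $\phi_{G_k}(I_k)<\eps'\phi_{t'}(G^*_k)$. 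The lost factors are $m/m_k$ and $n/n_k$, which is exactly the content of the factor $\frac{\eps}{\eta}\frac{n_k}{m_k}$ in $\eps'$. Finally, $\alpha'=\alpha$ remains admissible because $\alpha=O(\frac1\gamma(\sqrt{\log n}+n/m))$ already dominates $\sqrt{\log n_k}+n_k/m_k$ whenever $m_k\ge\gamma n$. If instead $m_k<\gamma n$, the recursion simply stops with the desired error.

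\textbf{Main obstacle.} The delicate part is Step 3's bookkeeping. The thresholds $\eta m$ and $\eta' m_k$ differ, and $m_k$ may have shrunk geometrically. One must ensure that the lower bound on $\partial_{\eta m}$ transfers to the new threshold $t'$, which may be smaller than $\eta m$. I would try first to use the slack $\gamma^3$ versus $\gamma$ in $\eps$ to absorb a factor $(m/m_k)^2\le\gamma^{-2}$. The check that also needs care is that the ratio of the poorly-expanding condition to $\eta'^2/\alpha'$ stays consistent with the form required by the lemma.
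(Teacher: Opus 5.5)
Steps 1 and 2 follow the paper's route: monotonicity of the frontier, then \cref{lem:partial_diff_bound} summed over the rounds and converted to expansion. There is one slip there. The conversion gives $\partial_{\eta m}(G^*)>\frac{\eta}{2\eps}\card{\partial_G(I)}$, not $b/(\eps\gamma)$. Since $\eta\ge c\gamma$ and $\eps=O(\gamma^3/\alpha)$, this is still of order $\alpha b/\gamma^2$, which beats $k\alpha b$. So your conclusion $\partial_{\eta m}(G^*_k)\ge\frac12\partial_{\eta m}(G^*)$ survives.

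The genuine gap is in Step 3, exactly at the obstacle you flag, and the remedy you propose cannot work. With $\eta'=\eta m_k/n_k$ the threshold is $t'=\eta m_k^2/n_k\le\eta m_k/2<\eta m$ \emph{always}. Since $\partial_t$ is non-decreasing in $t$, a lower bound on $\partial_{\eta m}(G^*_k)$ gives nothing for $\partial_{t'}(G^*_k)$. No slack in $\eps$ can fix this, because the hypothesis says nothing about sets smaller than $\eta m$. Concretely, suppose $G^*$ has a connected component $C$ with $|C|=\eta m/2$ that is never deleted; guarantees (i)--(iii) do not force its deletion. Then $\phi_{t'}(G^*_k)=0$, so $\phi_{G_k}(I_k)<\eps'\phi_{t'}(G^*_k)$ is impossible, even though the hypothesis on $\phi_{\eta m}(G^*)$ can hold (e.g.\ if the rest of $G^*$ is a clique).

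The missing idea, which is how \cref{lem:multi_iter_new} is actually proved, is to keep the \emph{absolute} threshold fixed. Take $\eta_k=\eta m/m_k$, so that $\eta_k m_k=\eta m$; the detailed lemma uses this, not the $\eta|I_k|/|V_k|$ written in the simplified statement. Your Steps 1--2 and your renormalization then show that $I_k$ is $(\eps_k,\eta_k m_k,m_k)$-expanding with $\eps_k\le 2\frac{n_k}{m_k}\frac{\eps}{\eta}$.

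What remains, and what your proposal does not do, are two checks that become nontrivial precisely because $\eta_k$ now \emph{grows}:
\begin{enumerate}
\item Hypothesis (3), $\eta_k\le c_2/2$. This uses $m_k\ge\gamma n$ together with the $\gamma/\log(1/\gamma)$ scaling of $\eta$.
\item Hypothesis (5), $\eps_k\le\eta_k^2/\alpha$. This rearranges to $2\eps\alpha\le\eta^3m^2/(m_kn_k)$, and it is where the cubic power in $\eps=\Theta\bigl((\gamma/\log(1/\gamma))^3/\alpha\bigr)$ is consumed.
\end{enumerate}
Your ``rescale so that $t'\ge\eta m$'' was the right instinct, but it has to become the definition of the new parameter. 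The $\gamma^3$ slack belongs in check (5), not in transferring frontier bounds.
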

The proof of the more detailed version of this key lemma (\cref{lem:multi_iter_new}) is in \cref{sec:decay-expansion}.

In short, \cref{lem:techniques:multi_iter_new} says that the parameters in the hypotheses of \cref{thm:techniques:alg-correlation-malicious-vs-non-expanding} degrade gracefully as we find and remove the sets $S_1, S_2, \dots$ iteratively as previously described.
Since after $k = O\Paren{\log \frac{1}{\gamma}}$ successful applications of the lemma we will have removed a fraction $(1-\gamma)$ of $I$, this yields the desired weak recovery guarantee.
The only remaining tweak is that, using the claims above, we will obtain a total error of  $\gamma n \cdot k = \gamma n \cdot O\Paren{\log \frac{1}{\gamma}}$, rather than just $\gamma n$.
To fix this, we decrease the parameter $\eta$ in \cref{lem:techniques:multi_iter_new} by a further $\log \frac{1}{\gamma}$.
This essentially concludes the description of our weak recovery algorithm.
Technically speaking, we obtain an algorithm for finding \emph{poorly expanding sets} up to error $\gamma n$. 
Formally, we prove (note that, compared to \cref{lem:techniques:multi_iter_new}, now $\alpha = O\Paren{\frac{1}{\gamma^2}\sqrt{\log n}}$ is absorbed in $\epsilon$):
\begin{theorem}[Simplified version of \cref{thm:main-technical}]\label{thm:SIMPLE:main-technical}
Suppose $I$ is $\Paren{\epsilon,\eta m, m}$-expanding in $G$, where $m=|I|$ and 
\[\epsilon = O\Paren{\frac{\gamma^5}{ \log^3(1/\gamma) \,\sqrt{\log n}}},\quad \eta = O\Paren{\frac{\gamma }{\log(1/\gamma)}\frac{n}{m}}\,.\]
Then, given $G$, $\gamma$, a label oracle $\cO_{G,I}$ for $I$, and $p = p(n) \in (0,1)$, in polynomial time and by making $\tilde{O}\Paren{\frac{1}{\gamma}}\cdot O\Paren{\ln\frac{1}{p}+\sqrt{\log n}\cdot\big|\partial_{G}(I)\big|}$ oracle queries one can compute a set $\tilde I$ such that $\card{\tilde I \triangle I} = O(\gamma n)$ with probability $1-p$.
\end{theorem}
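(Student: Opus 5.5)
The algorithm is an iterative peeling procedure. It repeatedly applies \cref{thm:techniques:alg-correlation-malicious-vs-non-expanding} to the current graph, and \cref{lem:techniques:multi_iter_new} shows that its hypotheses survive each removal.

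\textbf{Preprocessing.} With $O\Paren{\frac1\gamma\log\frac1p}$ uniform queries the algorithm either certifies $m<\gamma n$ and returns $\emptyset$, or obtains an estimate $\hat m\in[m/2,m]$. We may then assume $m\ge\gamma n$, so $n/m\le 1/\gamma$ and $\alpha=O\Paren{\frac1\gamma(\sqrt{\log n}+\frac nm)}=O\Paren{\frac{1}{\gamma^2}\sqrt{\log n}}$. Given the stated $\epsilon$, the hypothesis of \cref{lem:techniques:multi_iter_new} holds with $\eta$ shrunk by an extra $\log(1/\gamma)$ factor.

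\textbf{Peeling loop.} Set $G_0=G$ and $I_0=I$. For $i=0,\dots,k-1$, with $k=O(\log\frac1\gamma)$:
\begin{enumerate}[(a)]
\item re-estimate $|I_i|$ by sampling, and stop if $|I_i|<\gamma n$;
\item call the algorithm of \cref{thm:techniques:alg-correlation-malicious-vs-non-expanding} with failure probability $p/(2k)$, obtaining $S_i$;
\item set $G_{i+1}=G_i\setminus S_i$ and $I_{i+1}=I_i\setminus S_i$.
\end{enumerate}
Output $\tilde I=\bigcup_i S_i$. A union bound makes all $k$ calls succeed with probability $1-p$.

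\textbf{Validity of the hypotheses.} Conditioned on success, \cref{lem:techniques:multi_iter_new} applies at every step. It is proved from $\partial_t(G^*\setminus S)\ge\partial_t(G^*)-\card{\partial_G(S)}$ together with $\card{\partial_{G_{i+1}}(I_{i+1})}\le\card{\partial_G(I)}$, which holds because removing vertices cannot enlarge $I$'s frontier among the remaining vertices. So $(G_i,I_i)$ keeps the required poor-expansion hypothesis with the stated rescaled parameters. The point to verify is that the new values $\epsilon'=\frac{\epsilon}{\eta}\frac{|V_i|}{|I_i|}$ and $\eta'=\eta\frac{|I_i|}{|V_i|}$ still meet the lemma's requirements while $|I_i|\ge\gamma n$. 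This holds because $\eta'|I_i|$ stays of order $\gamma n/\log(1/\gamma)$.

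\textbf{Error.} Each $S_i$ has $|S_i|=\Theta(|I_i|)$ and $|S_i\setminus I|=O(\eta_i|S_i|)=O\Paren{\frac{\gamma n}{\log(1/\gamma)}}$. So $|I_{i+1}|\le(1-c)|I_i|$ for a constant $c>0$. After $k=O(\log\frac1\gamma)$ rounds, $|I_k|\le\gamma n$, or the loop stopped early with the same guarantee. Hence
\begin{align*}
\card{\tilde I\triangle I}\le |I_k|+\sum_i|S_i\setminus I|\le \gamma n+k\cdot O\Paren{\frac{\gamma n}{\log(1/\gamma)}}=O(\gamma n)\,.
\end{align*}

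\textbf{Queries.} Each round uses $\alpha\card{\partial_{G_i}(I_i)}+O(\log\frac kp)+O(\frac1\gamma\log\frac kp)$ queries. Summing over $k$ rounds with $\alpha=O(\sqrt{\log n}/\gamma^2)$ gives roughly $\tilde O(\frac1\gamma)\Paren{\log\frac1p+\sqrt{\log n}\,\card{\partial_G(I)}}$. Matching the claimed $1/\gamma$ rather than $1/\gamma^2$ dependence needs the sharper per-round bound from the full version of \cref{thm:alg-correlation-malicious-vs-non-expanding}: the queries there scale with $\sqrt{\log n}+n/m$, not with $\alpha$ as a product. I expect this to be the main obstacle, together with checking that the parameter degradation in \cref{lem:techniques:multi_iter_new} stays controlled across all $k$ rounds.
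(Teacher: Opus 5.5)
Your proposal follows the paper's proof. It estimates $\card{I}$, peels for $k=O(\log\frac1\gamma)$ rounds with \cref{thm:alg-correlation-malicious-vs-non-expanding} under a union bound, keeps the hypotheses alive via \cref{lem:multi_iter_new}, and bounds the error by $\card{I_k}+\sum_i\card{S_i\setminus I}$, where $\card{S_i\setminus I}\le\frac{\eta_i}{c_2}\card{S_i}=O(\eta_0 m_0)=O\Paren{\frac{\gamma n}{\log(1/\gamma)}}$. Two small points:
\begin{itemize}
\item The invariant that makes this work is the full lemma's $\eta_i=\eta_0 m_0/m_i$, so that $\eta_i m_i$ is fixed. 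It is not the simplified $\eta'=\eta\card{I_i}/\card{V_i}$ you quote; for that formula $\eta'\card{I_i}$ is not in general of order $\gamma n/\log(1/\gamma)$.
\item The degradation over rounds is not a second obstacle. \cref{lem:multi_iter_new} already covers up to $2\log^2(1/\gamma)/\gamma^2\gg k$ successful rounds, as long as $\card{I_i}\ge\gamma n$.
\end{itemize}

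The step you leave open is getting $\tilde{O}(1/\gamma)$ rather than $\tilde{O}(1/\gamma^2)$ in front of $\sqrt{\log n}\,\card{\partial_G(I)}$. As written this is a genuine gap, and your suggested fix misreads the lemma: the full \cref{thm:alg-correlation-malicious-vs-non-expanding} still charges $\alpha\cdot\card{\partial_G(I)}+O(\log\frac1p)$ queries.

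What closes the gap is that the algorithm of that lemma does not take $\alpha$ as input, and its hypotheses are monotone in $\alpha$. Hypothesis (2) only requires $\alpha\ge\frac{1}{c_1}\Paren{\sqrt{\log n_i}+\frac{n_i}{m_i}}$. Hypothesis (5), $\eps\le\eta^2/\alpha$, only gets easier as $\alpha$ decreases.

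So once \cref{lem:multi_iter_new} certifies the hypotheses at round $i$ with the large $\alpha_0=\Theta\Paren{\frac1\gamma\Paren{\sqrt{\log n}+\frac nm}}$, they also hold with $\alpha_i=\frac{1}{c_1}\Paren{\sqrt{\log n_i}+\frac{n_i}{m_i}}\le\alpha_0$. This is $O\Paren{\sqrt{\log n}+\frac1\gamma}$ because $m_i\ge\gamma n$. Invoking the guarantee with $\alpha_i$ gives two things:
\begin{itemize}
\item the query bound $\alpha_i\card{\partial_{G_i}(I_i)}+O(\log\frac kp)$;
\item the frontier bound $\card{\partial_{G_i}(S_i)}\le\frac{\alpha_i}{c_5}\card{\partial_{G_i}(I_i)}\le\frac{\alpha_0}{c_5}\card{\partial_{G_i}(I_i)}$, which is all the degradation argument behind \cref{lem:multi_iter_new} uses.
\end{itemize}
Summing over the $k$ rounds with $\card{\partial_{G_i}(I_i)}\le\card{\partial_G(I)}$ yields $O\Paren{\log\frac1\gamma\Paren{\sqrt{\log n}+\frac1\gamma}}\card{\partial_G(I)}\le\tilde{O}\Paren{\frac1\gamma}\sqrt{\log n}\,\card{\partial_G(I)}$. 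This is exactly the substitution of $\sqrt{\log n}+\frac nm$ for $\alpha$ that the paper makes, without comment, in its own query count.
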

The proof of the full result (\cref{thm:main-technical}) can be found in \cref{app:main-technical}.

It is not hard to see that \cref{thm:SIMPLE:main-technical} yields our main result, \cref{thm:main}.
The only difference is a factor of $\gamma$ between the expression of the budget $b$ and the parameter $\eps$ above; this is due to the fact that \cref{thm:SIMPLE:main-technical} talks about vertex expansion, and \cref{thm:main} about boundary size, and in the conversion between the two, one loses the ratio $\frac{m}{n} \ge \gamma$.

\ignore{
Using $t=\gamma^2 m$, and using the fact that $\phi_G(X)\cdot n^2$ and $\partial_G(X)$ are within $\poly\Paren{\frac{1}{\gamma}}$ factors of each other for $|X| \ge \poly(\gamma)\cdot n$, one can check that the graph $G_k = G \setminus (S_0 \cup \ldots S_{k-1})$ obtained after $k$ iterations satisfies \tom{@Marco, the RHS here should depend on $k$.}:
\begin{align}
    \phi_{\gamma^2 m}(G_k)
    &= \Omega\Paren{\frac{\sqrt{\log n}}{\gamma^3}\cdot \phi_{G}(I)}\;.
\end{align}
\ignore{
A crucial observation to make here is that, because $S$ is almost entirely contained in $I$, the subgraph $G_{1}^*$ of $G^*$ induced by $V(G^*)\setminus S$ should have roughly the same properties of $G^*$ in the sense that for any $T\subseteq V(G^*)$
\begin{align}
    \card{\partial_{G_{1}^*}(T)} &\geq \card{\partial_{G^*}(T)} -\gamma\cdot\card{\partial_G(S)}\notag\\
    &\geq \card{\partial_{G^*}(T)} -O(\phi_m(G)\cdot m(n-m)\cdot \sqrt{\log n})\notag\\
    &\ge \card{\partial_{G^*}(T)} - O(\phi_G(I)\cdot m(n-m)\cdot\sqrt{\log n})\notag\\
    &\geq \card{\partial_{G^*}(T)} - O(b\sqrt{\log n})\;.\notag
\end{align}
And so for any set $T$ of size at least $\gamma^2 m,$ \cref{eq:techniques-requirement-phi-comp} implies
\begin{align}
    \phi_{G^*_{1}}(T)
    &> \frac{C\sqrt{\log n}}{\gamma^3}\cdot \phi_{G}(I)-O\Paren{\frac{b\sqrt{\log n}}{\gamma^2 m \cdot n}}\notag\\
    &= \frac{C'\sqrt{\log n}}{\gamma^3}\cdot \phi_{G}(I)\;.
\end{align}

}
Moreover, unless there are fewer than $\gamma n$ corrupted vertices still in the graph, then the size of $I$ is still at least $\gamma^2 m$.
Put differently, unless we already achieved weak recovery (by leaving less than $\gamma n$ corrupted vertices in $G_k$), then \cref{eq:techniques-requirement-phi-comp} is still approximately satisfied, up to a small degradation in its parameters, by the ``residual'' corrupted vertex set $I_k$ and the residual graph $G_k$.
Finally, since at every iteration the set $S_i$ contains a constant fraction of the residual corrupted vertices $I_i$, after $k=O(\log(2/\gamma))$ iterations we will be done.
A fine tuning of the parameters in the various inequalities yields, essentially, \cref{thm:main}.
}

\ignore{
Furthermore, for $G_{1}\coloneq G\setminus S$ and $I_{1}\coloneq I\setminus S$, because $\partial_{G_{1}}(I_{1})\subseteq \partial_{G}(I),$ unless we have achieved weak recovery it also holds that 
\begin{align}\label{eq:techniques-size-of-cut-after-iteration}
    \card{\partial_{G_{1}}(I_{1})}\leq b \qquad \text{and} \qquad \card{I_{1}}\geq \gamma n\;.
\end{align}
Put differently, if we did not remove almost all of $I$ yet, then \cref{eq:techniques-requirement-phi-comp} is still approximately satisfied---up to a small degradation in its parameters---by the set $I_{1}$ and the graph $G_{1}.$ 
Because at every iteration the returned set will have cardinality comparable to that of the set of remaining corrupted nodes, after $i=O(\log(2/\gamma))$ iterations there cannot be more than $\gamma\cdot\card{I}/2$ malicious nodes in the remaining graph $G^*_{i}$. 
}

\acks{
NCB and EE acknowledge the financial support from the EU Horizon CL4-2022-HUMAN-02 research and innovation action under grant agreement 101120237, project ELIAS (European Lighthouse of AI for Sustainability).
}

\bibliography{scholar.bib}

@inproceedings{raghavendra2010approximations,
  title={Approximations for the isoperimetric and spectral profile of graphs and related parameters},
  author={Raghavendra, Prasad and Steurer, David and Tetali, Prasad},
  booktitle={Proceedings of the forty-second ACM symposium on Theory of computing},
  pages={631--640},
  year={2010}
}

@inproceedings{buhai2023algorithms,
  title={Algorithms approaching the threshold for semi-random planted clique},
  author={Buhai, Rares-Darius and Kothari, Pravesh K and Steurer, David},
  booktitle={Proceedings of the 55th Annual ACM Symposium on Theory of Computing},
  pages={1918--1926},
  year={2023}
}

@inproceedings{mohanty2024robust,
  title={Robust recovery for stochastic block models, simplified and generalized},
  author={Mohanty, Sidhanth and Raghavendra, Prasad and Wu, David X},
  booktitle={Proceedings of the 56th Annual ACM Symposium on Theory of Computing},
  pages={367--374},
  year={2024}
}

@inproceedings{ghoshal2024new,
  title={New Approximation Bounds for Small-Set Vertex Expansion},
  author={Ghoshal, Suprovat and Louis, Anand},
  booktitle={Proceedings of the 2024 Annual ACM-SIAM Symposium on Discrete Algorithms (SODA)},
  pages={2363--2375},
  year={2024},
  organization={SIAM}
}

@inproceedings{kwok2022cheeger,
  title={Cheeger inequalities for vertex expansion and reweighted eigenvalues},
  author={Kwok, Tsz Chiu and Lau, Lap Chi and Tung, Kam Chuen},
  booktitle={2022 IEEE 63rd Annual Symposium on Foundations of Computer Science (FOCS)},
  pages={366--377},
  year={2022},
  organization={IEEE}
}

@article{louis2016approximation,
  title={Approximation Algorithms for Hypergraph Small-Set Expansion and Small-Set Vertex Expansion},
  author={Louis, Anand and Makarychev, Yury},
  journal={Theory of Computing},
  volume={12},
  number={1},
  pages={1--25},
  year={2016},
  publisher={Theory of Computing Exchange}
}

@book{grotschel2012geometric,
  title={Geometric algorithms and combinatorial optimization},
  author={Gr{\"o}tschel, Martin and Lov{\'a}sz, L{\'a}szl{\'o} and Schrijver, Alexander},
  volume={2},
  year={2012},
  publisher={Springer Science \& Business Media}
}

@inproceedings{cohen2024near,
  title={A Near-Linear Time Approximation Algorithm for Beyond-Worst-Case Graph Clustering},
  author={Cohen-Addad, Vincent and D’Orsi, Tommaso and Mousavifar, Aida},
  booktitle={International Conference on Machine Learning},
  pages={9208--9229},
  year={2024},
  organization={PMLR}
}

@inproceedings{ding2023reaching,
  title={Reaching kesten-stigum threshold in the stochastic block model under node corruptions},
  author={Ding, Jingqiu and d’Orsi, Tommaso and Hua, Yiding and Steurer, David},
  booktitle={The Thirty Sixth Annual Conference on Learning Theory},
  pages={4044--4071},
  year={2023},
  organization={PMLR}
}

@inproceedings{moitra2016robust,
	title={How robust are reconstruction thresholds for community detection?},
	author={Moitra, Ankur and Perry, William and Wein, Alexander S},
	booktitle={Proceedings of the forty-eighth annual ACM symposium on Theory of Computing},
	pages={828--841},
	year={2016}
}

@inproceedings{banks2021local,
	title={Local statistics, semidefinite programming, and community detection},
	author={Banks, Jess and Mohanty, Sidhanth and Raghavendra, Prasad},
	booktitle={Proceedings of the 2021 ACM-SIAM Symposium on Discrete Algorithms (SODA)},
	pages={1298--1316},
	year={2021},
	organization={SIAM}
}

@inproceedings{ding2022robust,
	title={Robust recovery for stochastic block models},
	author={Ding, Jingqiu and d'Orsi, Tommaso and Nasser, Rajai and Steurer, David},
	booktitle={2021 IEEE 62nd Annual Symposium on Foundations of Computer Science (FOCS)},
	pages={387--394},
	year={2022},
	organization={IEEE}
}

@inproceedings{liu2022minimax,
	title={Minimax rates for robust community detection},
	author={Liu, Allen and Moitra, Ankur},
	booktitle={2022 IEEE 63rd Annual Symposium on Foundations of Computer Science (FOCS)},
	pages={823--831},
	year={2022},
	organization={IEEE}
}

@article{grotschel1981ellipsoid,
	title={The ellipsoid method and its consequences in combinatorial optimization},
	author={Gr{\"o}tschel, Martin and Lov{\'a}sz, L{\'a}szl{\'o} and Schrijver, Alexander},
	journal={Combinatorica},
	volume={1},
	pages={169--197},
	year={1981},
	publisher={Springer}
}

@article{fleming2019semialgebraic,
	title={Semialgebraic proofs and efficient algorithm design},
	author={Fleming, Noah and Kothari, Pravesh and Pitassi, Toniann and others},
	journal={Foundations and Trends{\textregistered} in Theoretical Computer Science},
	volume={14},
	number={1-2},
	pages={1--221},
	year={2019},
	publisher={Now Publishers, Inc.}
}

@article{arora2009expander,
	title={Expander flows, geometric embeddings and graph partitioning},
	author={Arora, Sanjeev and Rao, Satish and Vazirani, Umesh},
	journal={Journal of the ACM (JACM)},
	volume={56},
	number={2},
	pages={1--37},
	year={2009},
	publisher={ACM New York, NY, USA}
}

@inproceedings{makarychev2012approximation,
	title={Approximation algorithms for semi-random partitioning problems},
	author={Makarychev, Konstantin and Makarychev, Yury and Vijayaraghavan, Aravindan},
	booktitle={Proceedings of the forty-fourth annual ACM symposium on Theory of computing},
	pages={367--384},
	year={2012}
}

@inproceedings{feige2005improved,
  title={Improved approximation algorithms for minimum-weight vertex separators},
  author={Feige, Uriel and Hajiaghayi, MohammadTaghi and Lee, James R},
  booktitle={Proceedings of the thirty-seventh annual ACM symposium on Theory of computing},
  pages={563--572},
  year={2005}
}

@inproceedings{blasiok2024semirandom,
  title={Semirandom Planted Clique and the Restricted Isometry Property},
  author={B{\l}asiok, Jaros{\l}aw and Buhai, Rares-Darius and Kothari, Pravesh K and Steurer, David},
  booktitle={2024 IEEE 65th Annual Symposium on Foundations of Computer Science (FOCS)},
  pages={959--969},
  year={2024},
  organization={IEEE}
}

@inproceedings{trevisan2009max,
  title={Max cut and the smallest eigenvalue},
  author={Trevisan, Luca},
  booktitle={Proceedings of the forty-first annual ACM symposium on Theory of computing},
  pages={263--272},
  year={2009}
}

@inproceedings{louis2013complexity,
  title={The complexity of approximating vertex expansion},
  author={Louis, Anand and Raghavendra, Prasad and Vempala, Santosh},
  booktitle={2013 IEEE 54th annual symposium on foundations of computer science},
  pages={360--369},
  year={2013},
  organization={IEEE}
}

@article{guillory2009label,
  title={Label selection on graphs},
  author={Guillory, Andrew and Bilmes, Jeff A},
  journal={Advances in Neural Information Processing Systems},
  volume={22},
  year={2009}
}

@inproceedings{cesa2010active,
  title={Active learning on trees and graphs},
  author={Cesa Bianchi, Nicolò and Gentile, Claudio and Vitale, Fabio and Zappella, Giovanni},
  booktitle={Proceedings of the 23rd Conference on Learning Theory},
  pages={320--332},
  year={2010},
  organization={Omnipress}
}

@inproceedings{dasarathy2015s2,
  title={${S}^2$: An efficient graph based active learning algorithm with application to nonparametric classification},
  author={Dasarathy, Gautam and Nowak, Robert and Zhu, Xiaojin},
  booktitle={Conference on Learning Theory},
  pages={503--522},
  year={2015},
  organization={PMLR}
}

@inproceedings{afshani2007complexity,
  title={On the complexity of finding an unknown cut via vertex queries},
  author={Afshani, Peyman and Chiniforooshan, Ehsan and Dorrigiv, Reza and Farzan, Arash and Mirzazadeh, Mehdi and Simjour, Narges and Zarrabi-Zadeh, Hamid},
  booktitle={Computing and Combinatorics: 13th Annual International Conference},
  pages={459--469},
  year={2007},
  organization={Springer}
}

@article{thiessen2021active,
  title={Active learning of convex halfspaces on graphs},
  author={Thiessen, Maximilian and G{\"a}rtner, Thomas},
  journal={Advances in Neural Information Processing Systems},
  volume={34},
  pages={23413--23425},
  year={2021}
}

@inproceedings{bressan2024efficient,
  title={Efficient algorithms for learning monophonic halfspaces in graphs},
  author={Bressan, Marco and Esposito, Emmanuel and Thiessen, Maximilian},
  booktitle={The Thirty Seventh Annual Conference on Learning Theory},
  pages={669--696},
  year={2024},
  organization={PMLR}
}

@article{bressan2025efficient,
  title={Efficient Algorithms for Learning and Compressing Monophonic Halfspaces in Graphs},
  author={Bressan, Marco and Chepoi, Victor and Esposito, Emmanuel and Thiessen, Maximilian},
  journal={arXiv preprint arXiv:2506.23186},
  year={2025}
}

@inproceedings{cls25,
  title={Algorithms and Hardness for Active Learning on Graphs},
  author={Cohen-Addad, Vincent and Lattanzi, Silvio and Meierhans, Simon},
  booktitle={ICML},
  year={2025}
}

@inproceedings{danezis2009sybilinfer,
  title={Sybilinfer: Detecting sybil nodes using social networks.},
  author={Danezis, George and Mittal, Prateek},
  booktitle={Ndss},
  volume={9},
  pages={1--15},
  year={2009},
  organization={San Diego, CA}
}

@inproceedings{tran2011optimal,
  title={Optimal sybil-resilient node admission control},
  author={Tran, Nguyen and Li, Jinyang and Subramanian, Lakshminarayanan and Chow, Sherman SM},
  booktitle={2011 Proceedings IEEE INFOCOM},
  pages={3218--3226},
  year={2011},
  organization={IEEE}
}

@inproceedings{yu2008sybillimit,
  title={Sybillimit: A near-optimal social network defense against sybil attacks},
  author={Yu, Haifeng and Gibbons, Phillip B and Kaminsky, Michael and Xiao, Feng},
  booktitle={2008 IEEE Symposium on Security and Privacy (sp 2008)},
  pages={3--17},
  year={2008},
  organization={IEEE}
}

@article{yu2008sybilguard,
  title={Sybilguard: defending against sybil attacks via social networks},
  author={Yu, Haifeng and Kaminsky, Michael and Gibbons, Phillip B and Flaxman, Abraham D},
  journal={IEEE/ACM Transactions on networking},
  volume={16},
  number={3},
  pages={576--589},
  year={2008},
  publisher={IEEE}
}

@inproceedings{alvisi2013sok,
  title={Sok: The evolution of sybil defense via social networks},
  author={Alvisi, Lorenzo and Clement, Allen and Epasto, Alessandro and Lattanzi, Silvio and Panconesi, Alessandro},
  booktitle={2013 ieee symposium on security and privacy},
  pages={382--396},
  year={2013},
  organization={IEEE}
}

@inproceedings{leskovec2008statistical,
  title={Statistical properties of community structure in large social and information networks},
  author={Leskovec, Jure and Lang, Kevin J and Dasgupta, Anirban and Mahoney, Michael W},
  booktitle={Proceedings of the 17th international conference on World Wide Web},
  pages={695--704},
  year={2008}
}

@inproceedings{mohaisen2010measuring,
  title={Measuring the mixing time of social graphs},
  author={Mohaisen, Abedelaziz and Yun, Aaram and Kim, Yongdae},
  booktitle={Proceedings of the 10th ACM SIGCOMM conference on Internet measurement},
  pages={383--389},
  year={2010}
}

@article{menger1927,
    author = {Menger, Karl},
    journal = {Fundamenta Mathematicae},
    number = {1},
    pages = {96-115},
    title = {Zur allgemeinen Kurventheorie},
    volume = {10},
    year = {1927},
}

@inproceedings{balcan2022robustly,
  title={Robustly-reliable learners under poisoning attacks},
  author={Balcan, Maria-Florina and Blum, Avrim and Hanneke, Steve and Sharma, Dravyansh},
  booktitle={Conference on Learning Theory},
  pages={4498--4534},
  year={2022},
  organization={PMLR}
}

@article{chen2020survey,
  title={A survey of adversarial learning on graphs},
  author={Chen, Liang and Li, Jintang and Peng, Jiaying and Xie, Tao and Cao, Zengxu and Xu, Kun and He, Xiangnan and Zheng, Zibin and Wu, Bingzhe},
  journal={arXiv preprint arXiv:2003.05730},
  year={2020}
}

@article{jin2021adversarial,
  title={Adversarial attacks and defenses on graphs},
  author={Jin, Wei and Li, Yaxing and Xu, Han and Wang, Yiqi and Ji, Shuiwang and Aggarwal, Charu and Tang, Jiliang},
  journal={ACM SIGKDD Explorations Newsletter},
  volume={22},
  number={2},
  pages={19--34},
  year={2021},
  publisher={ACM New York, NY, USA}
}

@article{yang2014uncovering,
  title={Uncovering social network sybils in the wild},
  author={Yang, Zhi and Wilson, Christo and Wang, Xiao and Gao, Tingting and Zhao, Ben Y and Dai, Yafei},
  journal={ACM Transactions on Knowledge Discovery from Data (TKDD)},
  volume={8},
  number={1},
  pages={1--29},
  year={2014},
  publisher={ACM New York, NY, USA}
}

@inproceedings{zhang2004making,
  title={Making eigenvector-based reputation systems robust to collusion},
  author={Zhang, Hui and Goel, Ashish and Govindan, Ramesh and Mason, Kahn and Van Roy, Benjamin},
  booktitle={International Workshop on Algorithms and Models for the Web-Graph},
  pages={92--104},
  year={2004},
  organization={Springer}
}

@article{becchetti2008link,
  title={Link analysis for web spam detection},
  author={Becchetti, Luca and Castillo, Carlos and Donato, Debora and Baeza-Yates, Ricardo and Leonardi, Stefano},
  journal={ACM Transactions on the Web (TWEB)},
  volume={2},
  number={1},
  pages={1--42},
  year={2008},
  publisher={ACM New York, NY, USA}
}

@inproceedings{balcan2006agnostic,
  title={Agnostic active learning},
  author={Balcan, Maria-Florina and Beygelzimer, Alina and Langford, John},
  booktitle={Proceedings of the 23rd international conference on Machine learning},
  pages={65--72},
  year={2006}
}

@article{liu2019unified,
  title={A unified framework for data poisoning attack to graph-based semi-supervised learning},
  author={Liu, Xuanqing and Si, Si and Zhu, Xiaojin and Li, Yang and Hsieh, Cho-Jui},
  journal={arXiv preprint arXiv:1910.14147},
  year={2019}
}

@article{hanneke2014theory,
  title={Theory of disagreement-based active learning},
  author={Hanneke, Steve},
  journal={Foundations and Trends in Machine Learning},
  volume={7},
  number={2-3},
  pages={131--309},
  year={2014},
  publisher={Emerald Publishers Limited}
}

@article{wu2024robust,
  title={Robust offline active learning on graphs},
  author={Wu, Yuanchen and Yuan, Yubai},
  journal={Advances in Neural Information Processing Systems},
  volume={37},
  pages={58955--58983},
  year={2024}
}

@inproceedings{gu2012towards,
  title={Towards active learning on graphs: An error bound minimization approach},
  author={Gu, Quanquan and Han, Jiawei},
  booktitle={2012 IEEE 12th International Conference on Data Mining},
  pages={882--887},
  year={2012},
  organization={IEEE}
}

@inproceedings{yang2024gnncert,
  title={Gnncert: Deterministic certification of graph neural networks against adversarial perturbations},
  author={Yang, Han and Wang, Binghui and Jia, Jinyuan and others},
  booktitle={The Twelfth International Conference on Learning Representations},
  year={2024}
}

@inproceedings{li2025deterministic,
  title={Deterministic Certification of Graph Neural Networks against Graph Poisoning Attacks with Arbitrary Perturbations},
  author={Li, Jiate and Pang, Meng and Dong, Yun and Wang, Binghui},
  booktitle={Proceedings of the Computer Vision and Pattern Recognition Conference},
  pages={5020--5029},
  year={2025}
}

\clearpage

\crefalias{section}{appendix} 
\crefalias{subsection}{appendix} 
\appendix

\section{Algorithm for weak recovery}\label{sec:algorithm-all}
This section presents the main technical result behind \cref{thm:main}.
This result says that weak recovery is possible for a wide class of graphs and adversaries---all those where the subsets of corrupted vertices created by the adversary satisfy a small-expansion hypothesis akin to the one described in the overview above.
We recall the definition of poorly expanding set (\Cref{def:poorly-expanding-set}).
Our main result is that, whenever the set $I$ is poorly expanding, it is possible to achieve weak recovery.

\begin{theorem}[Weak recovery for poorly expanding sets]\label{thm:main-technical}
There is a universal $c_2 \in (0,1)$ such that what follows holds.
Suppose $I$ is $\Paren{\epsilon,\eta m, m}$-expanding in $G$, where $m=|I|$ and 
\[\epsilon = \frac{c_2^3\,\gamma^5}{8 \log^3(1/\gamma) \,\sqrt{\log n}} \quad \text{and}\quad \eta = \frac{c_2\,\gamma }{\log(1/\gamma)}\frac{n}{m}\]
for some $\gamma \in (0,1)$ small enough.
Given $G$, $\gamma$, a label oracle for $I$, and $p = p(n) \in (0,1)$, in polynomial time and by making $\tilde{O}\Paren{\frac{1}{\gamma}}\cdot O\Paren{\ln\frac{1}{p}+\sqrt{\log n}\cdot\big|\partial_{G}(I)\big|}$ queries one can compute a set $\tilde I$ such that $\card{\tilde I \triangle I} = O(\gamma n)$ with probability $1-p$.
\end{theorem}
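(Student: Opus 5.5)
The plan is to run the one-shot procedure of \cref{thm:alg-correlation-malicious-vs-non-expanding} (the full version of \cref{thm:techniques:alg-correlation-malicious-vs-non-expanding}) iteratively for $k = O(\log\frac{1}{\gamma})$ rounds. We use \cref{lem:multi_iter_new} (the full version of \cref{lem:techniques:multi_iter_new}) to certify that the hypotheses survive each round, and a stopping test based on random label queries. Concretely, set $G_0 = G$ and $I_0 = I$. At round $i$ we first sample $O(\frac{1}{\gamma}\log\frac{k}{p})$ uniform vertices of $V(G_i)$ and query their labels. This lets us either certify $|I_i| \le \gamma n$, in which case we stop and output $\tilde I = S_0\cup\dots\cup S_{i-1}$, or obtain a constant-factor estimate $\hat m_i \in [m_i/2, m_i]$ of $m_i = |I_i|$. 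If we do not stop, we invoke \cref{thm:alg-correlation-malicious-vs-non-expanding} on $(G_i,\hat m_i)$ with failure probability $p/(2k)$. This returns $S_i$ with three properties: $|\partial_{G_i}(S_i)| = O(\alpha|\partial_{G_i}(I_i)|)$, $|S_i| = \Theta(m_i)$, and $|S_i\setminus I_i| = O(\eta_i |S_i|)$. We then set $G_{i+1} = G_i\setminus S_i$ and $I_{i+1} = I_i\setminus S_i$. A union bound over the $k$ rounds and the sampling steps gives overall success probability $1-p$.

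Next we show the hypotheses are maintained. Since $\partial_{G_{i}}(I_{i})\subseteq \partial_G(I)$, the boundary of the residual corrupted set never grows. By \cref{lem:partial_diff_bound}, each removal decreases $\partial_t$ of the clean part by at most $|\partial_{G_i}(S_i)| = O(\alpha|\partial_G(I)|)$. Over $k$ rounds the loss is $O(k\alpha|\partial_G(I)|)$. The choice of $\epsilon$, which carries the factors $\gamma^5/(\log^3(1/\gamma)\sqrt{\log n})$, is what makes this loss at most a constant fraction of $\partial_{\eta m}(G^*)$. This is exactly the content of \cref{lem:multi_iter_new}: conditioned on success so far, $I_i$ is $(\epsilon_i,\eta_i m_i, m_i)$-expanding in $G_i$, with
\[
\eta_i = \eta\,\frac{m_i}{|V_i|}\cdot\frac{n}{m}\approx \frac{c_2\gamma}{\log(1/\gamma)}\frac{|V_i|}{m_i},
\]
and with $\epsilon_i$ still below the threshold $\eta_i^2/\alpha$ required by \cref{thm:alg-correlation-malicious-vs-non-expanding}. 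The key check is that as long as $m_i \ge \gamma n$, we have $\eta_i \le c_2/\log(1/\gamma) < 1$ small enough, and $\alpha = O(\sqrt{\log n}+n/m_i) = O(\sqrt{\log n}/\gamma)$. So the parameters stay within the lemma's range. This is where I expect the main obstacle: the bookkeeping that the relative normalisations $|V_i|/m_i$ in $\epsilon_i,\eta_i$ do not blow up as $|V_i|$ and $m_i$ shrink. The $\gamma^5$ and $\log^3(1/\gamma)$ in $\epsilon$ are there to absorb this, and I would verify it by tracking $\phi_t$ through the identity $\phi = |\partial|/(|S||V\setminus S|)$, using $|S|\ge \eta m \ge \gamma n/\log(1/\gamma)$ for the relevant sets.

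For the error accounting, note that $|S_i| = \Theta(m_i)$ and $|S_i\setminus I_i| = O(\eta_i m_i)$ together give $|I_i\cap S_i| \ge c\, m_i$. Hence $m_{i+1}\le(1-c)m_i$, and after $k = O(\log\frac{1}{\gamma})$ rounds we have $m_k \le \gamma n$, so the stopping test fires. The false positives satisfy
\[
\sum_i |S_i\setminus I| = \sum_i O(\eta_i m_i) = \sum_i O\Paren{\frac{c_2\gamma}{\log(1/\gamma)}|V_i|} = O(k\cdot c_2\gamma n/\log(1/\gamma)) = O(\gamma n).
\]
The false negatives are $I_k$, which has size at most $\gamma n$. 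Together these give $|\tilde I\triangle I| = O(\gamma n)$. For queries, each round costs $\alpha|\partial_G(I)| + O(\log\frac{k}{p}) + O(\frac{1}{\gamma}\log\frac{k}{p})$. Summing over $k$ rounds with $\alpha = O(\sqrt{\log n}/\gamma)$ gives $\tilde O(\frac{1}{\gamma})\cdot O(\ln\frac1p+\sqrt{\log n}\,|\partial_G(I)|)$. Each step is polynomial time, so the whole procedure runs in polynomial time.
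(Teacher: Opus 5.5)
Your proposal is correct and follows essentially the same route as the paper: iterate \cref{thm:alg-correlation-malicious-vs-non-expanding} for $k=O(\log\frac{1}{\gamma})$ rounds behind a sampling-based stopping test, re-certify its hypotheses each round via \cref{lem:multi_iter_new}, and bound false positives by $\sum_i O(\eta_i m_i)$ and false negatives by $|I_k|$. One bookkeeping fix: the paper sets $\eta_i=\eta\,m/m_i$, so the size threshold $\eta_i m_i=\eta m$ stays fixed, rather than your $\eta\frac{m_i}{|V_i|}\frac{n}{m}$, whose threshold shrinks; this matters because \cref{lem:multi_iter_new} only certifies expansion at the original threshold, as nothing is assumed about smaller subsets of $G^*$. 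With the paper's parametrization each round contributes $O(\eta m)=O(\gamma n/\log\frac{1}{\gamma})$ false positives, and your error and query accounting go through unchanged.
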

The constant $c_2$ is the one from \cref{thm:alg-correlation-malicious-vs-non-expanding} below.
The rest of the section proves \cref{thm:main-technical} and  \cref{thm:main}.

\subsection{Proof of \cref{thm:main}}
    Let $G$ be the input graph, and let $n = |V(G)|$, $m=|I|$, and $n^*=n-m$. First, note that by querying the labels of $O\Paren{\frac{1}{\gamma}\ln \frac{1}{p}}$ uniform random vertices of $G$ we can detect with probability $1-p$ if $m < \gamma n$, in which case returning $\emptyset$ satisfies the guarantees.
    Suppose then $m \ge \gamma n$, and recall that $m \le \frac{n}{2}$ by the assumption of the model.
    Using assumption (ii) of \cref{thm:main} and the relations between $\phi_G$ and $\card{\partial_G}$ and between $\partial_{\eta m}$ and $\phi_{\eta m}$, we get:
    \begin{align}
        \phi_G(I) &= \frac{\card{\partial_G(I)}}{m(n-m)}
        \\&\le
        \frac{2b}{\gamma n^2}
        \\&\le 
        O\Paren{\frac{\gamma^5 \cdot  \partial_{\eta m}(G^*) }{\log^3(1/\gamma) \cdot \sqrt{\log n} \cdot n^2}}
        \\&\le O\Paren{\frac{\gamma^5 \cdot  \phi_{\eta m}(G^*) }{\log^3(1/\gamma) \cdot \sqrt{\log n}}}
    \end{align}
    By making the constants small enough we obtain $\phi_G(I) \le \epsilon \cdot \phi_{\eta m}(G^*)$, where $\eps$ is as in \cref{thm:main-technical}.
    This proves that $I$ is $\Paren{\eps,\eta m, m}$-expanding in $G$.  Hence we may apply \cref{thm:main-technical}, which yields the result.

\subsection{Proof of \cref{thm:main-technical}} \label{app:main-technical}
Even though our statements are self-contained, we fix the following notation throughout the section to ease readability: let $G^*$ be a graph, let $G$ be the input graph and let $I\coloneq V(G)\setminus V(G^*)$ be the set of malicious nodes.
As outlined in \cref{sec:techniques}, our algorithm  consists of multiple iterations. In each iteration we seek to remove a large fraction of the corrupted nodes $I$ while removing only a few vertices from $V(G^*).$ The procedure then ends when sufficiently many vertices in $I$ have been removed. 
At the beginning of round $i = 0,1,\dots$, let $G_i$ be the remaining subgraph with vertices $V_i = V(G_i)$, let $I_i = V_i \cap I$ be the subset of $I$ in $G_i,$ that is the set  malicious vertices in $G_i,$ and let $G_i^*$ be the subgraph of $G_i$  induced by $V_i \setminus I.$
Notice that $G_0=G\,, G^*_0=G^*\,, I_0=I.$

\subsubsection{Single iteration}
At a high level, at each iteration $i=0,1,\dots,$ the algorithm performs the following steps.
\begin{enumerate} \setlength{\itemsep}{0pt}
    \item Query the oracle $\cO_G$ to approximately determine the size of $I_{i}.$ If the estimate is smaller than $\gamma\cdot\card{I}$ then \textit{return} $G_{i}.$
    \item Otherwise, find a set $S_{i}$ of size $\Omega(\card{{I}_{i}})$ with small vertex expansion and large intersection with~$I_{i}$.
    \item Set $G_{i+1}=G_{i}[V_{i}\setminus S_{i}]$ and repeat. 
\end{enumerate}

Step~1 is achieved performing sufficiently many random queries to the oracle $\cO_G$ and analyzing their outcome via standard concentration bounds.
The crucial step (step~2) of finding the set $S_i$ at each iteration is captured by the following lemma. It states that, given a graph $G$ containing an $(\eps,\eta m,m)$-expanding set $I$, there exists a polynomial-time algorithm that, for a good choice of the parameters, finds a large set $S$ almost entirely contained in $I$ making only few queries to the oracle. 
Because at each iteration $i$ the remaining subset of corrupted nodes $I_i$ will be poorly expanding, this will allow us to remove a large fraction of corrupted nodes.

\begin{lemma}\label[lemma]{thm:alg-correlation-malicious-vs-non-expanding}
There exist universal constants $c_1,c_2,c_3,c_4,c_5 \in (0,1)$ and a randomized polynomial-time algorithm with the following guarantees.
Let $G=(V,E)$ be any $n$-vertex graph and $I \subseteq V$.
Suppose $m,\alpha,\eta$ satisfy:
\begin{enumerate}[(1)]\itemsep0pt
    \item $0 \le m \le n$
    \item $\alpha \ge \frac{1}{c_1}\Paren{\sqrt{\log n}+\frac{n}{m}}$
    \item $0<\eta \le \frac{c_2}{2}$
    \item $m\leq \card{I}\leq \tfrac{m}{c_4}$
    \item $I$ is $\Paren{\eps,\eta m, m}$-expanding in $G$ for some $\eps \le \frac{\eta^2}{\alpha}$
\end{enumerate}
Given $G$, access to a label oracle for $I$, $m$, and $p\in(0,1)$, the algorithm makes at most $\alpha\cdot\card{\partial_G(I)}+O\Paren{\log \tfrac{1}{p}}$ oracle queries and returns $S\subseteq V(G)$ that with probability $1-p$ satisfies:
\begin{enumerate}[(i)] \setlength{\itemsep}{0pt}
	\item $\card{\partial_G(S)} \le \frac{\alpha}{c_5} \cdot \card{\partial_G(I)}$
	\item $\min\Set{\card{S}, \card{V(G)\setminus S}} \geq c_3\cdot |I|$
	\item $\card{S\setminus I} \le \frac{\eta}{c_2} \cdot |S|\,.$
\end{enumerate}
\end{lemma}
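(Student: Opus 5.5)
Our plan is to combine \cref{thm:unbalanced-vertex-expansion} (applied with parameter $c_4 m$-ish sizes) with a structural "correlation" argument and a cheap labeling step. First, run the algorithm of \cref{thm:unbalanced-vertex-expansion} on $G$ with size parameter $m$ (note $m\le |I|\le n/2$ by the definition of expanding set). Since $I$ itself is feasible, $\phi_m(G)\le \phi_G(I)$, so we obtain $S$ with $\min\{|S|,|V\setminus S|\}=\Omega(m)\ge c_3|I|$ (using $|I|\le m/c_4$) and $\phi_G(S)\le O(\sqrt{\log n}+n/m)\,\phi_G(I)\le c_1 O(1)\,\alpha\,\phi_G(I)$. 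Converting back to frontiers, $|\partial_G(S)|\le \alpha\phi_G(I)|S||V\setminus S|/c$ and since $|S||V\setminus S|\le n^2/4$ while $|I|(n-|I|)\ge mn/2$, we get $|\partial_G(S)|\le O(\alpha n/m)|\partial_G(I)|$; to reach (i) with factor $\alpha/c_5$ I would instead compare $|S|\,|V\setminus S|$ with $|I|\,|V\setminus I|$ more carefully, or simply allow the rounding to output $S$ of size $\Theta(m)$ (the refined rounding promised in the overview), giving $|S||V\setminus S|=O(|I||V\setminus I|)$ and hence (i).

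Second, the structural step: show that either $S$ or $\bar S=V\setminus S$ satisfies (iii). Write $A=S\setminus I$ and $B=\bar S\setminus I$, a bipartition of $V\setminus I$. Every vertex of $\partial_{G[V\setminus I]}(A)$ lies in $B$ and is adjacent to $A\subseteq S$, so it lies in $\partial_G(S)$; symmetrically for $B$. Hence $|\partial_{G[V\setminus I]}(A)|\le |\partial_G(S)|+|\partial_G(I)|$ (the extra term only needed if we use the other side). Suppose both $|A|,|B|\ge \eta m$. Then by hypothesis (5), $|\partial_{G[V\setminus I]}(A)|\ge \phi_{\eta m}(G[V\setminus I])\,|A||B| > \eps^{-1}\phi_G(I)\,\eta m\,(n-|I|)/2$, which with $\eps\le \eta^2/\alpha$ gives $|\partial_{G[V\setminus I]}(A)|> (\alpha/\eta)\cdot \Omega(|\partial_G(I)|)$, contradicting the bound $O(\alpha)|\partial_G(I)|$ from (i) once $\eta\le c_2/2$ is small. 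So one of $A,B$ has size $<\eta m\le (\eta/c_3)\min(|S|,|\bar S|)$, i.e.\ the corresponding side $T\in\{S,\bar S\}$ has $|T\setminus I|\le (\eta/c_2)|T|$, with $c_2\le c_3$. Both $T$ and its complement satisfy (i),(ii) since frontiers of complements differ but $|\partial_G(\bar S)|\le$ neighbors-of-$\partial(S)$... — here I must be careful: $\partial_G(\bar S)$ need not be bounded by $\partial_G(S)$. I would handle this by noting that the candidate with few non-$I$ vertices must also be the one with many $I$-vertices, and if it is $\bar S$, replace it by $\bar S\setminus \partial_G(S)$-adjusted set, or more simply by observing $\partial_G(\bar S)\subseteq N(\partial_G(S))$ is not controlled, so instead output $T'=\bar S\cup\partial_G(S)$, whose frontier is contained in... this is the delicate point.

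Third, the oracle step: sample $O(\log(1/p)/\eta)$ — in fact $O(\log 1/p)$ suffices since the two sides have $I$-density $\ge 1-\eta/c_2$ versus at most a constant — uniform vertices from each of $S$ and $\bar S$, query them, and pick the side with larger empirical fraction of $I$; Chernoff gives correctness with probability $1-p$, within the query budget $O(\log(1/p))$. The main obstacle I expect is the first step: obtaining from the SoS rounding a set whose size is $\Theta(|I|)$ on the correct side so that (i) holds with factor $\alpha$ rather than $\alpha n/m$, and simultaneously controlling the frontier of whichever side is selected; I would address it by running the rounding of \cref{thm:unbalanced-vertex-expansion} with the hard constraint $\sum_v x_v\approx |I|$ guessed from $m$ and $c_4$.
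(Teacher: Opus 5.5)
You have the same outline as the paper: round the relaxation behind \cref{thm:unbalanced-vertex-expansion}, show one side of $(S,\bar S)$ lies almost entirely in $I$, and identify that side with $O(\log\frac1p)$ random label queries. But there is a genuine gap at exactly the point you call delicate. If the $I$-heavy side is $\bar S=V\setminus S$, you have no output that satisfies (i). The frontier $\partial_G(\bar S)$ is the set of vertices of $S$ adjacent to $\bar S$, and $\partial_G(S)$ does not control it: a star with its center in $\bar S$ and its leaves in $S$ has $|\partial_G(S)|=1$, while $\partial_G(\bar S)$ is all the leaves. Your candidate $T'=\bar S\cup\partial_G(S)$ is just $\bar S$, since $\partial_G(S)\subseteq\bar S$ by definition. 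A sign that something is missing is that your query count never uses the $\alpha\cdot|\partial_G(I)|$ term of the budget in the statement. The paper spends it precisely here. It labels every vertex of $\partial_G(S)$ and returns $V\setminus S'$ with $S'=S\cup\{v\in\partial_G(S): v\notin I\}$. Since $(V\setminus S')\cap I=\bar S\cap I$ and $(V\setminus S')\setminus I\subseteq\bar S\setminus I$, properties (ii) and (iii) carry over from $\bar S$. Be warned, however, that the paper's accompanying bound $|\partial_G(V\setminus S')|\le|\partial_G(S)|+|\partial_G(I)|$ is not justified as written. A vertex $u\in S\cap I$ adjacent to some $w\in\partial_G(S)\cap I$ lies in $\partial_G(V\setminus S')$ but in neither $\partial_G(S)$ nor $\partial_G(I)$. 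You could instead discard the whole frontier and return $V\setminus(S\cup\partial_G(S))$, which needs no queries. Its frontier is contained in $\partial_G(S)$, because no vertex of $S$ has a neighbor outside $S\cup\partial_G(S)$. But then you must show something like $|\partial_G(S)|\le\frac12|\bar S|$ so that (ii) and (iii) survive. Either way this step needs a real argument, and your proposal does not supply one.

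Second, your Step 2 is phrased through frontier sizes, so it rests on (i) for $S$, which you have not established. Your concern there is well founded. \cref{thm:unbalanced-vertex-expansion} only gives $|\partial_G(S)|=\phi_G(S)\,|S|\,|\bar S|\le O(\alpha\frac{n}{m})\,|\partial_G(I)|$. The paper's proof silently replaces $|S|\,|\bar S|$ by $O(m(n-m))$. Your proposed remedy, forcing $|S|=\Theta(|I|)$, is not something the rounding provides, since its output is only guaranteed to satisfy $\min\{|S|,|\bar S|\}=\Omega(m)$. For (iii) you can sidestep this entirely, as \cref{lem:correlation-malicious-vs-non-expanding} does, by arguing with ratios rather than frontier sizes. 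The inclusion $\partial_{G^*}(S\setminus I)\subseteq\partial_G(S)$ gives $\phi_G(S)\ge\phi_{G^*}(S\setminus I)\cdot\frac{|S\setminus I|\,|\bar S\setminus I|}{|S|\,|\bar S|}$. If both $|S\setminus I|$ and $|\bar S\setminus I|$ are at least $\eta m$, then $\phi_{G^*}(S\setminus I)\ge\phi_{\eta m}(G^*)>\phi_G(I)/\eps\ge\phi_m(G)/\eps$. Combining this with $\phi_G(S)\le O(\alpha)\,\phi_m(G)$, the size normalizations cancel, giving $\min\{\frac{|S\setminus I|}{|S|},\frac{|\bar S\setminus I|}{|\bar S|}\}^2\le O(\alpha\eps)\le O(\eta^2)$. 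No bound on $|\partial_G(S)|$ is needed for this.
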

\noindent We prove \cref{thm:alg-correlation-malicious-vs-non-expanding} in \cref{sec:algorithm-vertex-expansion} and directly use it here.

\subsubsection{Multiple iterations}
\cref{thm:alg-correlation-malicious-vs-non-expanding} suggests that if $I$ is $(\eps,\eta m,m)$-expanding, then we should expect at the first iteration to find a set $S$ with small external boundary that is heavily correlated with $I.$ 
Call the $i$-th iteration \emph{successful} if the high-probability claim of \Cref{thm:alg-correlation-malicious-vs-non-expanding} holds.
The next statement shows that if a sequence of $i-1$ iterations is successful, then we should expect the premises of \cref{thm:alg-correlation-malicious-vs-non-expanding} to be satisfied also at the $i$-th iteration.

\begin{lemma}\label[lemma]{lem:multi_iter_new}
Let $c_2\in(0,1)$ be the universal constant of \Cref{thm:alg-correlation-malicious-vs-non-expanding}, and suppose $I_0$ satisfies the hypotheses of \Cref{thm:alg-correlation-malicious-vs-non-expanding} with parameters 
$$m_0=|I_0|,\, n_0=|V(G_0)|,\; \eta_0=\frac{c_2\,\gamma }{\log(1/\gamma)}\frac{n_0}{m_0},\; \epsilon_0 = \left(\frac{c_2\,\gamma}{2 \log(1/\gamma)}\right)^3\cdot\frac{1}{\alpha_0},\; \alpha_0=\alpha$$
for some $\gamma \in \Paren{0,\frac{c_2}{2}}$ and $\alpha \ge \frac{1}{\gamma} \cdot \frac{1}{c_1}\Paren{\sqrt{\log n_0}+\frac{n_0}{m_0}}$.
Then, for all $1 \le i \le \frac{2 \log^2(1/\gamma)}{\gamma^2}$, if the first $i$ iterations are successful and $|I_i| \ge \gamma |I_0|$ then $I_i$ satisfies again the hypotheses of \Cref{thm:alg-correlation-malicious-vs-non-expanding}.
\end{lemma}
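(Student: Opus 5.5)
Here is the plan. I would prove the statement by tracking how the relevant quantities change along the sequence of successful iterations. At step $i$, set $m_i=|I_i|$, $n_i=|V_i|$, $\alpha_i=\alpha$, $\eta_i=\eta_0\frac{m_0 n_i}{n_0 m_i}$ and $\epsilon_i=\eta_i^2/\alpha$. I would then check the five hypotheses of \Cref{thm:alg-correlation-malicious-vs-non-expanding} for $(G_i,I_i)$ with these parameters.

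The easy hypotheses come first.
\begin{itemize}
\item Hypotheses (1) and (4) hold trivially, taking $m_i=|I_i|$.
\item For (2): $|I_i|\ge\gamma m_0$ and $n_i\le n_0$ give $\frac{n_i}{m_i}\le\frac{1}{\gamma}\frac{n_0}{m_0}$. Together with the assumed lower bound $\alpha\ge\frac1\gamma\cdot\frac{1}{c_1}(\sqrt{\log n_0}+\frac{n_0}{m_0})$, this yields (2).
\item For (3): $\eta_i=\frac{c_2\gamma}{\log(1/\gamma)}\frac{n_i}{m_i}$, and $m_i\ge\gamma m_0\ge\gamma^2 n_0$. This only gives $\eta_i\le\frac{c_2}{\gamma\log(1/\gamma)}$, which is not small enough.
\end{itemize}
So I would bound $n_i$ more carefully. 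Each successful iteration removes $|S_j\setminus I|\le\frac{\eta_j}{c_2}|S_j|$ vertices of $V^*$. Hence $n_i\ge m_i$ plus the surviving part of $V^*$. Since $\frac{n_i}{m_i}$ can be large when few corrupted vertices remain, I expect the intended reading is that $\eta_i$ is measured relative to the original scale. The error condition $|S\setminus I|\le\frac{\eta}{c_2}|S|$ should be read as $O\big(\frac{\gamma}{\log(1/\gamma)}\big)n_0$ per step, and (3) should follow from $\eta_i\le c_2/2$ once $\gamma$ is small and $m_i\ge\gamma n_i$-type bounds hold. I would reconcile this by using $|I_i|\ge\gamma|I_0|$ together with $m_0\ge\gamma n_0$ in the regime where the lemma is applied.

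The main obstacle is hypothesis (5), that $\phi_{G_i}(I_i)<\epsilon_i\,\phi_{\eta_i m_i}(G_i^*)$. I would handle the two sides separately.
\begin{itemize}
\item \textbf{Numerator.} We have $\partial_{G_i}(I_i)\subseteq\partial_{G_0}(I_0)\cap V_i$, so $|\partial_{G_i}(I_i)|\le|\partial_{G_0}(I_0)|=:b$. Hence
\[
\phi_{G_i}(I_i)\le \frac{b}{m_i(n_i-m_i)}\le \frac{b}{\gamma m_0(n_i-m_i)}.
\]
\item \textbf{Denominator.} I would invoke \cref{lem:partial_diff_bound} iteratively: $\partial_t(G^*_{j+1})\ge\partial_t(G^*_j)-|\partial_{G_j}(S_j)|$, with $|\partial_{G_j}(S_j)|\le\frac{\alpha}{c_5}b$ by guarantee (i). After $i$ steps this gives
\[
\partial_t(G_i^*)\ge\partial_t(G_0^*)-\frac{i\alpha}{c_5}b.
\]
\end{itemize}
The assumption $\phi_{G_0}(I_0)<\epsilon_0\phi_{\eta_0m_0}(G_0^*)$ converts to $b\le\epsilon_0\frac{m_0n_0}{\eta_0 m_0 n_0}\partial_{\eta_0m_0}(G_0^*)$, up to constants. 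Since $\epsilon_0\alpha\,i$ is tiny for $i\le 2\log^2(1/\gamma)/\gamma^2$ (given $\epsilon_0\alpha=(c_2\gamma/2\log(1/\gamma))^3$), the subtracted term is at most half of $\partial_{t}(G_0^*)$ at $t=\eta_0m_0$. This is the delicate point, and I would be careful about how $i\le 2\log^2(1/\gamma)/\gamma^2$ interacts with the $\gamma^3$ factor. I also need the threshold $t_i=\eta_im_i$ to satisfy $t_i\ge\eta_0m_0$, so that the minimum over sets in $G_i^*$ is over a subfamily. By monotonicity of $\partial_t$ in $t$, this follows because $\eta_im_i=\frac{c_2\gamma}{\log(1/\gamma)}n_i$ compared with $\frac{c_2\gamma}{\log(1/\gamma)}n_0$. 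Here I would note that $n_i\ge n_0-\sum|S_j|$ is still a constant fraction, or else rescale $t$ accordingly.

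Finally, I would convert $\partial$ back to $\phi$. The normalization by $t(n_i-m_i-t)$ against $m_i(n_i-m_i)$ loses factors of $\frac{n_i}{m_i}$ and $\gamma$. These are exactly absorbed by the ratio $\epsilon_i/\epsilon_0=(\eta_i/\eta_0)^2\ge(m_0n_i/(n_0m_i))^2$, which is what makes $\epsilon_i=\eta_i^2/\alpha$ consistent with (5). Checking this arithmetic, including $|I_i|\le n_i/2$ so that $I_i$ meets the size requirement in \Cref{def:poorly-expanding-set}, completes the induction.
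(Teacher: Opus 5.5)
\textbf{The threshold moves the wrong way.} With your choice $\eta_i=\eta_0\frac{m_0 n_i}{n_0 m_i}$, the size threshold in hypothesis (5) is $t_i=\eta_i m_i=\eta_0 m_0\cdot\frac{n_i}{n_0}$. Since $n_i\le n_0$, this gives $t_i\le \eta_0 m_0$, which is the opposite of the inequality $t_i\ge\eta_0m_0$ that you say ``follows.''

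Lowering the threshold enlarges the family of sets over which the minimum defining $\partial_t$ and $\phi_t$ is taken. So the bound $\partial_{\eta_0 m_0}(G_i^*)\ge\partial_{\eta_0 m_0}(G_0^*)-\frac{i\alpha}{c_5}b$, which you get by iterating \cref{lem:partial_diff_bound}, says nothing about $\phi_{t_i}(G_i^*)$. No other argument can recover it either. The hypothesis on $G_0$ only controls subsets of $V^*$ of size at least $\eta_0m_0$. Nothing prevents $G_i^*$ from containing an isolated piece of $G^*$ whose size lies in $[t_i,\eta_0m_0)$, and such a piece forces $\phi_{t_i}(G_i^*)=0$, which kills (5). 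Observing that $n_i$ is a constant fraction of $n_0$ does not help.

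The ``rescaling of $t$'' has to be done through the choice of $\eta_i$. Set $\eta_i:=\eta_0\frac{m_0}{m_i}$, so that $\eta_i m_i=\eta_0 m_0$ is the same threshold in every round; this is the paper's choice. With it, your numerator/denominator argument runs at the fixed $t=\eta_0m_0$. It shows that $I_i$ is $(\epsilon_i,\eta_0m_0,m_i)$-expanding with $\epsilon_i=\frac{n_i}{m_i}\cdot\frac{\epsilon_0}{\eta_0-2i\epsilon_0\alpha}$. The range of $i$ gives $2i\epsilon_0\alpha\le\eta_0/2$. The condition $\epsilon_i\le\eta_i^2/\alpha$ then reduces to $2\epsilon_0\alpha\le\eta_0^3\frac{m_0^2}{m_in_i}$, which holds because $m_0,m_i,n_i\le n_0$. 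The larger $\eta_i$ only adds slack in this last step.

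\textbf{Hypothesis (3) is left open.} You correctly see that $m_i\ge\gamma m_0\ge\gamma^2 n_0$ is too weak, but appealing to an ``intended reading'' is not a proof. What is needed is $m_i\ge\gamma n_0$. This is how the paper's proof actually uses the stopping condition: it writes $m_i\ge\gamma n_0\ge\gamma m_0$, consistent with the assumption $|I_i|\ge\gamma n$ of its auxiliary lemma on the poor expansion of $I_i$ and with the stopping threshold $r=\Theta(\gamma n)$ in the main algorithm. With this bound, $\eta_i=\frac{c_2\gamma}{\log(1/\gamma)}\cdot\frac{n_0}{m_i}\le\frac{c_2}{\log(1/\gamma)}<\frac{c_2}{2}$ for $\gamma$ small, and (2) follows as you wrote.

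Finally, $|I_i|\le n_i/2$ should be proved rather than listed as arithmetic to check. By guarantee (iii) and $\eta_j\le c_2/2$, each successful round removes at least as many vertices of $I$ as of $V^*$, and $m_0\le n_0/2$ at the start.
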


\noindent We defer the proof of Lemma~\ref{lem:multi_iter_new} to \cref{sec:decay-expansion}. We are now ready to prove \cref{thm:main-technical}. 

\medskip
{\renewcommand{\proofname}{Proof of \cref{thm:main-technical}}
\begin{proof}
First, observe that if $I=I_0$ satisfies the hypotheses of \cref{thm:main-technical} then it satisfies the hypotheses of \cref{lem:multi_iter_new} as well, with $\eps=\eps_0$, $\eta=\eta_0$ and so on.
To apply \cref{lem:multi_iter_new} we shall thus see how to make the first $i$ iterations successful.
Finally, we will bound the total error $\card{\tilde I \setminus I}$ as well as the number of queries.

Let $r = \Theta(\gamma n)$ sufficiently small.
The idea is to repeatedly check if the current graph contains more than $r$ vertices of $I$; if not, then stop, otherwise perform one iteration of the algorithm in \Cref{thm:alg-correlation-malicious-vs-non-expanding}.
Formally, let $G_0=G$ and $I_0=I$ and $\tilde I_0 = \emptyset$.
Before starting, sample and query the label of $O\Paren{\frac{1}{\gamma} \ln \frac{1}{p}}$ uniform random vertices of $G$ to obtain an estimate $\widehat{|I|}$ of $|I|$.
By standard Chernoff bounds we can make the following claims hold with probability $1-\frac{p}{3}$.
First, if $\widehat{|I|} < \frac{r}{2}$ then $|I| < r$.
In this case we stop and return $\emptyset$.
Otherwise, $|I| \le \widehat{|I|} \le 2|I|$.
In this case we compute a sufficiently large $k = O \Paren{\log \frac{\widehat{|I|}}{r}} = O\Paren{\log \frac{1}{\gamma}}$.
Then, for $i=0,1,\ldots$ perform the following procedure. 
First, we obtain an estimate $\widehat{|I_i|}$ by sampling and querying $O\Paren{\frac{n_i}{r} \ln \frac{k}{p}}$ uniform random vertices of $G_i$, as described above (for $i=0$ we can just reuse that very estimate).
By the same Chernoff bound arguments as above, with probability at least $1-\frac{p}{3k}$ we have that if $\widehat{|I|} < \frac{r}{2}$ then $|I| < r$, and otherwise $|I_i| \le \widehat{|I_i|} \le 2|I_i|$.
If $\widehat{|I|} < \frac{r}{2}$, then return $\tilde I_i$.
Otherwise, run one iteration of the algorithm of \Cref{thm:alg-correlation-malicious-vs-non-expanding} to obtain $S_i$, using the parameter $p_i=\frac{p}{3k}$ for the failure probability.
Compute $\tilde I_{i+1} = \tilde I_i \cup S_i$ and $G_{i+1} = G_i \setminus S_i$, and move to the next iteration.
By a union bound, and by Lemma~\ref{lem:multi_iter_new}, the estimate as well the (at most) $k$ iterations are successful with overall probability at least $1-p$.
If that is the case, as by point (ii) of \Cref{thm:alg-correlation-malicious-vs-non-expanding} we have $|I_{i+1}| \le |I_i|(1-c_3)$, then for some $i \le k$ we have $|I_i| < r$, at which point the algorithm stops and returns $\tilde I = \tilde I_i$.

Now let us show that $\card{\tilde I \triangle I} = \card{I \setminus \tilde I} + \card{\tilde I \setminus I} = O(\gamma n)$.
On the one hand, $\card{I \setminus \tilde I} \le \card{I_k} = O(r) = O(\gamma n)$.
It remains to show that $\card{\tilde I \setminus I} \le O(\gamma n)$, too; choosing the constants small enough yields the result.
First, note that the choice of our parameters satisfies \Cref{lem:multi_iter_new}. 
Now:
\begin{align}
    \card{\tilde I \setminus I} &= \left|\bigcup_{i=0}^{k-1} S_i \setminus I_i\right|
    \\
    &\le \sum_{i=0}^{k-1} |S_i \setminus I_i|
    \\
    &\le \sum_{i=0}^{k-1} \frac{\eta_i}{c_2}|S_i| && \text{by (iii) of \Cref{thm:alg-correlation-malicious-vs-non-expanding}}
    \\
    &\le \sum_{i=0}^{k-1} \frac{\eta\, m}{c_2} && \text{as } \eta_i = \eta \cdot \frac{m}{m_i} \text{ and } |S_i|=m_i
    \\
    &\le \gamma n && \text{ as } \eta \le \frac{\gamma n c_2}{k m}
\end{align}
Therefore $\card{\tilde I \setminus I} = O(\gamma n)$.

Finally, let us analyze the number of queries.
First, note that, if $m=o(r)$, then the algorithm immediately detects that $|I|<r$ and stops, making only $O\Paren{\frac{1}{\gamma} \ln \frac{1}{p}}$ queries, see above.
We can therefore turn to the case $m = \Omega(r) = \Omega(\gamma n)$.
In this case, using the bounds of \Cref{thm:alg-correlation-malicious-vs-non-expanding}, the total number of queries performed is at most:
\begin{align}
    &\sum_{i=0}^k O\Paren{\frac{n_i}{r} \ln \frac{1}{p_i} + \alpha\cdot\card{\partial_{G_i}(I_i)} + \log \tfrac{1}{p_i}} \\
    &\qquad=
    \sum_{i=0}^k O\Paren{\frac{1}{\gamma} \ln \frac{3k}{p} + \alpha\cdot\card{\partial_{G_i}(I_i)}} \\
    &\qquad= O\Paren{\frac{1}{\gamma}\ln\Paren{\frac{1}{\gamma}} \ln \frac{\ln \nicefrac{1}{\gamma}}{p} + \sum_{i=0}^k \alpha\cdot\card{\partial_{G_i}(I_i)}} \\
    &\qquad= O\Paren{\frac{1}{\gamma}\ln\Paren{\frac{1}{\gamma}} \ln \frac{\ln \nicefrac{1}{\gamma}}{p} + \Paren{\sqrt{\log n} + \frac{n}{m}} \cdot \ln \Paren{\frac{m}{r}} \cdot\card{\partial_{G}(I)}} \\
    &\qquad= O\Paren{\frac{1}{\gamma}\ln\Paren{\frac{1}{\gamma}} \ln \frac{\ln \nicefrac{1}{\gamma}}{p} + \Paren{\sqrt{\log n} + \frac{1}{\gamma}} \cdot \ln \Paren{\frac{1}{\gamma}} \cdot\card{\partial_{G}(I)}} \\
    &\qquad= \tilde{O}\Paren{\frac{1}{\gamma}}\cdot O\Paren{\ln\frac{1}{p}+\sqrt{\log n}\cdot\bigl|\partial_{G}(I)\bigr|} \,.
\end{align}
This concludes the proof.
\end{proof}}

\subsection{Expansion parameters gracefully degrade: proof of \cref{lem:multi_iter_new}}\label{sec:decay-expansion}
In this section we prove \cref{lem:multi_iter_new}, thus showing that throughout subsequent successful iterations the degradation of our parameters of interest is tolerable.
Our proof requires three ingredients. The first is monotonicity of the external boundary of any set throughout the iterations. 

\begin{lemma}[Monotonicity of $\partial$]\label[lemma]{lem:∂_mono}
For any graph $G=(V,E)$ and $A,B \subseteq V$ we have: \[\partial_{G \setminus B}(A \setminus B) \subseteq \partial_{G}(A)\,.\]
\end{lemma}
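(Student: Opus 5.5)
The proof is a direct element-chase using only the definition of the frontier. Recall that $\partial_H(X)$ is the set of vertices of $H$ outside $X$ that have a neighbor in $X$. Here $G\setminus B$ denotes the induced subgraph $G[V\setminus B]$, whose edge set is contained in $E$. So we fix an arbitrary $v \in \partial_{G\setminus B}(A\setminus B)$ and show $v \in \partial_G(A)$.

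First, by definition, $v$ is a vertex of $G\setminus B$ with $v \notin A\setminus B$. There is also some $u \in A\setminus B$ such that $\{u,v\}$ is an edge of $G\setminus B$. Since $v \in V\setminus B$, the condition $v\notin A\setminus B$ forces $v \notin A$. Second, $u \in A$, and $\{u,v\}\in E$ because induced subgraphs only lose edges. Therefore $v \in V\setminus A$ has a neighbor in $A$ within $G$, that is, $v\in\partial_G(A)$, which proves the inclusion.

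There is no real obstacle. The only points needing care are the two facts used above: membership $v\notin B$ turns $v\notin A\setminus B$ into $v\notin A$, and deleting vertices never creates new edges. No earlier result of the paper is needed.
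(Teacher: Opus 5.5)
Your proof is correct and follows the paper's argument essentially line for line. Both fix $v \in \partial_{G\setminus B}(A\setminus B)$, use $v \notin B$ to upgrade $v \notin A\setminus B$ to $v \notin A$, and use that the witnessing edge $\{u,v\}$ of $G\setminus B$ lies in $E$ with $u \in A$.
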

\begin{proof}
Let $v \in \partial_{G \setminus B}(A \setminus B)$. Then, by definition,
\begin{equation}
    v \in (V \setminus B) \setminus (A \setminus B) = (V \setminus B) \setminus A \subseteq V \setminus A
\end{equation}
and moreover there exists $u \in A \setminus B \subseteq A$ such that $\{u,v\} \in E(G \setminus B) \subseteq E(G)$.
Hence $v \in \partial_{G}(A)$.
\end{proof}

The second ingredient is a statement that  shows how given a graph $G,$ if we remove a set $S,$ the external boundary of other sets decreases at most by a factor comparable to the frontier of $S.$ 

\begin{lemma}[Bounding the change in the frontier]\label[lemma]{lem:partial_diff_bound}
Let $G=(V,E)$ be any graph. For every $V^*, S \subseteq V$ and $t \ge 0$, writing $G^*=G[V^*]$,
\begin{align}
    \partial_t(G^* \setminus S) \ge \partial_t(G^*) - \card{\partial_G(S)}\,.
\end{align}
\end{lemma}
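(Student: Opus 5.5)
We must show that every set $T \subseteq V^* \setminus S$ with $t \le |T| \le |V^*\setminus S| - t$ has $\card{\partial_{G^*\setminus S}(T)} \ge \partial_t(G^*) - \card{\partial_G(S)}$. Here $\partial_t(G^*\setminus S)$ is computed with $n$ replaced by $|V^*\setminus S|$.

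The plan is to fix a minimizer $T$ for $\partial_t(G^*\setminus S)$ and to build from it a set $T'$ admissible for $\partial_t(G^*)$. The natural choices are $T' = T$ or $T' = T \cup (S\cap V^*)$. We then compare the frontiers of $T'$ and $T$ and charge the difference to $\partial_G(S)$.

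First compare frontiers when $T'=T$. A vertex of $\partial_{G^*}(T)$ either lies outside $S$, in which case it is in $\partial_{G^*\setminus S}(T)$, or it lies in $S\cap V^*$. So
\[
\card{\partial_{G^*}(T)} \le \card{\partial_{G^*\setminus S}(T)} + \card{\partial_{G^*}(T)\cap S}.
\]
This is not yet good enough, since $\partial_{G^*}(T)\cap S$ is not directly bounded by $\partial_G(S)$.

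Use $T' = T \cup (S\cap V^*)$ instead, and write $S^* = S\cap V^*$. A vertex $v \in \partial_{G^*}(T')$ lies in $V^*\setminus S$ and $v\notin T$. It has a neighbor in $T$ or in $S^*$. If the neighbor is in $T$, then $v \in \partial_{G^*\setminus S}(T)$. If the neighbor is in $S^*\subseteq S$, then $v\in \partial_G(S)$. This gives
\[
\card{\partial_{G^*}(T')} \le \card{\partial_{G^*\setminus S}(T)} + \card{\partial_G(S)}.
\]

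The remaining step is admissibility: we need $t \le |T'| \le |V^*| - t$. The lower bound is immediate from $|T'|\ge |T|\ge t$. For the upper bound, $|V^*\setminus T'| = |V^*\setminus S| - |T| \ge t$. So $T'$ is admissible and $\partial_t(G^*) \le \card{\partial_{G^*}(T')}$, which rearranges to the claim.

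If no admissible $T$ exists, the minimum is over the empty set and equals $+\infty$ by convention, so the inequality holds trivially.

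The only subtle point is choosing to absorb $S^*$ into $T$ rather than into its complement. This choice makes the complement side lose exactly $S$, so the size constraint transfers exactly. It also ensures that every new frontier vertex is adjacent to $S$ from outside $S$, which is exactly what $\partial_G(S)$ counts.
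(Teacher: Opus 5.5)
Your proof is correct and follows the paper's route: absorb $S^*=S\cap V^*$ into the set, check that $T\cup S^*$ is admissible for $\partial_t(G^*)$, and split its frontier into vertices adjacent to $T$ (which lie in $\partial_{G^*\setminus S}(T)$) and vertices adjacent to $S^*$ (which lie in $\partial_G(S)$). The paper obtains the second inclusion by invoking its monotonicity lemma, while you check it directly; you also treat the empty-domain case, which the paper leaves implicit.
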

\begin{proof}
Let $G^*=(V^*,E^*)$ and $S^* = S \cap V^*$.
For any $X \subseteq V^* \setminus S^*$ let $Y_X = X \cup S^*$.
Now observe:
\begin{align}
    \partial_t(G^*) &= \min_{\substack{Y \subseteq V^* \\ t \le \card{Y} \le \card{V^*}-t}} \card{\partial_{G^*}(Y)} && \text{by definition of $\partial_t$}
    \\
    &\le \min_{\substack{X \subseteq V^*\setminus S^* \\ t \le \card{X} \le \card{V^*}-t-\card{S^*}}} \card{\partial_{G^*}(Y_X)} && \text{by taking $\min(\cdot)$ over a subset} \\
    &= \min_{\substack{X \subseteq V^*\setminus S^* \\ t \le \card{X} \le \card{V^*\setminus S^*}-t}} \card{\partial_{G^*}(Y_X)} && \text{as $S^*\subseteq V^*$}
    \,.
    \label{eq:∂_G*_t}
\end{align}
Notice moreover that
\begin{align}
\card{\partial_{G^*}(Y_X)} \le \card{\partial_{G^* \setminus S^*}(X)} + \card{\partial_{G^* \setminus X}(S^*)} \le \card{\partial_{G^* \setminus S^*}(X)} + \card{\partial_{G}(S)} \,,
\end{align}
where the second inequality holds by applying \Cref{lem:∂_mono} to $\card{\partial_{G^* \setminus X}(S^*)}$ with $A=S$ and $B=V(G) \setminus \Paren{V(G^*) \setminus X}$, and noting that $S \setminus B = S^* \setminus X = S^*$.
Applying this bound to \Cref{eq:∂_G*_t}, using $V^*\setminus S^*=V(G^*\setminus S^*)$, and noting that $G^*\setminus S^* = G^* \setminus S$ yields:
\begin{align}
    \partial_t(G^*)
    &\le \min_{\substack{X \subseteq V(G^*)\setminus S^* \\ t \le \card{X} \le\card{V(G^*)\setminus S^*}-t}} \!\!\Big( \card{\partial_{G^* \setminus S^*}(X)} + \card{\partial_G(S)}\Big) \\
    &= \partial_t(G^* \setminus S^*) + \card{\partial_G(S)}
    = \partial_t(G^* \setminus S) + \card{\partial_G(S)}\,.
\end{align}
This completes the proof.
\end{proof}


The third ingredient is a lemma showing that the poor-expansion properties of the residual corrupted set $I_i$ degrade gracefully with $i$, provided the initial set $I_0$ is sufficiently poorly expanding in $G_0$ and the application of \Cref{thm:alg-correlation-malicious-vs-non-expanding} is successful at every iteration.
\begin{lemma}[Poor expansion of $I_i$]\label{lem:ø_Ii}
Suppose $I_0=I$ is $(\eps,\eta m, m)$-expanding in $G_0=G$, where $m=m_0=|I| \ge \gamma n \ge 3$ and $n=n_0=|V(G_0)|$, for some $\eta < \frac{\gamma}{3}$ and $\gamma \le \frac{c_2}{2} \le \frac{1}{2}$, see \Cref{thm:alg-correlation-malicious-vs-non-expanding}.
Suppose moreover that we perform $i$ successful iterations, where $\alpha$ satisfies $\Cref{thm:alg-correlation-malicious-vs-non-expanding}$ at every iteration, and that $|I_i| \ge \gamma n$.
Then $|I_i| \le \frac{n_i}{2}$ and $I_i$ is $(\eps_i,\eta m, m_i)$-expanding, where $m_i=|I_i|$ and:
\begin{align}
    \eps_i < \frac{n_i}{m_i} \cdot \Paren{\frac{\eps}{\eta - 2 i \epsilon \alpha}} 
\end{align}
\end{lemma}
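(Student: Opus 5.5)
We need to show two things: $|I_i|\le n_i/2$, and the ratio $\phi_{G_i}(I_i)/\phi_{\eta m}(G_i^*)$ is below the stated $\eps_i$, where $G_i^*=G_i[V_i\setminus I_i]=G^*\setminus(S_0\cup\dots\cup S_{i-1})$. The plan is to bound the numerator and denominator of this ratio separately. We use \cref{lem:∂_mono} for the numerator and \cref{lem:partial_diff_bound} for the denominator, together with the guarantees (i) and (iii) of \cref{thm:alg-correlation-malicious-vs-non-expanding} at every successful iteration.

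\emph{Size bound.} Each $S_j$ satisfies $|S_j\setminus I_j|\le \frac{\eta_j}{c_2}|S_j|$, with $\eta_j$ small. So the removed good vertices number at most a small fraction of $m$, namely $O(i\,\eta m)$. Since $m\le n/2$, the non-corrupted part has size $n_i-m_i\ge (n-m)-O(i\eta m)$, while $m_i\le m$. The quantity $i\eta$ is controlled by the hypotheses (the term $2i\eps\alpha<\eta$ implicitly forces the regime). Using $m_i\ge\gamma n$, we expect this to give $m_i\le n_i-m_i$, i.e.\ $m_i\le n_i/2$. If needed, we instead use that we are removing mostly $I$-vertices, so the $I$-fraction only decreases up to a negligible correction.

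\emph{Numerator.} By \cref{lem:∂_mono} with $A=I$ and $B=S_0\cup\dots\cup S_{i-1}$, we get $|\partial_{G_i}(I_i)|\le|\partial_G(I)|$. Hence
\[
\phi_{G_i}(I_i)\le \frac{|\partial_G(I)|}{m_i(n_i-m_i)}=\phi_G(I)\,\frac{m(n-m)}{m_i(n_i-m_i)}.
\]

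\emph{Denominator.} We apply \cref{lem:partial_diff_bound} iteratively, removing $S_j$ from $G_j$ with $V^*=V_j\setminus I$. Combined with (i), $|\partial_{G_j}(S_j)|\le \frac{\alpha}{c_5}|\partial_{G_j}(I_j)|\le\frac{\alpha}{c_5}|\partial_G(I)|$, this yields
\[
\partial_{\eta m}(G_i^*)\ge \partial_{\eta m}(G^*)-i\tfrac{\alpha}{c_5}|\partial_G(I)|.
\]
Next we convert to vertex expansion. For sets of size between $\eta m$ and $n^*-\eta m$, the product $|S|(n^*-|S|)$ is at least roughly $\eta m\, n^*/2$ and at most $(n^*)^2/4$. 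Thus $\partial_{\eta m}(G^*)\ge \phi_{\eta m}(G^*)\,\eta m(n^*-\eta m)$. We then substitute $|\partial_G(I)|=\phi_G(I)m(n-m)<\eps\,\phi_{\eta m}(G^*)\,m(n-m)$, giving
\[
\partial_{\eta m}(G_i^*)\gtrsim \phi_{\eta m}(G^*)\,m\,n^*\,(\eta-2i\eps\alpha),
\]
after absorbing constants and using $n-m\le 2n^*$, which is where the factor $2$ comes from. Finally $\phi_{\eta m}(G_i^*)\ge \partial_{\eta m}(G_i^*)/(n_i^*)^2\cdot 4$ (or a similar bound), with $n_i^*\le n^*$.

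\emph{Combining.} Dividing the numerator bound by the denominator bound, the $\phi_{\eta m}(G^*)$ factors cancel. With $m_i\le n_i/2$, the size factors reduce to roughly $\frac{n_i}{m_i}\cdot\frac{\eps}{\eta-2i\eps\alpha}$, as claimed.

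The main obstacle is the size bookkeeping in the last two steps. The normalizations $|S|(n^*-|S|)$ over the admissible range, together with $m(n-m)$ against $m_i(n_i-m_i)$, must collapse into exactly the factor $n_i/m_i$ with the constant $2$ in front of $i\eps\alpha$. The constant $c_5$ also has to be absorbed, presumably by taking $\alpha$ to include it or by using $c_5$-dependent slack. I would first try the crude bounds $n^*-\eta m\ge n^*/2$ and $n_i-m_i\ge n_i/2$, and only refine them if the constant fails to match.
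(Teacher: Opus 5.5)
Your plan follows the paper's proof. The monotonicity lemma gives $\card{\partial_{G_i}(I_i)}\le\card{\partial_G(I)}$. Iterating the frontier-change lemma together with guarantee (i) lower-bounds $\partial_{\eta m}(G_i^*)$. The conversions $\partial_{\eta m}(G^*)\ge\phi_{\eta m}(G^*)\,\eta m(n^*-\eta m)$ and $\phi_{\eta m}(G_i^*)\ge 4\,\partial_{\eta m}(G_i^*)/(n_i^*)^2$ then close the argument. The only difference is bookkeeping: you normalize by $\phi_{\eta m}(G^*)$, while the paper normalizes by $\card{\partial_{G_0}(I_0)}$ and bounds $\partial_{\eta m}(G_i^*)/\card{\partial_{G_i}(I_i)}$.

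Your worry about constants is unfounded. Since $n^*=n-m$ and $n_i^*=n_i-m_i$ exactly, your route gives
$\phi_{G_i}(I_i)<\frac{n_i^*}{2m_i}\cdot\frac{\eps}{\eta-2i\eps\alpha/c_5}\,\phi_{\eta m}(G_i^*)$.
This is a factor $2$ better than needed, assuming a positive denominator, which you and the paper both tacitly assume. The $\tfrac12$ comes from $n^*-\eta m\ge n^*/2$, not from ``$n-m\le 2n^*$''. You are also right about $c_5$: the paper silently writes $\alpha$ where guarantee (i) gives $\alpha/c_5$.

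The step to repair is the size bound $m_i\le n_i/2$. You cannot skip it, because it is part of the definition of $(\eps_i,\eta m,m_i)$-expanding. Your first argument fails for two reasons:
\begin{itemize}
\item Nothing in the hypotheses makes $i\eta$ small. For instance, in \cref{lem:multi_iter_new} one may have $i=2\log^2(1/\gamma)/\gamma^2$ with $\eta_0\ge c_2\gamma/\log(1/\gamma)$, so $i\eta_0$ is of order $\log(1/\gamma)/\gamma$.
\item Even for tiny $i\eta$, comparing $m_i\le m$ with $n_i-m_i\ge(n-m)-O(i\eta m)$ proves nothing when $m=n/2$, which the model allows.
\end{itemize}
Your fallback is the correct argument, and it is exact rather than ``up to a negligible correction''. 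Guarantee (iii) with $\eta_j\le c_2/2$ gives $\card{S_j\setminus I}\le\card{S_j}/2$. So every iteration removes at least as many vertices of $I$ as of $V\setminus I$, and $m\le n-m$ propagates to $m_i\le n_i-m_i$. This is exactly the paper's argument.

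Finally, you should check that the range $\eta m\le\card{X}\le n_i^*-\eta m$ is nonempty, so that $\partial_{\eta m}(G_i^*)$ and $\phi_{\eta m}(G_i^*)$ are minima over a nonempty family. This is the only place where $\eta<\gamma/3$ and $m_i\ge\gamma n\ge 3$ enter, via $n_i^*-2\eta m\ge m_i-\tfrac23\gamma n\ge\tfrac13\gamma n\ge 1$.
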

\begin{proof}
To see that $|I_i| \le \frac{n_i}{2}$, observe that, by \Cref{thm:alg-correlation-malicious-vs-non-expanding}, at every successful iteration the vertices removed from $I_{i-1}$ are no less than those removed from $V_{i-1} \setminus I_{i-1}$. As $|I_0|=m \le \frac{n}{2}$ this proves the claim.
To prove that $I_i$ is $(\eps_i,\eta m, m_i)$-expanding, it suffices to show:
\begin{align}
        \frac{\phi_{\eta m}(G_i^*)}{\phi_{G_i}(I_i)} > \frac{m_i}{n_i} \cdot \Paren{\frac{\eta}{\eps} - 2 i \alpha} \;.
\end{align}
First, by definition of $\phi$, and since $\eta m \le n_i^*$, we have:
\begin{align}
    \phi_{\eta m}(G_i^*)
    = \min_{\substack{\emptyset \ne S\subset V(G_i^*)\\ \eta m\leq \card{S}\leq n_i^* - \eta m}} \frac{\card{\partial_{G_i^*}(S)}}{|S| \cdot |V_i^* \setminus S|}
    \ge \frac{\partial_{\eta m}(G_i^*)}{(n_i^*/2)^2} \;.
    \label{eq:ø_ratio_bound}
\end{align}
Note that the $\min$ is over a nonempty domain.
Indeed, $n_i^* \ge m_i \ge \gamma n$, and together with $\eta < \frac{\gamma}{3}$ this implies:
\begin{align}
(n_i^*-\eta m) - \eta m = n_i^* - 2 \eta m > n_i^* - \frac{2}{3} \gamma m \ge m_i - \frac{2}{3}\gamma n \ge \frac{\gamma}{3} n \ge 1 \;.
\end{align}
Moreover, again by definition of $\phi$:
\begin{align}
     \phi_{G_i}(I_i) = \frac{|\partial_{G_i}(I_i)|}{m_i(n_i-m_i)} \;.
\end{align}
Overall we thus obtain:
\begin{align}
        \frac{\phi_{\eta m}(G_i^*)}{\phi_{G_i}(I_i)} \ge \frac{\partial_{\eta m}(G_i^*)}{|\partial_{G_i}(I_i)|} \cdot \frac{m_i(n_i-m_i)}{(n_i^*/2)^2} \;.
\end{align}
Since $n_0^* \ge\frac{n_0}{2}$ and in every successful iteration the majority of removed vertices are from $I$, then $m_i \le \frac{n_i}{2}$. 
Since moreover $n_i^* \le n_i$:
\begin{align}
        \frac{\partial_{\eta m}(G_i^*)}{|\partial_{G_i}(I_i)|} \cdot \frac{m_i(n_i-m_i)}{(n_i^*/2)^2} 
        \ge \frac{\partial_{\eta m}(G_i^*)}{|\partial_{G_i}(I_i)|} \cdot \frac{m_i \, (n_i / 2)}{n_i^2 / 4} 
        = \frac{\partial_{\eta m}(G_i^*)}{|\partial_{G_i}(I_i)|} \cdot 2 \frac{m_i}{n_i} \;.
        \label{eq:∂_ratio}
\end{align}

We shall now bound the ratio $ \frac{\partial_{\eta m}(G_i^*)}{|\partial_{G_i}(I_i)|}$.
More precisely, we bound $\partial_{\eta m}(G_i^*)$ from below using $|\partial_{G_i}(I_i)|$.
To begin with, iterate \Cref{lem:partial_diff_bound} to obtain:
\begin{align}
    \partial_{\eta m}(G_i^*)
    &\ge \partial_{\eta m}(G_0^*) - \sum_{j=0}^{i-1} \left| \partial_{G_j}(S_j) \right|
    \\
    & \ge \partial_{\eta m}(G_0^*) - \sum_{j=0}^{i-1} \alpha \cdot \left| \partial_{G_j}(I_j) \right| && \text{by \Cref{thm:alg-correlation-malicious-vs-non-expanding}}
    \\
    & \ge \partial_{G_0^*}(\eta m) - i \alpha \cdot \left| \partial_{G_0}(I_0) \right|\,. \label{eq:∂_chain}
\end{align}
The last inequality holds since for every $j=1,\ldots,i$ we have $\partial_{G_j}(I_j) \subseteq \partial_{G_0}(I_0)$, as given by an application of \Cref{lem:∂_mono} to $\partial_{G_0}(I_0)$ using $B=S_0 \cup \dots \cup S_{i-1}$.
Next, we shall bound the two terms of \Cref{eq:∂_chain} in terms of $\phi_{G_i}(I_i)$.
For the first term:
\begin{align}
    \partial_{\eta m}(G_0^*)
    &\ge \phi_{\eta m}(G_0^*) \cdot \eta m(n^* - \eta m)
    \\
    &\ge \phi_{\eta m}(G_0^*) \cdot \eta m\frac{n-m}{2} && \text{as }n^* \ge \frac{n}{2}\text{ and }\eta m \le \frac{1}{2} m
    \\
    &= \phi_{\eta m}(G_0^*) \cdot \frac{\eta}{2} m (n-m) 
    \\
    & > \phi_{G_0}(I_0) \cdot \frac{\eta}{2\eps} m (n-m) && \text{as $I_0$ is $(\eps,\eta m, m)$-expanding in $G_0$}\,. \label{eq:part_G0_eta_m}
\end{align}
For the second term:
\begin{align}
    \left| \partial_{G_0}(I_0) \right| = \phi_{G_0}(I_0) \cdot m(n-m) \;. \label{eq:part_G0_I0}
\end{align}
We can then use \Cref{eq:∂_chain} to obtain:
\begin{align}
\partial_{\eta m}(G_i^*) 
&\ge \phi_{G_0}(I_0) \cdot m(n-m) \cdot \Paren{\frac{\eta}{2 \eps} - i \alpha} && \text{by \Cref{eq:part_G0_eta_m,eq:part_G0_I0}}
\\
&= |\partial_{G_0}(I_0)| \cdot \Paren{\frac{\eta}{2 \eps} - i \alpha}  && \text{definition of } \phi_{G_0}(I_0)
\\
& \ge |\partial_{G_i}(I_i)| \cdot \Paren{\frac{\eta}{2 \eps} - i \alpha} && \text{as }\partial_{G_0}(I_0) \supseteq \partial_{G_i}(I_i) \label{eq:∂_Gi_etam_∂_Gi_Ii}
\end{align}
Finally, by using \Cref{eq:∂_Gi_etam_∂_Gi_Ii} in \Cref{eq:∂_ratio}, we obtain:
\begin{align}
     \frac{\phi_{\eta m}(G_i^*)}{\phi_{G_i}(I_i)} &> \frac{m_i}{n_i} \cdot \Paren{\frac{\eta}{\eps} - 2 i \alpha}
\end{align}
which yields \Cref{eq:ø_ratio_bound} and concludes the proof.
\end{proof}

Taken together, these results allow us to prove \cref{lem:multi_iter_new}.
\bigskip
{\renewcommand{\proofname}{Proof of \cref{lem:multi_iter_new}}
\begin{proof}
We show that, if the hypotheses hold, then $I_i$ satisfies the assumptions of \Cref{thm:alg-correlation-malicious-vs-non-expanding} with $m,n,\eta,\alpha$ respectively replaced by:
\begin{equation}
    m_i=|I_i|, \, n_i=|V(G_i)|,\,\eta_i = \eta_0\frac{m_0}{m_i},\,\alpha_{i}=\alpha_0 \,.
\end{equation}
Let us verify each assumption in turn.
\begin{enumerate}[(1)]
    \item Trivially $m_i \le n_i$.
    \item Since $n_i \le n_0$ and $m_i \ge \gamma n_0 \ge \gamma m_0$, and by the assumptions on $\alpha_i=\alpha_0$,
    \begin{align}
        \alpha_i = \alpha_0 \ge \frac{1}{\gamma} \frac{1}{c_1}\Paren{\sqrt{\log n_0}+\frac{n_0}{m_0}} \ge \frac{1}{c_1}\Paren{\sqrt{\log n_i}+\frac{n_i}{m_i}}\enspace.
    \end{align}
    \item By the definition of $\eta_i$, the choice of $\eta_0$, the fact that $m_0 \ge \gamma n_0$, and the assumption $\gamma_0 < \frac{c_2}{2}$,
    \begin{align}
        \eta_i = \eta \frac{m_0}{m_i} = \frac{c_2\, \gamma_0}{4 \lg(1/\gamma_0)} \frac{n}{m} \le \frac{c_2}{4 \lg(1/\gamma_0)} < \frac{c_2}{2} \le \frac{1}{2},
    \end{align}
    \item Trivially $m_i \le |I_i| \le \frac{m_i}{c_4}$ since $m_i=|I_i|$ and $c_4 \le 1$.
    \item \Cref{lem:ø_Ii} shows that, if the first $i$ iterations are successful, then $I_i$ is $\Paren{\eps_i, \eta_0 m_0, m_i}$-expanding in $G_i$ for the value:
    \begin{align}
    \eps_i = \frac{n_i}{m_i} \cdot \Paren{\frac{\eps_0}{\eta_0 - 2 i \epsilon_0 \alpha_0}} \;.
    \end{align}
    Before dealing with $\eps_i$, note that $\eta_0 m_0 = \eta_i m_i$, thus $I_i$ is $\Paren{\eps_i, \eta_i m_i, m_i}$-expanding in $G_i$.
    To satisfy the hypothesis it then remains to show that $\eps_i \le  \frac{\eta_i^2}{\alpha_0}$, that is,
\begin{align}
\frac{n_i}{m_i} \cdot \Paren{\frac{\eps_0}{\eta_0 - 2 i \epsilon_0 \alpha_0}} \le \frac{\eta_i^2}{\alpha_0} = \frac{\eta_0^2}{\alpha_0} \left(\frac{m_0}{m_i}\right)^2 \;. \label{eq:eps_i_≤_eta_i^2}
\end{align}
    First, let us simplify the right-hand side.
    To this end observe that, by the assumption on $i$ and the choice of $\eps_0,\alpha_0$ and $\eta_0$:
    \begin{align}
        2 i  \cdot \eps_0\alpha_0 \le  \frac{4\,\lg^2(1/\gamma)}{\gamma^2} \cdot \frac{c_2\,\gamma^3}{8 \lg^3(1/\gamma)}
        = \frac{c_2^3 \,\gamma}{2 \lg(1/\gamma)} \le \frac{1}{2} \eta_0 \;.
    \end{align}
    Hence, $\eta_0 - 2 i \eps_0\alpha_0 \ge \eta_0/2$, and to prove \Cref{eq:eps_i_≤_eta_i^2} it suffices to show that
\begin{align}
\frac{2\, n_i \eps_0}{m_i \eta_0} 
\le
\frac{\eta_0^2}{\alpha_0} \left(\frac{m_0}{m_i}\right)^2
\label{eq:eps_i_≤_eta_i^2-bis}
\end{align}
which, by rearranging terms, is equivalent to
\begin{align}
2\eps_0\alpha_0 \le \eta_0^3 \frac{m_0^2}{m_i n_i}\,.
\end{align}
By further substituting the values of $\eps_0$ and $\eta_0$, and canceling out terms, we need to show that
\begin{align}
\frac{2 \, c_2^3 \gamma^3}{8\, \lg^3(1/\gamma)} \le 
\frac{c_2^3\,\gamma^3 }{\lg^3(1/\gamma)}\frac{n_0^3}{m_0 m_i^2} \;,
\end{align}
which always holds since $m_0,m_i \le n_0$.
\end{enumerate}
The proof is complete.
\end{proof}
}
\section{Unbalanced vertex expansion}\label{sec:algorithm-vertex-expansion}

In this section we prove \cref{thm:unbalanced-vertex-expansion}. We then use it to obtain \cref{thm:alg-correlation-malicious-vs-non-expanding} in \cref{sec:deferred-proofs-single-iteration}.

\restatetheorem{thm:unbalanced-vertex-expansion}


\cref{thm:unbalanced-vertex-expansion} can be seen as an extension of \cite{feige2005improved} and relies on the sum-of-squares framework. We devote most of the section to it.
Necessary background on sum-of-squares can be found in \cref{sec:background}.
The algorithm behind \cref{thm:unbalanced-vertex-expansion} consists of two steps. In the first, we obtain a degree-$4$ pseudo-distribution that is consistent with a specific, natural relaxation of our vertex expansion problem. In the second, we carefully round this pseudo-distribution into an integral solution.

\paragraph{Relaxation for vertex expansion}
Let $G=(V,E)$ be an $n$-vertex graph and let $0<m\leq n/2$ be an integer.
A vertex separator $U\subseteq V$ is a set such that $V\setminus U$ results into two non-empty disconnected pieces $S\subseteq V\setminus U$ and $V\setminus (S\cup U)\,.$
We adopt the convention that $\card{S}\leq \card{V\setminus (U\cup S)}\,.$
Notice that by construction 
\begin{align}
    \phi(S)\coloneq \frac{\card{U}}{\card{S}\cdot\card{V\setminus S}}\,.
\end{align}
We consider the system of polynomial inequalities~\ref{eq:sdp-vertex-sparsity} below, which captures the problem of finding a set of minimum vertex expansion with a given fixed size. 
For every $i\in V$ we introduce two variables $x_i,y_i\,;$ these should be understood as indicators of whether $i$ is in $S$ or in $V\setminus (U\cup S)\,,$ or none of the two. We also assume $m\leq\bar{m}\leq n/2$ to be the size of the set---or of its complement---with minimum vertex expansion among all sets of size in $[m,n-m]\,.$ Since guessing the at most $n$ possible values for $\bar m$ and picking the best one can increase the running time by at most a linear multiplicative factor, this assumption can be made without loss of generality.  

\begin{align}\label{eq:sdp-vertex-sparsity}\tag{$\cP_{\bar{m}}(G)$}
    \min \sum_{i\in V} 1-x_i^2-y_i^2 \qquad \textnormal{s.t.}\qquad
    \left\{
    \begin{aligned}
        &\forall i \in V\,,& x_i^2=x_i&\\
        &\forall i \in V\,,& y_i^2=y_i&\\
        &\forall i \in V\,,& x_iy_i=0&\\
        &\forall ij \in E\,,& x_iy_j=0&\\
        &&\sum_{\ij\in V} (x_i-x_j)^2 =  \bar{m}(n-\bar{m})
    \end{aligned}
    \right\}=\colon
\end{align}

The constraints $\Set{x_i^2=x_i}$ and $\Set{y_i^2=y_i}$ enforce feasible solutions to be Boolean. The constraint $\Set{x_iy_i=0}$ is used to enforce that no vertex is both in $S$ and $V\setminus (U\cup S)\,.$ As for every edge $\ij \in E$ we have the constraint $\Set{x_iy_j}=0,$  in any  feasible solution the set of vertices $i$ with $x_i=y_i=0$ must be a vertex separator in $G.$
Finally, the last constraint controls the size of $S$ and its complement.

Crucially, any degree-$4$ pseudo-distribution $\mu$ satisfying \ref{eq:sdp-vertex-sparsity} defines a pseudo-metric $d_\mu$ of negative type over $V$ such that $d_\mu(i,j)=\tilde{\E}_\mu\Brac{(x_i-x_j)^2}$ (see \cref{sec:background}). Hence, our rounding algorithm leverages the structure of this pseudo-metric, building on previous work of \cite{feige2005improved} and on the celebrated structure theorem of \cite{arora2009expander}.

\subsection{Large sets with small vertex expansion}\label{sec:region-growing}
As remarked above, any degree-$4$ pseudo-distribution $\mu$ satisfying \ref{eq:sdp-vertex-sparsity} defines a pseudo-metric $d_\mu$ of negative type over $V$ such that $d_\mu(i,j)=\tilde{\E}_\mu\Brac{(x_i-x_j)^2}$ (see \cref{sec:background}).
For sets $X, Y\subseteq V$ we let $d_\mu(X,Y)=\min_{i\in X, j\in Y}\tilde{\E}_\mu\Brac{(x_i-x_j)^2}$ by a slight abuse of notation.

We introduce a notion of well-separated sets in a metric space, which we will use for the pseudo-metric defined by $\mu\,.$
\begin{definition}[Well-separated sets]\label{def:well-separated}
    Given a finite pseudo-metric space $(V,d)\,.$ We say that $X\,, Y\subseteq V$ are $\Delta$-separated w.r.t.\ $d$ if $\Delta\leq d(X,Y)\leq 2\,.$
\end{definition}
We let $r_\mu\coloneq\tfrac{1}{n^2}\sum_{j,j'\in V}d_\mu(j,j')$ and we also write $\frac{d_\mu}{r_\mu}:V\times V\to\R$ for the pseudo-metric obtained rescaling $d_\mu$ by $1/r_\mu\,.$ We let $B_\mu(i,q)$ be the ball of radius $q$ around $i\in V$ induced by $d_\mu\,.$
From this point forward, we drop the subscripts from $d_\mu$ and $r_\mu$ when the context is clear.

Our first ingredient is the following generalization of \cite[Proposition~3.10]{feige2005improved}, which relates the distance in the pseudo-metric with the objective value of the solution.
\begin{lemma}\label{lem:distance-vs-cost}
    Let $\mu$ be a degree-$4$ pseudo-distribution consistent with \ref{eq:sdp-vertex-sparsity}. Then, for every $\ij\in E$
    \begin{align*}
        d(i,j)\leq 2\tilde{\E}\Brac{1-x_i^2-x_j^2} + 2\tilde{\E}\Brac{1-y_i^2-y_j^2}\,.
    \end{align*}
\end{lemma}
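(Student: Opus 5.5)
We want, for every edge $ij\in E$,
\begin{align*}
d(i,j)\le 2\tilde{\E}\Brac{1-x_i^2-x_j^2}+2\tilde{\E}\Brac{1-y_i^2-y_j^2}.
\end{align*}
The plan is to show that the polynomial
\begin{align*}
2(1-x_i^2-x_j^2)+2(1-y_i^2-y_j^2)-(x_i-x_j)^2
\end{align*}
is, modulo the constraints of \ref{eq:sdp-vertex-sparsity}, a sum of squares of degree at most $4$. Linearity of $\tilde{\E}$ and positivity of $\mu$ on squares then give the inequality directly.

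First we reduce using the Boolean constraints. Since $x_i^2=x_i$ and $y_i^2=y_i$ hold under $\mu$ (degree $2$, so multiplying by degree-$\le 2$ polynomials is allowed), we have $(x_i-x_j)^2 \equiv x_i+x_j-2x_ix_j$. The right-hand side becomes $4-2x_i-2x_j-2y_i-2y_j$.

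It then suffices to prove
\begin{align*}
4-3x_i-3x_j-2y_i-2y_j+2x_ix_j \succeq 0
\end{align*}
modulo the ideal. The useful constraints are $x_iy_i=0$, $x_jy_j=0$, $x_iy_j=0$, and $x_jy_i=0$; the last holds since $ji\in E$ and the edge constraint is symmetric. The natural route is to write this as a combination of products of the "slack" indicators $1-x_i-y_i$ and $1-x_j-y_j$, plus $x_i$, $x_j$, $y_i$, $y_j$. Each of these is idempotent modulo the ideal (for example, $(1-x_i-y_i)^2\equiv 1-x_i-y_i$ using $x_iy_i=0$), so each is a square.

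Concretely, I will expand
\begin{align*}
(1-x_i-y_i)+(1-x_j-y_j)+(1-x_i-y_i)(1-x_j-y_j).
\end{align*}
Using $x_iy_j=x_jy_i=0$, the product equals $1-x_i-y_i-x_j-y_j+x_ix_j+y_iy_j$. The total is therefore $3-2x_i-2x_j-2y_i-2y_j+x_ix_j+y_iy_j$.

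Comparing with the target leaves a residual
\begin{align*}
1-x_i-x_j+x_ix_j-y_iy_j=(1-x_i)(1-x_j)-y_iy_j.
\end{align*}
This residual needs fixing. Since $y_i y_j(1-x_i)(1-x_j)\equiv y_iy_j$, the term $y_iy_j$ can be absorbed into $(1-x_i)(1-x_j)$, which is what makes the identity close. I will therefore use
\begin{align*}
(1-x_i)(1-x_j)-y_iy_j\equiv (1-x_i)(1-x_j)(1-y_iy_j).
\end{align*}
Each factor here is idempotent, so the product is a square of a degree-$\le 2$ polynomial, e.g.\ $\bigl((1-x_i)(1-x_j)(1-y_iy_j)\bigr)^2$. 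After reduction, that square stays within the degree-$4$ budget.

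The main obstacle is bookkeeping: each reduction must be justified within degree $4$ using only the listed constraints. Summing the resulting squares yields the sum-of-squares certificate, and applying $\tilde{\E}_\mu$ finishes the proof.
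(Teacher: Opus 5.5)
Your reduction to $P\equiv 4-3x_i-3x_j-2y_i-2y_j+2x_ix_j$ is correct. So is the decomposition $P\equiv s_i+s_j+s_is_j+Q$, where $s_i=1-x_i-y_i$, $s_j=1-x_j-y_j$ and $Q=(1-x_i)(1-x_j)-y_iy_j$. The terms $s_i$, $s_j$, $s_is_j$ are also fine: each is congruent to the square of a degree-$\le 2$ representative using only ideal multiples of degree $\le 4$.

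The gap is the certificate for $Q$, which is exactly the bookkeeping you flagged. The polynomial $(1-x_i)(1-x_j)(1-y_iy_j)$ has degree $4$, so $\bigl((1-x_i)(1-x_j)(1-y_iy_j)\bigr)^2$ has degree $8$. A degree-$4$ pseudo-distribution only guarantees $\tilde{\E}[h^2]\ge 0$ for $\deg h\le 2$. It only guarantees $\tilde{\E}[f\,q]=0$ for a quadratic constraint $f$ when $\deg q\le 2$. It says nothing about the pseudo-expectation of a degree-$8$ square, and reducing that square back to $Q$ requires multiplying the constraints by polynomials of degree up to $6$. Positivity has to be applied to the square \emph{before} reduction, so ``after reduction it stays within the budget'' does not rescue the step. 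In particular, idempotence of the three factors of a degree-$4$ representative does not by itself give a degree-$4$ certificate.

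The repair is to square $Q$ itself, using three reductions, each with multipliers of degree $\le 2$:
$(1-x_i)^2(1-x_j)^2\equiv(1-x_i)(1-x_j)$;
$y_iy_j(1-x_i)(1-x_j)\equiv y_iy_j$, using $x_iy_i\cdot y_j$, $x_jy_j\cdot y_i$ and $x_iy_i\cdot x_jy_j$;
$y_i^2y_j^2\equiv y_iy_j$.
Hence $Q^2\equiv Q$ and $\tilde{\E}[Q]=\tilde{\E}[Q^2]\ge 0$. Alternatively, use $Q\equiv(1-x_i-y_i)(1-x_j)+y_i(1-y_j)$, a sum of products of two linear idempotents.

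With this fix your route gives a valid degree-$4$ certificate, but it is heavier than the paper's. The paper writes $x_i-x_j=(x_i+y_i-1)-(x_j+y_i-1)$ and applies $(a-b)^2\le 2a^2+2b^2$. It then reduces $a^2,b^2$ using Booleanity and $x_iy_i=x_jy_i=0$, and averages with the same bound built from $y_j$. Equivalently, $P\equiv\tfrac12(x_i+x_j+2y_i-2)^2+\tfrac12(x_i+x_j+2y_j-2)^2$ modulo degree-$2$ multiples of the constraints. That certificate uses only squares of linear forms, needs no idempotent decomposition, and is already valid at degree $2$.
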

\begin{proof}
        Observe that
        \begin{align}
            d(i,j)&=\tilde{\E}\Brac{\Paren{x_i-x_j}^2}\\
            &=\tilde{\E}\Brac{\Paren{(x_i+y_i-1) - (x_j+y_i-1)}^2}\\
            &\leq 2\tilde{\E}\Brac{\Paren{x_i+y_i-1}^2+\Paren{x_j+y_i-1}^2}\\
            &=2\tilde{\E}\Brac{x_i^2+y_i^2+1-2x_i-2y_i+x_j^2+y_i^2+1-2x_j-2y_i}\\
            &=2\tilde{\E}\Brac{2-x_i^2-x_j^2-2y_i^2}\,.
        \end{align}
        Similarly we have
        \begin{align}
            d(i,j)\leq 2\tilde{\E}\Brac{2-x_i^2-x_j^2-2y_j^2}\,.
        \end{align}
        Combining the two we get
        \begin{align}
            d(i,j)&\leq \tilde{\E}\Brac{4-2x_i^2-2y_i^2-2x_j^2-2y_j^2}=2\tilde{\E}\Brac{2-x_i^2-y_i^2-x_j^2-y_j^2}
        \end{align}
        as desired.
    \end{proof}

Our second ingredient is a characterization of pseudo-distributions consistent with \ref{eq:sdp-vertex-sparsity}, which relies on the following notion of well-spreadness.
\begin{definition}[Well-spread]\label{def:well-spread}
    Let $\mu$ be a degree-$4$ pseudo-distribution consistent with \ref{eq:sdp-vertex-sparsity}. We say $\mu$ is \emph{well-spread} if there exists $i\in V$ such that,
    \begin{align*}
        \sum_{j,j' \in B(i,2r)} d(j,j')\geq 
         \frac{r\cdot n^2}{16} = \sum_{j,j' \in V} \frac{d(j,j')}{16}\,.
    \end{align*}
\end{definition}
Whenever $\mu$ is well-spread, after an appropriate rescaling of the pseudo-metric induced by $\mu$, we can use it to find sets $X,Y$ that are $O(1/\sqrt{\log n})$-separated.

\begin{lemma}\label[lemma]{lem:well-separated}
     Let $\mu$ be a degree-$4$ pseudo-distribution consistent with \ref{eq:sdp-vertex-sparsity}. If $\mu$ is well-spread then there exists $X,Y\subseteq V$ satisfying:
    \begin{enumerate}[(i)] \setlength{\itemsep}{0pt}
        \item $\min\Set{\card{X}\,,\card{Y}}\geq c\cdot n\,,$ for some universal constant $c>0\,,$
        \item $2\geq \frac{d}{r}(X,Y)\geq c'/\sqrt{\log n}\,,$ for some universal constant $c'>0\,.$
    \end{enumerate} 
    Moreover, there exists a Las Vegas polynomial-time algorithm that finds $X,Y$.
\end{lemma}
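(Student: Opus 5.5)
We will deduce \cref{lem:well-separated} from the structure theorem of \cite{arora2009expander} (ARV), in the form used by \cite{feige2005improved}. That theorem says roughly the following. Let $(W,d)$ be an $\ell_2^2$ (negative-type) metric on $N$ points, with all points in a ball of radius $O(1)$ and average pairwise distance $\Omega(1)$. Then there is a randomized polynomial-time procedure that, with constant probability, outputs $X,Y\subseteq W$ with $\card{X},\card{Y}\ge \Omega(N)$ and $d(X,Y)\ge \Omega(1/\sqrt{\log N})$. The plan is therefore to manufacture such a well-behaved point set from the well-spreadness hypothesis, after rescaling by $r$.

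First, recall from \cref{sec:background} that $d$ is a pseudo-metric of negative type. Rescaling by $1/r$ preserves this, so $d/r$ is again of negative type. By \cref{def:well-spread}, there is a vertex $i$ with $\sum_{j,j'\in B(i,2r)} d(j,j')\ge rn^2/16$. Set $W=B(i,2r)$. In the metric $d/r$, every point of $W$ is within distance $2$ of $i$, so the diameter of $W$ is at most $4$. Each pair contributes at most $4r$, so $\card{W}^2\cdot 4r\ge rn^2/16$, which gives $\card{W}\ge n/8$. The average distance inside $W$ is at least $\frac{rn^2/16}{\card{W}^2}\ge r/16$, i.e.\ at least $1/16$ in $d/r$. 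So $W$ is a set of $\Omega(n)$ points in a ball of radius $O(1)$ with average distance $\Omega(1)$. This is exactly the ARV premise, with $N=\card{W}$ and $\log N=\Theta(\log n)$.

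Next, we apply the ARV theorem to $(W,d/r)$ and obtain $X,Y\subseteq W$ with $\min\Set{\card{X},\card{Y}}\ge c\card{W}\ge (c/8)n$ and $\frac{d}{r}(X,Y)\ge c'/\sqrt{\log n}$. Both sets are checkable in polynomial time: we compute their sizes and the minimum pairwise distance from the SDP/pseudo-moment values. If the check fails, we rerun the procedure. Since each run succeeds with constant probability, the expected number of repetitions is $O(1)$, which gives a Las Vegas algorithm. We also need $i$ itself, which we find by trying all $n$ centers.

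It remains to enforce the upper bound $\frac{d}{r}(X,Y)\le 2$ required by \cref{def:well-separated}. Since $X,Y\subseteq B(i,2r)$, the triangle inequality only gives $d(X,Y)\le 4r$. To fix this, we keep only the $Y$-side and note that for any $x\in X$, $y\in Y$ we have $d(x,y)\le d(x,i)+d(i,y)$. If $\frac{d}{r}(X,Y)>2$, we shrink: replace $X$ by $X\cap B(i,r)$ or use a ball of radius $r$ from the start. We run the counting above with radius $2r$ but choose $W$ as half-balls, losing only constants. Alternatively, we simply truncate the metric at 2, i.e.\ replace $d/r$ by $\min\{d/r,2\}$; this affects only pairs at distance above 2 and does not affect separation of order $1/\sqrt{\log n}$.

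The main obstacle is the bookkeeping that keeps $W$ large while applying ARV, since the constants $c,c'$ must survive the size loss from restricting to the ball. The ARV invocation itself we take as a black box from \cite{arora2009expander,feige2005improved}.
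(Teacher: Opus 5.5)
Your main line is the same as the paper's. You restrict to $W=B(i,2r)$, rescale by $1/r$, check that $W$ lies in a radius-$2$ ball with $\Omega(\card{W}^2)$ total distance, and invoke the ARV structure theorem (\cref{lem:structure-theorem}). Your explicit bound $\card{W}\ge n/8$ (via diameter at most $4$), the average-distance bound $\ge 1/16$, and the Las Vegas wrapper (verify the sizes and $\frac{d}{r}(X,Y)$ from the pseudo-moments, rerun on failure) are correct. They fill in details the paper leaves implicit.

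The gap is your last paragraph, on the upper bound $\frac{d}{r}(X,Y)\le 2$. The paper gets this bound by building it into its statement of the structure theorem, through \cref{def:well-separated}. You use ARV in its standard form, which gives only the lower bound, so you need a separate argument, and none of your three patches supplies one:
\begin{itemize}
\item Intersecting $X$ with $B(i,r)$ can leave $X$ empty, since nothing places $\Omega(n)$ points of $X$ in the smaller ball. Even when it is nonempty, it only gives distance at most $1+2=3$.
\item Starting from a ball of radius $r$ has no support: well-spreadness controls only the mass and spread of $B(i,2r)$.
\item Replacing $d/r$ by $\min\{d/r,2\}$ changes the metric, whereas the lemma is a claim about $\frac{d}{r}(X,Y)$ itself.
\end{itemize}

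A simple correct fix is to add the center $i$ to one side, using that every point of $W$ is within $\frac{d}{r}$-distance $2$ of $i$. Let $\Delta$ be the ARV separation.
\begin{itemize}
\item If $\frac{d}{r}(i,Y)\ge\Delta/2$, replace $X$ by $X\cup\{i\}$. The new distance is $\min\{\frac{d}{r}(X,Y),\frac{d}{r}(i,Y)\}\in[\Delta/2,2]$.
\item Otherwise, pick $y_0\in Y$ with $\frac{d}{r}(i,y_0)<\Delta/2$ and replace $Y$ by $Y\cup\{i\}$. By the triangle inequality (\cref{cor:triangle-eq}), every $x\in X$ has $\frac{d}{r}(x,i)\ge\frac{d}{r}(x,y_0)-\frac{d}{r}(i,y_0)>\Delta/2$, and also $\frac{d}{r}(x,i)\le 2$. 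So the distance again lies in $[\Delta/2,2]$.
\end{itemize}
The set sizes only grow, so $c$ is unchanged and $c'$ is merely halved.
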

\cref{lem:well-separated} is a consequence of the structure theorem of \cite{arora2009expander}. We present a proof in \cref{sec:well-spreadness} and directly use it here.
Whenever the given pseudo-distribution is not well-spread, its induced pseudo-metric must have a small ball containing a large fraction of the elements in $V$. 
\begin{lemma}\label[lemma]{lem:concentrated-vs-well-spread}
     Let $\mu$ be a degree-$4$ pseudo-distribution consistent with \ref{eq:sdp-vertex-sparsity}. If $\mu$ is not well-spread then there exists $i\in V$ such that $\card{B(i,r/4)}\geq n/4\,.$
\end{lemma}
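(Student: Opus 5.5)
We argue by contraposition: assume that for every $i\in V$ we have $\card{B(i,r/4)}<n/4$, and show that $\mu$ is then well-spread in the sense of \cref{def:well-spread}. The witness $i$ will be a vertex of approximately minimal average distance to the rest of $V$.

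First we pick the centre. Since $\sum_{i}\sum_{j} d(i,j)=rn^2$, some $i\in V$ satisfies $\sum_{j\in V} d(i,j)\le rn$. Markov's inequality then gives $\card{\Set{j : d(i,j)> 2r}}< n/2$, so $\card{B(i,2r)}\geq n/2$. Write $B\coloneq B(i,2r)$.

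Next we lower bound the sum of pairwise distances inside $B$. Fix $j\in B$. By assumption $\card{B(j,r/4)}<n/4$, so at least $\card{B}-n/4\ge n/4$ vertices $j'\in B$ satisfy $d(j,j')> r/4$. Summing over $j\in B$ gives
\begin{align*}
    \sum_{j,j'\in B} d(j,j') \;\ge\; \card{B}\cdot\frac{n}{4}\cdot\frac{r}{4}\;\ge\;\frac{n}{2}\cdot\frac{n}{4}\cdot\frac{r}{4}=\frac{rn^2}{32}.
\end{align*}

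This is off by a factor of two from the required $rn^2/16$, so the constants must be tightened; this is the main (if minor) obstacle. One natural fix is to count ordered pairs: the sum over $j,j'\in B$ includes both $(j,j')$ and $(j',j)$, but the computation above already ranges over ordered pairs, so no factor is gained there. Instead we would use a sharper Markov step. Choose $i$ minimizing $\sum_j d(i,j)$, so that $\sum_j d(i,j)\le rn$, and bound $\card{\Set{j: d(i,j)>2r}}< n/2$ more carefully. If that is insufficient, we rerun the argument with radius $2r$ replaced by a slightly larger radius via the triangle inequality. A simpler route is to observe that every $j'\in B$ with $d(j,j')>r/4$ contributes more than $r/4$, and that the number of such $j'$ is at least $\card{B}-\card{B(j,r/4)}$. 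Taking $\card{B}$ larger, for instance $\card{B}\ge 3n/4$ via radius $4r$, would give the missing factor, but this requires the ball in the definition to match. If the constant cannot be closed exactly, we would note that all downstream uses (namely \cref{lem:well-separated}, which relies on \cite{arora2009expander}) only need an $\Omega(rn^2)$ spread, so adjusting $1/16$ to $1/32$ in \cref{def:well-spread} is harmless. We also expect that the paper's own proof picks the constants so that exactly this counting argument closes.
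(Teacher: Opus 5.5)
Your argument is the paper's argument. You pick $k$ with $\frac1n\sum_j d(j,k)\le r$ and use Markov to get $\card{B(k,2r)}\ge n/2$. Then you note that each $j$ in this ball has more than $n/4$ ball-mates at distance $>r/4$.

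Your bookkeeping is right: this yields only $rn^2/32$. The paper's proof contains exactly the slip you were worried about. Its last step, $\sum_{i\in B(k,2r)}\frac r4\cdot\frac n4\ge\frac{rn^2}{16}$, would require $\card{B(k,2r)}\ge n$. So your closing expectation that the paper tunes the constants so that this count closes is mistaken. Neither argument proves the lemma with the constant $1/16$ of \cref{def:well-spread}.

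The ad hoc repairs you list do not help and should be removed:
\begin{itemize}
\item Taking $k$ to minimize $\sum_j d(j,k)$ gives the same Markov bound.
\item Even feeding the hypothesis $\card{B(k,r/4)}<n/4$ into the Markov step only improves it to $\card{B(k,2r)}>15n/28$. That gives roughly $0.038\,rn^2$, still below $rn^2/16$.
\item Changing the radius to $4r$ changes the definition rather than proving the lemma.
\end{itemize}

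The right fix is your fallback, stated as the actual resolution: replace $rn^2/16$ by $rn^2/32$ in \cref{def:well-spread}. Nothing downstream breaks.
\begin{itemize}
\item In \cref{lem:well-separated}, the space $(B(i,2r),d/r)$ still has a point within distance $2$ of all others and total spread $\Omega(n^2)$.
\item $\card{B(i,2r)}=\Omega(n)$ still follows, because distances inside the ball are at most $4r$, so $4r\,\card{B(i,2r)}^2\ge rn^2/32$.
\item The dichotomy in \cref{alg:rounding-vertex-expansion} survives. Your contrapositive shows that if no ball $B(i,r/4)$ has $n/4$ vertices, then the $1/32$ condition holds.
\end{itemize}
With that modification your proof is complete and correct.
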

 \begin{proof}
     Suppose that no such $i$ exists.
     Pick $k\in V$ such that $\tfrac{1}{n}\sum_{j\in V}d(j,k)\leq r\,.$ Such $k$ must exists by definition of $r\,.$
     By Markov's inequality then $\card{B(k, 2r)}\geq n/2\,.$
     It follows that
     \begin{align}
         \sum_{i,j\in B(k,2r)} d(i,j)\geq \sum_{i\in B(k,2r)}\frac{r}{4} \card{B(k,2r)\setminus B(i, r/4)} > \sum_{i\in B(k,2r)}\frac{r}{4} \cdot\frac{n}{4}\geq \frac{r\cdot n^2}{16}
     \end{align}
     which is a contradiction since $\mu$ is not well-spread.
 \end{proof}

Our third ingredient is the rounding algorithm (Algorithm~\ref{alg:rounding-vertex-expansion}) below.

\begin{algorithm}[t!]
\caption{\textbf{Rounding}}
 \label{alg:rounding-vertex-expansion}
\textbf{Input}: Graph $G$, degree-$4$ pseudo-distribution $\mu$ consistent with \ref{eq:sdp-vertex-sparsity}.\\
\textbf{Output:} Set $S$.
\BlankLine

    \uIf{$\mu$ is well-spread}{
            Let $X,Y$ be the $(c'r/\log n)$-separated sets found by the algorithm of \cref{lem:well-separated}.\;
            
            Let $\Delta = c' r/\sqrt{\log n}$.\;
    }\Else{
                Let $i^*=\argmax_i\card{B(i, r/4)}$ and $X=B(i^*, r/4)$.\;
                
                Let $\Delta = 1$.\;
            }

\For{$i\in V$ with $d(i,X)< \Delta$}{

    Find the minimum vertex separator $U_i$ between $\hat{X}$ and $V\setminus \hat{X}$.\;
    
    Let $(S_i, U_i, R_i)$ be the resulting partition where $\card{S_i}\leq\card{R_i}$.\;

    \uIf{$\card{S_i}<c\bar{m}/4$}{

        Let $S_i$ be the largest set between $\hat{X}$ and $V\setminus\hat{X}$.\;
    }

}
    Return the $S_i$ minimizing the vertex expansion among all candidates.\;
\end{algorithm}

Our last ingredient is the following classic theorem on vertex connectivity.

\begin{theorem}[{\citealp{menger1927}}]\label{thm:menger-theorem}
A graph $G$ contains at least $k$ vertex-disjoint paths between two non-adjacent vertices $i,j\in V(G)$ if and only if every vertex cut that separates $i$ from $j$ has size at least $k$.
\end{theorem}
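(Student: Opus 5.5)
We prove the two directions separately. The ``only if'' direction is a direct counting argument. For the ``if'' direction we reduce to the max-flow/min-cut theorem on an auxiliary directed network, obtained by splitting every internal vertex into an arc of unit capacity. Here ``vertex-disjoint paths'' means internally vertex-disjoint $i$--$j$ paths, and a vertex cut separating $i$ from $j$ is a set $U\subseteq V(G)\setminus\{i,j\}$ such that $i$ and $j$ lie in different components of $G\setminus U$.

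\emph{Easy direction.} Suppose $P_1,\dots,P_k$ are internally vertex-disjoint $i$--$j$ paths and let $U$ be any vertex cut separating $i$ from $j$. Since $i$ and $j$ are non-adjacent, every $P_\ell$ has at least one internal vertex. Since $U$ separates $i$ from $j$, every $P_\ell$ must meet $U$, necessarily in an internal vertex. Because the interiors of the $P_\ell$ are pairwise disjoint, the vertices obtained this way are distinct, so $\card{U}\ge k$.

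\emph{Hard direction: the network.} Assume every vertex cut separating $i$ from $j$ has size at least $k$. Build a directed network $N$ as follows.
\begin{itemize}
\item Replace every $v\notin\{i,j\}$ by two nodes $v^-,v^+$ joined by an arc $(v^-,v^+)$ of capacity $1$.
\item For every edge $uv\in E(G)$, add arcs $(u^+,v^-)$ and $(v^+,u^-)$ of capacity $+\infty$. Here we identify $i^+=i^-=i$ (the source) and $j^+=j^-=j$ (the sink), and drop arcs that enter $i$ or leave $j$.
\end{itemize}

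\emph{Hard direction: min cut equals min vertex cut.} Since $i$ and $j$ are non-adjacent, $N$ has no infinite-capacity arc from $i$ to $j$. Hence the cut separating $i$ from everything else, which crosses only the finite arcs $(v^-,v^+)$ for $v\in N(i)$, has finite capacity, and so every \emph{minimum} $i$--$j$ cut $(A,\bar A)$ crosses only unit arcs $(v^-,v^+)$. Let $U$ be the set of vertices $v$ whose split arc crosses the cut, so the cut capacity is $\card{U}$.

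We claim $U$ separates $i$ from $j$ in $G$. Suppose instead there is an $i$--$j$ path in $G\setminus U$. Lift it to the directed path in $N$ that alternates split arcs and edge arcs. This directed path must leave $A$ somewhere. It cannot leave through an infinite arc (the cut has none), and it cannot leave through a split arc $(v^-,v^+)$ with $v\in U$ (the path avoids $U$). Since the cut crosses no other arcs, this is a contradiction. Therefore the minimum cut capacity equals $\card{U}\ge k$.

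\emph{Hard direction: extracting the paths.} By the max-flow/min-cut theorem with integral capacities, there is an integral $i$--$j$ flow of value at least $k$. Decompose it into $k$ directed $i$--$j$ paths plus cycles, and discard the cycles. Each internal vertex $v$ of $G$ can be entered in $N$ only through $v^-$, and the unique arc out of $v^-$ is $(v^-,v^+)$, which has capacity $1$. So at most one path uses $v$, which gives internal disjointness. Contracting each pair $v^-,v^+$ back to $v$ turns the directed paths into $k$ internally vertex-disjoint $i$--$j$ walks in $G$. Shortcutting repeated vertices turns each walk into a path without creating new intersections.

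The step needing the most care is the claim that a minimum cut uses only split arcs and that the resulting $U$ really is a vertex separator in $G$. The finiteness argument above, which is where non-adjacency of $i$ and $j$ enters, is what settles it.

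If one prefers to avoid flows, a fallback is the standard induction on $\card{E(G)}$. Take an edge $e=uv$ with $u,v\notin\{i,j\}$. If $G/e$ has no separator of size smaller than $k$, apply the inductive hypothesis to $G/e$. Otherwise a small separator in $G/e$ yields a separator $T$ of size exactly $k$ in $G$, and one then splits $G$ along $T$ and glues the $k$ paths obtained on each side.
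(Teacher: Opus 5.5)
The paper does not prove this statement. It quotes Menger's theorem as a classical result and uses it as a black box in the analysis of the rounding algorithm, so your proposal supplies a proof the paper omits. Your route is the standard one: split each internal vertex into a unit-capacity arc and invoke the integral max-flow/min-cut theorem. It is correct.

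\begin{itemize}
\item The easy direction is fine.
\item The lifted-path argument correctly shows that the split arcs crossing a finite minimum cut give an $i$--$j$ separator.
\item The unit capacities on split arcs give internal disjointness of the flow paths. This works because $(v^-,v^+)$ is the only out-arc of $v^-$ and the only in-arc of $v^+$.
\end{itemize}

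Compared with the bare citation, your argument is constructive. The same network computes a minimum vertex separator in polynomial time, which the rounding algorithm needs. It also extends to the set-to-set form in which the paper actually applies the theorem (separating $\hat X$ from $V\setminus\hat X$): split every vertex and add a super-source and a super-sink.

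Two small points.
\begin{itemize}
\item The cut $A=\{i\}$ is in general not finite, since every arc leaving $i$ is an infinite arc $(i,v^-)$. The finite cut you actually describe is $A=\{i\}\cup\{v^- : v\in N(i)\}$. Non-adjacency is what guarantees $v\neq j$, so each $v^-$ exists and no infinite arc leaves $A$.
\item The shortcutting step is unnecessary. A simple directed path in $N$ that enters $v^-$ must immediately use $(v^-,v^+)$, so contraction already gives a path.
\end{itemize}

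Your fallback induction sketch is not complete as written. Gluing across $T$ needs the stronger set-to-set version of the inductive hypothesis, since the two halves are $i$--$T$ and $T$--$j$ path systems. The flow proof does not depend on it.
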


Finally, we are ready to prove \cref{thm:unbalanced-vertex-expansion} by combining the technical results presented thus far.

{\renewcommand{\proofname}{Proof of \cref{thm:unbalanced-vertex-expansion}}
\medskip 
\begin{proof}
    Given our input graph $G$, we may assume we know $\bar{m}$ since there are less than $n$ possible values to try. Then we can compute in polynomial time a degree-$4$ pseudo-distribution $\mu$ of minimum cost consistent with \ref{eq:sdp-vertex-sparsity}. 

    It remains to analyze the rounding. To this end suppose step (a) of Algorithm~\ref{alg:rounding-vertex-expansion} is replaced as follows: pick $t$ be chosen uniformly at random from the interval $[0,\Delta)$, where $\Delta$ is defined at step~(1), and let $\hat{X}\coloneq \{j\in V\,|\, d(j,X)\leq t \}\,.$ 
    Let $S$ be the set returned at step (c). We will then relate the analysis under this modification with that of Algorithm~\ref{alg:rounding-vertex-expansion}.\silvio{Question marks to be fixed}
   
    Suppose first that $\mu$ is well-spread.
    By \cref{lem:well-separated}, we only need to show that Algorithm~\ref{alg:rounding-vertex-expansion} finds a set $S$ of size $m\leq \card{S}\leq n-m$ and with vertex expansion $O(1/\Delta)\phi_m(G)\,.$
    Since $\min\{\card{X},\card{Y}\}\geq c\cdot \bar{m}$ for some constant $0<c\le 1$ and since the two sets are $\Delta=(c'r/\sqrt{\log n})$-separated w.r.t.\ the pseudo-metric $d$ we must have $\min\{\card{\hat{X}}\,,\card{V\setminus\hat{X}}\}\geq c\cdot \bar{m}\,.$
    So let $(S,U,R)$ be the partition obtained in step~(b). If $\card{S}\leq c\bar{m}/4$ then $\card{U}\geq 3c\bar{m}/4$  and it follows that the largest between $\hat{X}$ and $V\setminus \hat{X}$ has expansion bounded from above by
    \begin{align}
        \frac{4\card{U}}{c\bar{m}(n-c\bar{m})}\leq \frac{O(\card{U})}{\bar{m}(n-\bar{m})}\,.
    \end{align}
    Else, we have
    \begin{align}
        \phi(S)\leq \frac{\card{U_i}}{\tfrac{c\bar{m}}{4}(n-\tfrac{c\bar{m}}{4})}\leq \frac{O(\card{U})}{\bar{m}(n-\bar{m})}\,.
    \end{align}
    To conclude the proof under the well-spreadness assumption, it remains to argue that $\E_t\Brac{\card{U}}\leq O(1)\sum_{i\in V}\tilde{\E}\Brac{1-x_i^2-y_i^2}\,.$
    So order the vertices in increasing distance from $X\,,$ breaking ties arbitrarily. 
    Let $\cE_i$ be the event that $d(i,X)\leq t \leq d(i+1, X)$ and let $U_i$ be the vertex separator found by the algorithm when  $\cE_i$ is verified.
    We show that for all $i\in V\,,$ $\bbP\Paren{\cE_i}\cdot\card{U_i}\leq 4\sum_{i\in V}\tilde{\E}\Brac{1-x_i^2-y_i^2}$ which implies the desired bound in expectation.
    
    Suppose $\cE_i$ is verified. Because $U_i$ is a minimum vertex-separator, by \cref{thm:menger-theorem} there must exist exactly $\card{U}$ ordered pairs $(j,j')$ such that $j<j'$ and $j\in \hat{X}$ but $j'\notin\hat{X}\,.$ Moreover, we must have $d(j, X)\leq d(i, X)\leq d(i+1,X)\leq d(j',X)\,.$ Let $P_i$ be the set of such pairs.
    We have
    \begin{align}
        \bbP\Paren{\cE_i}\cdot\card{U_i}
        &=\sum_{(j,j')\in P_i}\bbP \Paren{\cE_i}\\
        &\leq \sum_{(j,j')\in P_i} \frac{1}{\Delta}\Abs{d(i,X)-d(i+1,X)}\\
        &\leq \sum_{(j,j')\in P_i} \frac{1}{\Delta}\Abs{d(j,X)-d(j',X)}\\
        &\leq \sum_{(j,j')\in P_i} \frac{d(j,j')}{\Delta}\\
        &\leq \frac{4}{\Delta}\sum_{j\in V}\tilde{\E}\Brac{1-x_j^2-y_j^2}\,,
    \end{align}
    where we used \cref{lem:distance-vs-cost} in the last step.
    The claim for the original algorithm follows since there must be a choice of $i$ for which $\card{U_i}\leq \E_t\Brac{\card{U}}.$

    Consider now the case in which $\mu$ is not well-spread.
    A similar analysis as before shows that $\E_t\Brac{\card{U}}\leq 4\sum_{j\in V}\tilde{\E}\Brac{1-x_j^2-y_j^2}\,.$
    Hence if $\card{S}\geq c\bar{m}/4$ for some constant $c>0$ then the argument is identical to the previous case. 

    Conversely, suppose the algorithm finds a partition $(S,U,R)$ with $\card{S}\leq c\bar{m}/4\,.$
    By \cref{lem:concentrated-vs-well-spread} we must have $\card{\hat{X}}\geq \card{X}\geq \Omega(n)$ and hence we only need to argue that $\card{V\setminus\hat{X}}\geq \Omega(\bar{m})$ since then the analysis may proceed as above.
    To this end notice that for any $j\,, \bbP\Paren{j\notin \hat{X}}=d(j,X)\,.$ Therefore it holds
    \begin{align}
        \E \card{\hat{X}}(n-\card{\hat{X}})\geq \card{X}\sum_{j\in V}d(j,X)\,.
    \end{align}
    By triangle inequality
    \begin{align}
        r \leq \frac{1}{n^2}\sum_{j,j'\in V}\Paren{d(j,X)+d(j',X)} +\frac{r}{2}
        = \frac{r}{2} + \frac{2}{n}\sum_{j\in V}d(j,X)\,.
    \end{align}
    This implies $\card{X}\sum_{j\in V}d(j,X)\geq \frac{r\cdot n\cdot \card{X}}{4}\geq \Omega(r\cdot n^2)= \Omega(\bar{m}(n-\bar{m})).$
    To conclude the proof, it is enough to observe that by Markov's inequality $\bbP\Paren{\card{U}\leq z\cdot \E_t \Brac{\card{U}}\cdot \sqrt{\log n}}\geq 1-1/(z\cdot \sqrt{\log n})$ and by the Paley-Zygmund inequality $\bbP\Paren{\card{V\setminus \hat{X}}\geq \Omega(\bar{m})}\ge \bar{m}/n$ which means the two events have non-empty intersection for $z\geq n/(\bar{m}\cdot\sqrt{\log n})$ as desired.
\end{proof}}

\subsection{Well-spread sets must be well-separated: proof of \cref{lem:well-separated}}\label{sec:well-spreadness}
\cref{lem:well-separated}  appears implicitly in \cite{arora2009expander} and can be seen as a corollary of the following structure theorem. We direct the unfamiliar reader to \cref{sec:background}.

\begin{lemma}[\citealp{arora2009expander}]\label[lemma]{lem:structure-theorem}
    Let $d$ be a pseudo-metric of negative type over $V$ satisfying:
    \begin{enumerate} \setlength{\itemsep}{0pt}
        \item $\exists j \in V$ such that $\forall i \in V\,,$ $d(i,j)\leq 2\,,$
        \item $\sum_{\ij\in V}d(i,j)\geq \Omega(n^2)\,.$
    \end{enumerate}
    Then there exists $X,Y\subseteq V$ satisfying:
    \begin{enumerate}[(i)] \setlength{\itemsep}{0pt}
        \item $\min\Set{\card{X}\,,\card{Y}}\geq \Omega(n)\,,$
        \item $X,Y$ are $\Delta$-separated for  $\Delta\geq O(1/\sqrt{\log n})\,.$
    \end{enumerate} 
    Moreover, there exists a Las Vegas polynomial time algorithm that finds $X,Y$.
\end{lemma}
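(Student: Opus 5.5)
The plan is to follow the random-projection and matching argument of \cite{arora2009expander}. Since $d$ is of negative type, we first fix vectors $\{v_i\}_{i\in V}$ with $d(i,j)=\norm{v_i-v_j}^2$. Such vectors can be computed in polynomial time by a Cholesky factorization of the corresponding Gram matrix; in our application they are already given by the pseudo-moments of $\mu$, see \cref{sec:background}. Condition~1 places every $v_i$ in a ball of squared radius $2$ around $v_j$. Condition~2 says that a constant fraction of pairs are at squared distance $\Omega(1)$.

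\textbf{Projection.} Draw a standard Gaussian $g$ and set $\sigma$ to a small constant. Define
\begin{align*}
L=\Set{i : \langle v_i-v_j, g\rangle\le -\sigma}\,, \qquad R=\Set{i:\langle v_i-v_j,g\rangle\ge \sigma}\,.
\end{align*}
Because the vectors are bounded and spread out, an averaging argument shows that $\E\card{L},\E\card{R}=\Omega(n)$; this combines Paley--Zygmund with the fact that $\langle v_i-v_{i'},g\rangle$ is Gaussian with variance $d(i,i')$. Hence with constant probability both $L$ and $R$ have size $\Omega(n)$.

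\textbf{Cleanup.} Greedily compute a maximal matching $M$ of pairs $(i,i')\in L\times R$ with $d(i,i')<\Delta$, and delete all matched vertices. The remaining sets $X\subseteq L$ and $Y\subseteq R$ then satisfy $d(X,Y)\ge\Delta$ by maximality of $M$. If moreover $\card{M}\le \beta n$ for a small constant $\beta$, then both $X$ and $Y$ still have size $\Omega(n)$. For the upper bound $d(X,Y)\le 2$ in \cref{def:well-separated}, the triangle inequality with condition~1 only gives $4$, so I would rescale the metric by a constant, or use the squared radius around $v_j$ directly, to reach the stated normalization.

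\textbf{Main obstacle.} The heart of the proof is showing that, for $\Delta=c/\sqrt{\log n}$, the event ``$\card{M}>\beta n$'' has probability bounded away from the probability that $L$ and $R$ are both large. Suppose instead that large matchings occur with constant probability. A matched pair $(i,i')$ has $\norm{v_i-v_{i'}}\le\sqrt\Delta$, yet its projection gap is at least $2\sigma$, which is about $\sigma/\sqrt{\Delta}$ standard deviations. I would use Gaussian measure concentration: if a large matching exists with constant probability, then, after slightly perturbing $g$, it exists with probability $1-o(1)$. This lets us chain matched pairs. After $k$ rounds we obtain many pairs at squared distance at most $k\Delta$ whose projection gap is $\Omega(k\sigma)$. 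Such a gap is a $\Omega\bigl(\sqrt{k}\,\sigma/\sqrt{\Delta}\bigr)$-deviation event, which has probability $\exp\bigl(-\Omega(k\sigma^2/\Delta)\bigr)$. A union bound over the $n^2$ pairs gives a contradiction once $k/\Delta\gg\log n$. The naive chaining loses probability at each step and only yields $\Delta\approx\log^{-2/3}n$. Reaching $1/\sqrt{\log n}$ requires the refined ARV argument, or Lee's version of it, which uses the $\ell_2^2$ triangle inequality to keep chains geometrically controlled. This is where I expect most of the work to lie.

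\textbf{Algorithm.} Finally, the procedure is Las Vegas: sample $g$, build $L$, $R$ and $M$, check the two size bounds and the separation explicitly, and resample on failure. Each attempt succeeds with constant probability, so the expected number of repetitions is $O(1)$.
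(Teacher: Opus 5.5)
The paper does not prove this lemma; it cites \cite{arora2009expander}. Deferring the $1/\sqrt{\log n}$ chaining (ARV's measure-concentration argument, or Lee's version) to the literature therefore puts you on the same footing as the paper, and that part of your sketch describes the cited result rather than proving it.

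The step you do spell out, however, fails as written. With $L,R$ anchored at $v_j$, the bounds $\E\card{L},\E\card{R}=\Omega(n)$ are true. They do not imply that both sets are large simultaneously, and here that conclusion is false. Let $V=A\cup B$ with $\card{A}=\card{B}=n/2$, put every point of $A$ at $v_j$, and put every point of $B$ at a single $w$ with $\norm{w-v_j}^2=1$. Both hypotheses hold ($\sum d=n^2/2$), and $X=A$, $Y=B$ is a valid output. Yet $\langle v_i-v_j,g\rangle$ vanishes on $A$ and equals $\langle w-v_j,g\rangle$ on $B$, so for every $g$ at most one of $L,R$ is nonempty, and your procedure never succeeds. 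Anchoring at a fixed point is harmless for unit vectors anchored at the origin: there each pair at distance $\Omega(1)$ straddles the two thresholds with constant probability, so $\E\card{L}\card{R}=\Omega(n^2)$. Condition~1, however, only bounds $d(i,j)$ from above and gives no such normalization.

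The repair is to make the construction translation-invariant, using the pairwise fact you mention.
\begin{enumerate}
\item Since $d\le 4$ everywhere, condition~2 yields $\Omega(n^2)$ pairs with $d(i,k)=\Omega(1)$. For each such pair, $\bbP(\card{\langle v_i-v_k,g\rangle}\ge 4\sigma)\ge 1/2$ once $\sigma$ is a small enough constant.
\item By reverse Markov, with constant probability $\Omega(n^2)$ pairs are $4\sigma$-apart after projection.
\item On that event, let $L$ and $R$ be the $\beta n$ points with smallest and largest projection, respectively.
\item Suppose $\min_{k\in R}\langle v_k,g\rangle-\max_{i\in L}\langle v_i,g\rangle<2\sigma$. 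Then all remaining points lie in an interval of length $2\sigma$, so every $4\sigma$-apart pair has an endpoint in $L\cup R$. There are at most $4\beta n^2$ such pairs, a contradiction for small $\beta$.
\end{enumerate}
Hence $\langle v_k-v_i,g\rangle\ge 2\sigma$ for all $i\in L$, $k\in R$, which is all your matching and chaining steps need.

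Separately, rescaling cannot give the upper bound $d(X,Y)\le 2$ of \cref{def:well-separated}. That bound concerns $d$ itself, and the triangle inequality only gives $4$. Instead, suppose $d(X,Y)>2$, so $j\notin X\cup Y$. Add $j$ to $X$ when $d(j,Y)\ge\Delta$, and to $Y$ otherwise; in the latter case $d(j,X)>2-\Delta\ge\Delta$ by the triangle inequality. Either way the new pair is $\Delta$-separated and satisfies $d(X,Y)\le 2$.
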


We present next a proof of \cref{lem:well-separated}.

{\renewcommand{\proofname}{Proof of \cref{lem:well-separated}}
\medskip 
\begin{proof}
    For any degree-$4$ pseudo-distribution $\mu$ consistent with $\Set{x_i^2=x_i\,, \forall i \in V}$ the pseudo-metric induced by $\frac{d_\mu(i,j)}{r_\mu}=\frac{1}{r_\mu}\tilde{\E}_\mu\Brac{(x_i-x_j)^2}$ is of negative type by \cref{cor:triangle-eq}.
    If $\mu$ is well-spread, then there exists $i\in V$ such that $\sum_{j,j'\in B(i,2r)}\frac{d(j,j')}{r}\geq n^2/16$ and $\card{B(i,2r)}\geq \Omega(n)\,.$ Hence the metric space $(B(i,2r),\frac{d}{r})$ satisfies the hypotheses of \cref{lem:structure-theorem}.
\end{proof}}

\subsection{Poorly expanding sets must have large intersection: proof of \cref{thm:alg-correlation-malicious-vs-non-expanding}}\label{sec:deferred-proofs-single-iteration}

To prove \cref{thm:alg-correlation-malicious-vs-non-expanding} we use the following key consequence of \cref{thm:unbalanced-vertex-expansion}.

\begin{lemma}\label[lemma]{lem:correlation-malicious-vs-non-expanding}
    Let $G$ be an $n$-vertex graph, let $0<m(n)\le \frac{n}{2}$ and $0 < \eta(n) < 1\,$.
    Let $0<\eps'(n)<1$ and $\alpha(n)\geq 1$ such that $\eps'(n)\alpha(n)\le 1\,.$ Let $I\subset V(G)$ be an $(\eps',\eta m,m)$-expanding set and let $S\subseteq [n]$ be a set satisfying
    \begin{enumerate}[(i)] \setlength{\itemsep}{0pt}
        \item $\min\Set{\card{S}\,, \card{V\setminus S}}\ge \Omega(m)$
        \item $\phi_G(S)\leq \phi_m(G)\cdot \alpha$
    \end{enumerate}
    Then $\max\Set{\frac{\card{S\cap I}}{\card{S}}\,, \frac{\card{(V\setminus S)\cap I}}{\card{V\setminus S}}} \geq 1-\max\Set{\sqrt{\eps'\cdot\alpha}\,, O(\eta)}$.
\end{lemma}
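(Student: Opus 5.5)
The plan is to argue by contradiction. Assume both sides of the cut are impure, and show that restricting the cut to $V^*\coloneq V\setminus I$ gives a balanced subset of $G^*=G[V\setminus I]$ whose frontier is too small. That would contradict the $(\eps',\eta m,m)$-expansion of $I$.

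First I bound $\card{\partial_G(S)}$ in terms of $\card{\partial_G(I)}$. The set $I$ satisfies $m\le\card{I}\le n/2$, so it is feasible for $\phi_m(G)$, and hence $\phi_m(G)\le\phi_G(I)$. Hypothesis (ii) then gives $\phi_G(S)\le\alpha\,\phi_G(I)$. By (i), $\card{S}\card{V\setminus S}=\Theta(m n)$ up to constants, and $\card{I}(n-\card{I})=\Theta(mn)$ as well. Therefore
\[
\card{\partial_G(S)}\le O(\alpha)\cdot\card{\partial_G(I)}.
\]
Constant factors will be absorbed, or I will track the exact ratio $\card{S}\card{V\setminus S}/(m(n-m))$.

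Next I pass to $G^*$. Let $T\coloneq S\cap V^*$. Any $v\in\partial_{G^*}(T)$ lies in $V^*\setminus S$ and has a neighbor in $S$, so $v\in\partial_G(S)$. (This is essentially \cref{lem:∂_mono} with $B=I$.) Hence
\[
\card{\partial_{G^*}(T)}\le\card{\partial_G(S)}\le O(\alpha)\card{\partial_G(I)}.
\]
Write $\delta_1=\card{S\setminus I}/\card{S}$ and $\delta_2=\card{(V\setminus S)\setminus I}/\card{V\setminus S}$, and suppose both exceed $\max\{\sqrt{\eps'\alpha},C\eta\}$. Then $\card{T}\ge\delta_1\card{S}\ge C\eta\cdot\Omega(m)\ge\eta m$ and $\card{V^*\setminus T}\ge\delta_2\card{V\setminus S}\ge\eta m$ once $C$ is large. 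So $T$ is admissible for $\phi_{\eta m}(G^*)$, and
\[
\phi_{\eta m}(G^*)\le\frac{\card{\partial_{G^*}(T)}}{\card{T}\,\card{V^*\setminus T}}.
\]

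Finally I compare denominators. By the expansion hypothesis,
\[
\phi_{\eta m}(G^*)>\phi_G(I)/\eps'=\frac{\card{\partial_G(I)}}{\eps'\card{I}(n-\card{I})}.
\]
Combining this with the previous step gives
\[
\card{T}\card{V^*\setminus T}<O(\alpha\eps')\,\card{I}(n-\card{I})=O(\alpha\eps')\,mn.
\]
On the other side, $\card{T}\card{V^*\setminus T}\ge\delta_1\delta_2\card{S}\card{V\setminus S}=\Omega(\delta_1\delta_2\,mn)$, so $\delta_1\delta_2<O(\alpha\eps')$. This forces $\min\{\delta_1,\delta_2\}\le O(\sqrt{\alpha\eps'})$. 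In the remaining case the minimum is below $C\eta$. Either way, the side with fewer non-$I$ vertices has purity at least $1-\max\{O(\sqrt{\eps'\alpha}),O(\eta)\}$, which is the claim; the hidden constant inside the square root would be absorbed by the $O(\cdot)$ or by tuning.

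The main obstacle is the bookkeeping with the normalizations. The term $\card{V\setminus S}$ can be as large as $n$ while $\card{S}$ is only $\Omega(m)$, and $\card{V^*\setminus T}$ is bounded via $\delta_2\card{V\setminus S}$. One has to check that both products are $\Theta(mn)$, consistently with $\card{I}(n-\card{I})$, so the ratio really comes out as $\delta_1\delta_2$ times a constant. The constant in front of $\sqrt{\eps'\alpha}$ is also a mild issue. I would handle it by letting the $O(\eta)$ term and the constants in (i) absorb it, and if needed by stating the bound as $1-O(\max\{\sqrt{\eps'\alpha},\eta\})$.
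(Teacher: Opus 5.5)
Your skeleton matches the paper's proof. You restrict to $T=S\setminus I$, use $\partial_{G^*}(T)\subseteq\partial_G(S)$, and note that $T$ is admissible for $\phi_{\eta m}(G^*)$ once both impurities $\delta_1,\delta_2$ are at least $c\eta$. You then chain $\phi_G(S)\le\alpha\,\phi_m(G)\le\alpha\,\phi_G(I)<\alpha\eps'\phi_{\eta m}(G^*)$. However, the $\Theta(mn)$ bookkeeping you route this through is false as written.

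\begin{itemize}
\item Hypothesis (i) only lower-bounds $\min\Set{\card{S},\card{V\setminus S}}$. So $\card{S}\,\card{V\setminus S}$ can be of order $n^2$, for instance a balanced $S$ with $m\ll n$.
\item The lemma only gives $m\le\card{I}\le n/2$, so $\card{I}(n-\card{I})$ need not be $\Theta(mn)$ either.
\item Consequently your first display, $\card{\partial_G(S)}\le O(\alpha)\card{\partial_G(I)}$, does not follow from the hypotheses. The correct factor can be of order $n/m$.
\item The step $O(\alpha\eps')\card{I}(n-\card{I})=O(\alpha\eps')mn$ is likewise unjustified.
\item Even where such estimates hold, they leave a constant inside the square root, which the statement does not allow. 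It asserts $\sqrt{\eps'\alpha}$ with constant $1$, and the $O(\eta)$ term cannot absorb $C\sqrt{\eps'\alpha}$ when $\eta\ll\sqrt{\eps'\alpha}$. So your fallback $1-O(\max\{\sqrt{\eps'\alpha},\eta\})$ is a strictly weaker lemma.
\end{itemize}

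The fix is to not approximate at all, and to normalize by $\card{I}(n-\card{I})$ rather than $m(n-m)$. The latter would reintroduce a factor up to order $n/m$ when $\card{I}\gg m$. From $\phi_G(S)\le\alpha\,\phi_G(I)$ you get, with no loss,
\[
\card{\partial_G(S)}\le\alpha\,\card{\partial_G(I)}\cdot\frac{\card{S}\,\card{V\setminus S}}{\card{I}\,(n-\card{I})}.
\]
Inserting this into your chain gives $\card{T}\,\card{V^*\setminus T}<\alpha\eps'\,\card{S}\,\card{V\setminus S}$. Since $\card{T}=\delta_1\card{S}$ and $\card{V^*\setminus T}=\delta_2\card{V\setminus S}$ are identities, this reads $\delta_1\delta_2<\alpha\eps'$. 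Hence $\min\Set{\delta_1,\delta_2}<\sqrt{\alpha\eps'}$ with no constant.

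The normalizations cancel identically, so the ``main obstacle'' you identify disappears. This is precisely the paper's argument. It combines $\phi_G(S)\ge\phi_{G^*}(T)\,\delta_1\delta_2$, which follows from $\card{\partial_G(S)}\ge\card{\partial_{G^*}(T)}$, with $\phi_G(S)\le\alpha\eps'\,\phi_{G^*}(T)$. It never compares $\card{\partial_G(S)}$ with $\card{\partial_G(I)}$ at all.
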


{\renewcommand{\proofname}{Proof}
\begin{proof}
    Let $G^* = G[V\setminus I]$. 
    Furthermore, let $c>0$ be such that $\min\Set{\card{S}\,,\card{V\setminus S}}\geq m/c.$
    By definition of $I$ we have $\phi_G(I)<\phi_{\eta m}(G^*)\cdot \eps'\,.$
    We consider two cases.
    
    First, suppose $\min\Set{\frac{\card{S\setminus I}}{\card{S}}\,, \frac{\card{(V\setminus S)\setminus I}}{\card{V\setminus S}}} < c\eta$. Then the claim immediately follows as
    \begin{equation}
        \max\Set{\frac{\card{S\cap I}}{\card{S}}\,, \frac{\card{(V\setminus S)\cap I}}{\card{V\setminus S}}}
        = 1 - \min\Set{\frac{\card{S\setminus I}}{\card{S}}\,, \frac{\card{(V\setminus S)\setminus I}}{\card{V\setminus S}}}
        > 1-c\eta \;.
    \end{equation}
    
    Second, suppose $\min\Set{\frac{\card{S\setminus I}}{\card{S}}\,, \frac{\card{(V\setminus S)\setminus I}}{\card{V\setminus S}}} \ge c\eta\,.$
    In particular, it holds that
    \begin{equation}
        \eta m \le c\eta\card{S} \le \card{S\setminus I} = \card{V(G^*)} - \card{(V\setminus S)\setminus I} \le \card{V(G^*)} - c\eta\card{V\setminus S} \le \card{V(G^*)} - \eta m\;,
    \end{equation}
    where we used the fact that $\min\Set{\card{S}\,, \card{V\setminus S}} \ge m/c\,.$
    Thus $\card{V(G^*)} \ge 2\eta m$, implying that $\phi_{\eta m}(G^*)$ is well defined.
    Since $\card{S\setminus I} \ge c\eta\card{S}$ and $\card{S} \ge m/c$,
    we get that $\phi_{G^*}(S\setminus I)\ge \phi_{\eta m}(G^*)$.
    We can then show the following key inequality:
  \begin{align}\label{eq:expansion-ratio}
        \frac{\phi_G(S)}{\phi_{G^*}(S\setminus I)}
        \leq \frac{\phi_m(G)\cdot \alpha}{\phi_{G^*}(S\setminus I)}
        \leq \frac{\phi_G(I)\cdot \alpha}{\phi_{G^*}(S\setminus I)}
        \leq \frac{\phi_{\eta m}(G^*)\cdot\alpha\cdot\eps'}{\phi_{G^*}(S\setminus I)}
        \leq \alpha\cdot\eps' \,,
    \end{align}
    where the second and third steps follows because $I$ is $(\eps',\eta m,m)$-expanding, and the last step follows from the fact that $\eta m \le \card{S\setminus I} \le \card{V(G^*)} - \eta m$.
    Then,
    \begin{align}
        \phi_G(S)
        &= \frac{\card{\partial_{G}(S)}}{\card{S}\cdot\card{V\setminus S}}
        \geq \frac{\card{\partial_{G^*}(S\setminus I)}}{\card{S}\cdot\card{V\setminus S}}
        = \frac{\card{\partial_{G^*}(S\setminus I)}}{\card{S\setminus I}\cdot\card{(V\setminus S)\setminus I}} \cdot \frac{{\card{S\setminus I}\cdot\card{(V\setminus S)\setminus I}}}{{\card{S}\cdot\card{V\setminus S}}} \\
        &= \phi_{G^*}(S\setminus I) \cdot \frac{{\card{S\setminus I}\cdot\card{(V\setminus S)\setminus I}}}{{\card{S}\cdot\card{V\setminus S}}} \;.
    \end{align}
     Applying \cref{eq:expansion-ratio} we obtain
    \begin{align}
        \alpha\cdot \eps' \geq \frac{\phi_G(S)}{\phi_{G^*}(S\setminus I)}
        \geq \frac{{\card{S\setminus I}\cdot\card{(V\setminus S)\setminus I}}}{{\card{S}\cdot\card{V\setminus S}}}
        \geq \min\biggl\{\frac{\card{S\setminus I}}{\card{S}}\,, \frac{\card{(V\setminus S)\setminus I}}{\card{V\setminus S}}\biggr\}^2 \;,
    \end{align}
    which concludes the proof.
\end{proof}

We are now ready to prove \cref{thm:alg-correlation-malicious-vs-non-expanding}.
\medskip 
{\renewcommand{\proofname}{Proof of \cref{thm:alg-correlation-malicious-vs-non-expanding}}
\begin{proof}
    Let $S$ be the subset of $V(G)$ found by Algorithm~\ref{alg:rounding-vertex-expansion}. By \cref{thm:unbalanced-vertex-expansion}, $S$ satisfies \textit{(ii)} and 
    \begin{align}
        \card{\partial_G(S)}\leq \alpha \phi_m(G)\cdot O(m\cdot (n-m))\leq \alpha \phi_G(I)\cdot O(m \cdot (n-m))\leq O(\alpha)\card{\partial_G(I)},
    \end{align}
    where we used the assumption that $m\le\card{I}\le n/2.$
    So $S$ also satisfies \textit{(i)}.
    Now, by \cref{lem:correlation-malicious-vs-non-expanding} either $\card{S\setminus I}\leq O(\eta)\card{S}$ or $\card{V\setminus (S\cup I)}\leq O(\eta)\card{V\setminus S}.$
    We may assume now that $C\cdot\eta<1/10$ for a sufficiently large universal constant, since otherwise \textit{(iii)} is trivially satisfied and the result follows by simply returning $S.$
    If $\min\Set{\card{S},\card{V\setminus S}}\leq 10\log p$ then we can deterministically check which side of the partition satisfies \textit{(iii)}.
    Otherwise, notice that because $\card{I}\leq n/2,$ we cannot have $\min\Set{\frac{\card{S\cap I}}{\card{S}}\,, \frac{\card{(V\setminus S)\cap I}}{\card{V\setminus S}}} \geq \frac{1-C\cdot\eta}{3}.$ 
    Hence it suffices to distinguish between these two cases.
    For any subset $S^*\subseteq S$ chosen uniformly at random from $S,$ we have by standard concentration bounds, for any $p\leq 1/2$,
    \begin{align}
        \bbP\Paren{\Abs{\card{S^*\cap I}-\card{S^*}\frac{\card{S\cap I}}{\card{S}}}\geq \sqrt{3\card{S^*}\frac{\card{S\cap I}}{\card{S}}\log (1/2p)} }\leq p\,.
    \end{align}
    Hence picking some random subset of $S$ of size $10\log p$ we have that with probability $1-p/2$ we can correctly decide whether $\card{S\cap I}\geq (1-C\cdot \eta)\card{S}/2.$
    We may do the same for $V\setminus S$ and we can then correctly find the side maximizing $\Set{\frac{\card{S\cap I}}{\card{S}}\,, \frac{\card{(V\setminus S)\cap I}}{\card{V\setminus S}}}$ with probability $1-p.$
    
    If such set is $S,$ the proof follows by simply returning $S.$ Otherwise, let $S'=S\cup\Set{v\in \partial_G(S)\,|\, v\notin I}.$
    Notice that we can construct $S'$ in linear time by asking $\card{\partial_G (S)}\leq \alpha\card{\partial_G(I)}$ queries to $\cO_G.$
    We claim that $V\setminus S'$ now satisfies \textit{(i),(ii),(iii)}.
    Indeed, by construction 
    \begin{align}
        \card{\partial_G (V\setminus S')}\leq \card{\partial_G (S)}+\card{\partial_G(I)}\leq (\alpha+1)\card{\partial_G(I)}
    \end{align}
    and
    \begin{align}
        \card{V\setminus S'}\geq \card{(V\setminus S)\cap I}\geq (1-O(\eta))\card{V\setminus S}\geq \Omega(m).
    \end{align}
    Finally, \textit{(iii)} follows since $(V\setminus S')\cap I=(V\setminus S)\cap I.$
\end{proof}
{\renewcommand{\proofname}{Proof}

\section{Sum-of-squares background}
\label{sec:background}

We present here necessary background about the sum-of-squares framework.
See \cite{fleming2019semialgebraic} for proofs and more details.

Let $x = (x_1, x_2, \ldots, x_n)$ be a tuple of $n$ indeterminates and let $\R[x]$ be the set of polynomials with real coefficients and indeterminates $x_1,\ldots,x_n$.
In a \emph{polynomial feasibility problem}, we are given a system of polynomial inequalities $\cA = \{f_1 \geq 0, \dots, f_m \geq 0\}$,
and we would like to know if there exists a point $x \in \R^n$ satisfying $f_i(x) \geq 0$ for all $i \in [m]$.
This task is easily seen to be NP-hard.

Given a polynomial system $\cA$, the \emph{sum-of-squares (sos) algorithm} computes a \emph{pseudo-distribution} of solutions to $\cA$ if one exists. Pseudo-distributions are generalizations of probability distributions, therefore the sos algorithm solves a relaxed version of the feasibility problem. The search for a pseudo-distribution can be formulated as a semidefinite program (SDP).

There is strong duality between \emph{pseudo-distributions} and \emph{sum-of-squares proofs}: the sos algorithm will either find a pseudo-distribution satisfying $\cA$, or a refutation of $\cA$ inside the sum-of-squares proof system.
When using sos for algorithm design as we do here, we work in the former case and our goal is to design a rounding algorithm that transforms a pseudo-distribution into an actual point $x$ that satisfies or nearly satisfies $\cA$.

The side of the sum-of-squares algorithm which computes a pseudo-distribution is summarized into the following theorem (we will not need the side that computes a sum-of-squares refutation). The full definitions of these objects will be presented momentarily.

\begin{theorem}
\label{fact:running-time-sos}
	Fix a parameter $\ell \in \N$. There exists an $(n+ m)^{O(\ell)} $-time algorithm that, given an explicitly bounded and satisfiable polynomial system $\cA = \{f_1 \geq0, \dots, f_m \geq 0\}$ in $n$ variables with bit complexity $(n+m)^{O(1)}$, outputs a degree-$\ell$ pseudo-distribution that satisfies $\cA$ approximately.
\end{theorem}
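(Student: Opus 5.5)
The plan is to reduce the search for a degree-$\ell$ pseudo-distribution to a semidefinite feasibility problem of size $(n+m)^{O(\ell)}$ and then solve it with the ellipsoid method of \cite{grotschel1981ellipsoid,grotschel2012geometric}, which runs in polynomial time given a separation oracle and a priori bounds on the feasible region.

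\textbf{Step 1: write the problem as an SDP.} Identify a degree-$\ell$ pseudo-distribution with its pseudo-expectation functional $\tilde{\E}$. This functional is determined by its values $\tilde{\E}[x^\beta]$ on the $N=\binom{n+\ell}{\ell}=(n+1)^{O(\ell)}$ monomials of degree at most $\ell$. The constraints that $\tilde{\E}$ satisfies $\cA$ are:
\begin{enumerate}[(a)]
\item the normalization $\tilde{\E}[1]=1$, which is a linear equality;
\item positivity of the moment matrix $M=(\tilde{\E}[x^{\beta+\beta'}])_{|\beta|,|\beta'|\le \ell/2}$, that is $M\succeq 0$;
\item for every product $f_S=\prod_{i\in S}f_i$ with $\deg f_S\le \ell$, positivity of the localizing matrices $M_{f_S}=(\tilde{\E}[f_S\,x^{\beta+\beta'}])_{\beta,\beta'}\succeq 0$, where $\beta,\beta'$ range over monomials of degree at most $(\ell-\deg f_S)/2$.
\end{enumerate}
Each such matrix is affine in the moment vector. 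The number of products $f_S$ that must be included is at most $m^{O(\ell)}$, because $\deg f_S\le\ell$ allows only $O(\ell)$ nonconstant factors. So we obtain a spectrahedron in $\R^N$ described by $(n+m)^{O(\ell)}$ PSD constraints, each of dimension $(n+1)^{O(\ell)}$.

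\textbf{Step 2: check the hypotheses of the ellipsoid method.} There are three things to verify.
\begin{enumerate}[(a)]
\item \emph{Separation oracle.} Given a candidate moment vector, compute the minimal eigenvalue of each affine matrix. If it is negative, the eigenvector $v$ gives the violated linear inequality $v^\top M(\cdot) v\ge 0$. Eigenvectors need only be computed to polynomial precision, which is where ``approximately'' enters.
\item \emph{Outer ball.} Explicit boundedness means $\cA$ contains, or sos-implies, a constraint $\|x\|^2\le B$ with $B\le 2^{(n+m)^{O(1)}}$. From $\tilde{\E}[(B-\|x\|^2)\,p^2]\ge 0$ we derive $|\tilde{\E}[x^\beta]|\le B^{O(\ell)}$ for every monomial. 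This puts the feasible region inside a ball of radius $2^{(n+m)^{O(\ell)}}$, so the ellipsoid method starts from a ball of polynomial bit size.
\item \emph{Inner ball.} Satisfiability gives a point $x^*$, and the moments of the Dirac measure at $x^*$ are feasible. To obtain a ball of non-negligible volume inside the $\delta$-relaxed spectrahedron (all matrix constraints relaxed to $\succeq -\delta I$ and linear equalities kept exact), we project onto the affine subspace cut out by the equalities and note that every sufficiently small perturbation of a feasible point stays feasible after relaxation.
\end{enumerate}
With these, the ellipsoid method finds a point of the relaxed spectrahedron, that is, a pseudo-distribution satisfying $\cA$ up to additive error $\delta=2^{-(n+m)^{O(\ell)}}$. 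It uses $\mathrm{poly}(N,\log(1/\delta))=(n+m)^{O(\ell)}$ iterations, each costing $(n+m)^{O(\ell)}$ time.

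\textbf{Expected main obstacle.} I expect the volume and bit-complexity bookkeeping to be the hardest part, namely the outer-radius bound through explicit boundedness and the inner-ball and degeneracy issue caused by equality constraints. This is exactly why the conclusion holds only ``approximately'', and I would follow the treatment in \cite{fleming2019semialgebraic} for these estimates. For our use, the approximation error is negligible relative to the rounding analysis, so the paper's arguments for \ref{eq:sdp-vertex-sparsity} with $\ell=4$ go through unchanged.
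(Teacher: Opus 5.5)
Your proposal is correct and follows essentially the same route as the paper: encode degree-$\ell$ pseudo-distributions through their moment matrices as a semidefinite feasibility problem, obtain an $(n+m)^{O(\ell)}$-time weak separation oracle, and run the ellipsoid method under the two numerical conditions (polynomial bit complexity and explicit boundedness), which is exactly why only approximate satisfaction is guaranteed. Your version is more detailed than the paper's sketch, since you spell out the localizing matrices that encode consistency with $\cA$ and the inner-ball argument, both of which the paper leaves implicit.
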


\paragraph{Pseudo-distributions}
We can represent a discrete (i.e., finitely supported) probability distribution over $\R^n$ by its probability mass function $\mu\from \R^n \to \R$ such that $\mu \geq 0$ and $\sum_{x \in \mathrm{supp}(\mu)} \mu(x) = 1$.
A pseudo-distribution relaxes the constraint $\mu\ge 0$ and only requires that $\mu$ passes certain low-degree non-negativity tests.

Concretely, a \emph{degree-$\ell$ pseudo-distribution} is a finitely-supported function $\mu:\R^n \rightarrow \R$ such that $\sum_{x \in \supp(\mu)} \mu(x) = 1$ and $\sum_{x \in \supp(\mu)} \mu(x) f(x)^2 \geq 0$ for every polynomial $f$ of degree at most $\ell/2$.
A straightforward polynomial interpolation argument shows that every degree-$\infty$ pseudo-distribution satisfies $\mu\ge 0$ and is thus an actual probability distribution.

A pseudo-distribution $\mu$ can be equivalently represented through its \emph{pseudo-expectation operator} $\tilde \E_\mu$.
For a function $f$ on $\R^n$ we define the pseudo-expectation $\tilde{\E}_{\mu} f(x)$ as
\begin{equation}
	\tilde{\E}_{\mu} f(x) = \sum_{x \in \supp(\mu)} \mu(x) f(x) \,\mper
\end{equation}

We are interested in pseudo-distributions which satisfy a given system of polynomials $\cA$.

\begin{definition}[Satisfying constraints]
\label[definition]{def:constrained-pd}
	Let $\mu$ be a degree-$\ell$ pseudo-distribution over $\R^n$.
	Let $\cA = \{f_1\ge 0, f_2\ge 0, \ldots, f_m\ge 0\}$ be a system of polynomial inequalities.
	We say that \emph{$\mu$ is consistent with $\cA$ at level $r$}, denoted $\mu \sdtstile{r}{} \cA$, if for every $S\subseteq[m]$ and every polynomial $h$ with $2\deg h + \sum_{i\in S} \max\{\deg f_i,\, r\}\leq \ell$,
	\begin{displaymath}
		\tilde{\E}_{\mu} h^2 \cdot \prod _{i\in S}f_i  \ge 0\,.
	\end{displaymath}
	We say $\mu$ satisfies $\cA$ and write $\mu\sdtstile{}{} \cA$ if the case $r = 0$ holds.
\end{definition}

We remark that $\mu \sdtstile{}{} \{1 \geq 0\}$ is equivalent to $\mu$ being a valid pseudo-distribution, and if $\mu$ is an actual (discrete) probability distribution, then we have  $\mu\sdtstile{}{}\cA$ if and only if $\mu$ is supported on solutions to the constraints $\cA$.

The pseudo-expectations of all polynomials in the variables $x$
with degree at most $\ell$ can be packaged into the list of \emph{pseudo-moments} $\tilde\E_\mu x^S$ for all monomials $x^S, \, |S| \leq \ell$.
Since we will be entirely concerned with polynomials up to degree $\ell$, as in \cref{def:constrained-pd}, we can treat a degree-$\ell$ pseudo-distribution as being equivalently specified by the list of pseudo-moments up to degree $\ell$.
Thus we will view the output of the degree-$\ell$ sos algorithm as being the list of all pseudo-moments up to degree $\ell$ which has size $O(n^{\ell})$.

To design an algorithm based on sos, our task is to utilize the pseudo-moments in order to find a solution point $x$.
The sos framework extends linear programming and semidefinite programming, which conceptually use only the degree-1 or degree-2 moments respectively.
Taking sos to higher degree enforces additional constraints on all of the moments, coming from higher-degree sum-of-squares proofs as we will see next.

\paragraph{Sum-of-squares proofs}
We say that a polynomial $p\in \R[x]$ is a \emph{sum-of-squares (sos)} if there are polynomials $q_1,\ldots,q_r \in \R[x]$ such that $p=q_1^2 + \cdots + q_r^2$.
Let $f_1, f_2, \ldots, f_m, g \in \R[x]$.
A \emph{sum-of-squares proof} that the constraints $\{f_1 \geq 0, \ldots, f_m \geq 0\}$ imply the constraint $\{g \geq 0\}$ consists of  sum-of-squares polynomials $(p_S)_{S \subseteq [m]}$ such that
\begin{equation}
	g = \sum_{S \subseteq [m]} p_S \cdot \Pi_{i \in S} f_i
	\mper
\end{equation}
We say that this proof has \emph{degree $\ell$} if for every set $S \subseteq [m]$, the polynomial $p_S \Pi_{i \in S} f_i$ has degree at most $\ell$.
When a set of inequalities $\cA$ implies $\{g \geq 0\}$ with a degree $\ell$ SoS proof, we write:
\begin{equation}
	\cA \sststile{\ell}{}\{g \geq 0\}
	\mper
\end{equation}
A sum-of-squares \emph{refutation} of $\cA$ is a proof $\cA \sststile{\ell}{} \{-1 \geq 0\}$.

\paragraph{Duality}
Degree-$\ell$ pseudo-distributions and degree-$\ell$ sum-of-squares proofs exhibit strong duality.
In proof theoretic terms, degree-$\ell$ sum-of-squares proofs are sound and complete when degree-$\ell$ pseudo-distributions are taken as models.

Soundness, or weak duality, states that every sum-of-squares proof enforces a constraint on every valid pseudo-distribution.

\begin{fact}[Weak duality/soundness]
	\label{fact:sos-soundness}
	If $\mu \sdtstile{r}{} \cA$ for a degree-$\ell$ pseudo-distribution $\mu$ and there exists a sum-of-squares proof $\cA \sststile{r'}{} \cB$, then $\mu \sdtstile{r\cdot r'+r'}{} \cB$.
\end{fact}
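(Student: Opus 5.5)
The plan is to reduce everything to the defining inequalities of $\mu \sdtstile{r}{} \cA$ by substituting the sum-of-squares certificate into the pseudo-expectation and using linearity of $\tilde{\E}_\mu$. Write $\cB = \{g_1 \ge 0, \dots, g_k \ge 0\}$. For each $j$, the proof $\cA \sststile{r'}{} \cB$ provides sum-of-squares polynomials $(p^{(j)}_S)_{S\subseteq[m]}$ with
\[
g_j = \sum_{S\subseteq[m]} p^{(j)}_S \prod_{i\in S} f_i ,
\qquad \deg\Bigl(p^{(j)}_S\prod_{i\in S} f_i\Bigr)\le r' .
\]
We must show that for every $T\subseteq[k]$ and every polynomial $h$ satisfying the degree condition of \cref{def:constrained-pd} at level $r r' + r'$, we have $\tilde{\E}_\mu\, h^2 \prod_{j\in T} g_j \ge 0$.

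\emph{Step 1: a single constraint.} Take $T=\{j\}$ first. Expand each $p^{(j)}_S = \sum_\ell q_{S,\ell}^2$. Linearity gives
\[
\tilde{\E}_\mu\, h^2 g_j = \sum_{S}\sum_\ell \tilde{\E}_\mu\,(h\,q_{S,\ell})^2 \prod_{i\in S} f_i .
\]
Each summand is exactly of the form controlled by $\mu \sdtstile{r}{} \cA$, with the square $(h q_{S,\ell})^2$ playing the role of $h^2$. It is therefore nonnegative as soon as
\[
2\deg(h\,q_{S,\ell}) + \sum_{i\in S}\max\{\deg f_i, r\}\le \ell .
\]

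\emph{Step 2: several constraints.} For general $T$, multiply the certificates together. The product $\prod_{j\in T} g_j$ expands as a sum of terms
\[
\Bigl(\prod_{j\in T} p^{(j)}_{S_j}\Bigr)\prod_{j\in T}\prod_{i\in S_j} f_i .
\]
A product of sos polynomials is sos. Any $f_i$ that appears an even number of times combines into a square and is absorbed into the sos factor. What remains is an sos polynomial times $\prod_{i\in S'} f_i$ for a genuine subset $S'\subseteq[m]$, which is the form Step 1 handles.

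\emph{Step 3: degree bookkeeping (the main obstacle).} The delicate part is checking that the degree budget of level $r r'+r'$ for $\cB$ implies the budget required by $\cA$ at level $r$ for every term produced in Steps 1 and 2. Every certificate term has degree at most $r'$, so each $\deg f_i \le r'$ and each $\deg q_{S,\ell}\le r'/2$. Passing from $\deg f_i$ to $\max\{\deg f_i, r\}$ inflates each factor by at most a multiplicative $r$ (the degenerate case of degree-$0$ factors is trivial). Hence a term of total degree $\le r'$ from one certificate costs at most about $r r' + r'$ in the budget of \cref{def:constrained-pd}.

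The plan is to verify term by term that $2\deg(h q) + \sum_{i\in S'}\max\{\deg f_i, r\}$ is bounded by $2\deg h + \sum_{j\in T}\max\{\deg g_j,\, r r'+r'\}$, which is at most $\ell$ by hypothesis. This bound is what fixes the level $r\cdot r'+r'$ in the statement. I expect the only real care is in Step 2, where absorbing repeated $f_i$ into squares must not increase degrees beyond this count.
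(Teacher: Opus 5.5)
Your argument is correct and is the standard proof of this fact. The paper does not prove it but defers to \cite{fleming2019semialgebraic}, and the proof there proceeds the same way: substitute the certificates into $\tilde{\E}_\mu h^2\prod_{j\in T}g_j$, expand by linearity, and absorb even powers of the $f_i$ into the square. The one place to tighten Step 3 is that you need $2\deg q_{S,\ell}\le \deg p_S$, which holds because leading forms in a sum of squares cannot cancel; the bound $\deg q_{S,\ell}\le r'/2$ you state is not enough. With it, and with $\max\{\deg f_i,r\}\le (1+r)\deg f_i$ for $\deg f_i\ge 1$, each certificate term costs at most $(1+r)r'$, and pairing repeated $f_i$ into squares only lowers the cost.
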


There is a degree-4 proof of the $\el_2^2$ triangle inequality, which implies that every degree-4 pseudo-distribution satisfies the $\el_2^2$ triangle inequality.

\begin{lemma}[$\el_2^2$ triangle inequality]\label{lem:triangle-inequality}
    It holds that
    \[
        \{x_i^2 = x_i\}_{i\in [n]} \sststile{4}{} \bigl\{(x_i - x_j)^2 \leq (x_i - x_k)^2 + (x_k - x_j)^2\bigr\}_{i,j,k \in [n]} \;.
    \]
\end{lemma}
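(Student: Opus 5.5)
Since the statement is purely syntactic about the Boolean constraints, I will produce an explicit degree-$4$ sum-of-squares certificate. The plan is to work with a single algebraic identity. First, I will rewrite the target as
\[
(x_i-x_k)^2+(x_k-x_j)^2-(x_i-x_j)^2 = 2\,(x_i-x_k)(x_j-x_k)\,,
\]
which is a direct expansion: the $x_i^2$ and $x_j^2$ terms cancel and the cross terms regroup. Thus it suffices to show that $\{x_\ell^2=x_\ell\}$ implies $(x_i-x_k)(x_j-x_k)\ge 0$ with a degree-$4$ proof.

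The key step is to express $(x_i-x_k)(x_j-x_k)$, modulo the ideal generated by $x_\ell^2-x_\ell$, as a sum of squares of degree-$2$ polynomials. On Boolean points the product vanishes unless $x_i=x_j\ne x_k$, in which case it equals $1$. That is, it equals $x_ix_j(1-x_k)+(1-x_i)(1-x_j)x_k$ as a function on the cube. Each of these summands is a square modulo the Booleanity constraints: $x_ix_j(1-x_k)\equiv (x_ix_j(1-x_k))^2$ and similarly for the second. Each square is of a cubic polynomial, so I would instead use squares of lower degree. For instance, $x_ix_j(1-x_k)=(x_ix_j)^2(1-x_k)^2$ up to the ideal, but this is degree $6$. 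To keep degree $4$ I will instead verify directly the identity
\[
(x_i-x_k)(x_j-x_k) = x_ix_j - x_ix_k - x_jx_k + x_k^2\,,
\]
and look for a Gram matrix certificate over the monomials $\{1,x_i,x_j,x_k,x_ix_j,x_ix_k,x_jx_k\}$. This equals the multilinearized target modulo $x_\ell^2=x_\ell$, with the multiples of $x_\ell^2-x_\ell$ having degree at most $4$.

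Concretely, I would try
\[
x_ix_j(1-x_k)\equiv (x_ix_j-x_ix_jx_k)^2\quad\text{replaced by}\quad (x_ix_j - x_ix_k - x_jx_k + x_k)^2\text{-type expressions,}
\]
that is, find quadratic polynomials $q_1,q_2$ in the multilinear monomials with $q_1^2+q_2^2$ agreeing with $(x_i-x_k)(x_j-x_k)$ on $\{0,1\}^3$. Since both sides are then functions on the cube, their difference lies in the ideal. Writing the difference as $\sum_\ell r_\ell\,(x_\ell^2-x_\ell)$ with $\deg r_\ell\le 2$ gives degree $4$. A natural guess is $q_1 = x_ix_j - x_ix_jx_k$ truncated: take $q_1=x_ix_j+x_k-x_ix_k-x_jx_k$. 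On the cube this is the indicator of $x_i=x_j\neq x_k$ (check the eight points). Its square then equals itself on the cube, and $q_1$ itself equals $(x_i-x_k)(x_j-x_k)$ after replacing $x_k^2$ by $x_k$. Hence
\[
(x_i-x_k)(x_j-x_k) = q_1^2 + \sum_\ell r_\ell(x_\ell^2-x_\ell)
\]
with $\deg q_1^2=4$. The remaining work is checking that the ideal multipliers have degree $\le 2$, which holds because $q_1^2-q_1$ is a degree-$4$ polynomial vanishing on the cube; multilinear reduction writes it in the ideal with multipliers of degree $\le 2$.

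I expect the main obstacle to be the bookkeeping of that last reduction. Specifically, I need to confirm that reducing $q_1^2$ step by step via $x_\ell^2\to x_\ell$ never requires multipliers of degree exceeding $2$. This will need a careful monomial-by-monomial expansion. Multiplying the final identity by $2$ and adding it to the expansion from the first paragraph yields the claimed degree-$4$ proof.
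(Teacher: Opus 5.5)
Your proposal is correct and follows the paper's proof essentially verbatim. It uses the same identity $(x_i-x_k)^2+(x_k-x_j)^2-(x_i-x_j)^2=2(x_i-x_k)(x_j-x_k)$, the same multilinear polynomial $q_1=x_k+x_ix_j-x_ix_k-x_jx_k$ (the paper's $f$), and the same observation that a $\{0,1\}$-valued multilinear polynomial equals its square modulo the Boolean ideal with multipliers of degree at most $2$. The bookkeeping you flag as the main obstacle is immediate: each reduction step $x_\ell^2 m\mapsto x_\ell m$ on a monomial of degree at most $4$ uses the multiplier $m$, which has degree at most $2$. The multilinear reduction of $q_1^2-q_1$ vanishes on the cube and is therefore the zero polynomial.
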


\begin{proof}
    \begin{align}
        (x_i-x_k)^2 +(x_j-x_k)^2 - (x_i-x_j)^2 &= 2x_k^2 +2x_ix_j - 2x_jx_k - 2x_ix_k\\
        &= 2(x_k-x_i)(x_k-x_j)\\
        &= 2x_k +2x_ix_j - 2x_jx_k - 2x_ix_k + 2(x_k^2 - x_k) \;.
    \end{align}
    One can verify by truth table that $(x_k-x_i)(x_k-x_j)$ takes values in $\{0,1\}$ for Boolean $x_i, x_j, x_k \in \{0,1\}$. Therefore its multilinear interpolation $f(x) := x_k + x_ix_j - x_jx_k - x_ix_k$ is the same as that of its square \ie $f(x) = f(x)^2 + p_i \cdot (x_i^2 - x_i) + p_j \cdot (x_j^2 - x_j) + p_k \cdot (x_k^2 - x_k)$ for some polynomials $p_i,p_j,p_k$ with degree $\le 2$.
    This is a degree-4 sos proof of $f(x) \geq 0$.
\end{proof}

\begin{corollary}\label[corollary]{cor:triangle-eq}
    For any degree-4 pseudo-expectation $\tilde{\E}_\mu$ satisfying the constraints $\{x_i^2 = x_i\}_{i \in [n]}$, for all $i,j,k \in [n]$,
    \[
        \tilde{\E}_\mu (x_i-x_j)^2 \leq \tilde{\E}_\mu (x_i-x_k)^2 + \tilde{\E}_\mu(x_j-x_k)^2 \;.
    \]
\end{corollary}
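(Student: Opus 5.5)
The corollary follows directly from \cref{lem:triangle-inequality} together with soundness (\cref{fact:sos-soundness}). Fix indices $i,j,k\in[n]$. By \cref{lem:triangle-inequality} there is a degree-$4$ sum-of-squares proof, from the Boolean constraints $\{x_\ell^2=x_\ell\}$, of the polynomial inequality
\[
(x_i-x_k)^2+(x_k-x_j)^2-(x_i-x_j)^2\ \ge\ 0 .
\]
Concretely, the proof of that lemma writes the left-hand side as
\[
2f(x)^2+\sum_{\ell\in\{i,j,k\}} q_\ell\,(x_\ell^2-x_\ell),
\]
where $f(x)=x_k+x_ix_j-x_jx_k-x_ix_k$ and each $q_\ell$ has degree at most $2$. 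So each term $q_\ell\,(x_\ell^2-x_\ell)$ has degree at most $4$.

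Apply $\tilde{\E}_\mu$ to this identity and use linearity.
\begin{itemize}
\item The square term gives $\tilde{\E}_\mu f^2\ge 0$, since $\deg f\le 2$ and $\mu$ is a degree-$4$ pseudo-distribution.
\item Each constraint term vanishes: $\tilde{\E}_\mu\bigl[q_\ell\,(x_\ell^2-x_\ell)\bigr]=0$.
\end{itemize}
Together these give $\tilde{\E}_\mu\bigl[(x_i-x_k)^2+(x_k-x_j)^2-(x_i-x_j)^2\bigr]\ge 0$. Rearranging, and using $(x_j-x_k)^2=(x_k-x_j)^2$, yields the claimed inequality.

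The one point needing care is the vanishing of the constraint terms. By the definition of satisfying an equality constraint, $\tilde{\E}_\mu[h\,(x_\ell^2-x_\ell)]=0$ holds for every $h$ with $\deg h+2\le 4$. If one reads "satisfying" only through the square-multiplier form of \cref{def:constrained-pd}, first write $q_\ell$ as a difference of sums of squares of polynomials of degree at most $1$. This is always possible for a degree-$2$ polynomial: split it into its monomials and use identities such as $ab=\tfrac14\bigl((a+b)^2-(a-b)^2\bigr)$, with $a,b$ of degree at most $1$. Each resulting square times $(x_\ell^2-x_\ell)$ then has pseudo-expectation both $\ge 0$ and $\le 0$, hence $0$.

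Beyond this degree bookkeeping there is no real obstacle; the statement is a direct instance of \cref{fact:sos-soundness}.
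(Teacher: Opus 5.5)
Your proposal is correct and follows essentially the paper's route: the paper obtains the corollary by applying soundness (\cref{fact:sos-soundness}) to the degree-$4$ certificate of \cref{lem:triangle-inequality}. You carry out that soundness step by hand, and your degree bookkeeping is right: the multipliers on $x_\ell^2-x_\ell$ have degree at most $2$, and you write them as differences of squares of degree-$\le 1$ polynomials, so each constraint term has zero pseudo-expectation.
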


Although we will not need it in our analysis, strong duality a.k.a (refutational) completeness conversely shows that for a given set of axioms, there always exists either a degree-$\ell$ pseudo-distribution or a degree-$\ell$ sos refutation. 

\begin{fact}[Strong duality/refutational completeness]
	\label{fact:sos-completeness}
	Suppose $\cA$ is a collection of polynomial constraints such that $\cA \sststile{\ell - r}{} \{ \sum_{i = 1}^n x_i^2 \leq B\}$ for some finite $B$.
    If there is no degree-$\ell$ pseudo-distribution $\mu$ such that $\mu \sdtstile{r}{} \cA\,$, then there is a sum-of-squares refutation $\cA \sststile{\ell - r}{}\{-1 \geq 0\}$.
\end{fact}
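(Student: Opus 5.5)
The plan is the standard conic-duality argument. Work in the finite-dimensional space $\R[x]_{\le \ell}$ of polynomials of degree at most $\ell$. Let $\mathcal{C}$ be the set of all polynomials of the form $\sum_{S\subseteq[m]} p_S\prod_{i\in S} f_i$, where each $p_S$ is a sum of squares and each term respects the degree budget of \cref{def:constrained-pd}, i.e.\ $\deg p_S + \sum_{i\in S}\max\{\deg f_i, r\}\le \ell$.

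Then $\mathcal{C}$ is a convex cone. By construction, a linear functional $L$ on $\R[x]_{\le\ell}$ with $L(1)=1$ and $L\ge 0$ on $\mathcal{C}$ is exactly the pseudo-expectation of a degree-$\ell$ pseudo-distribution $\mu$ with $\mu\sdtstile{r}{}\cA$. Conversely, a refutation is precisely the membership $-1\in\mathcal{C}$. So it suffices to show: if $-1\notin\mathcal{C}$, then there is such an $L$.

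\textbf{Step 1 ($1$ is an algebraic interior point of $\mathcal{C}$).} This is where the hypothesis $\cA\sststile{\ell-r}{}\{\sum_i x_i^2\le B\}$ enters, and I expect it to be the main obstacle. I would show that for every $g\in\R[x]_{\le \ell}$ there is $\delta>0$ with $1\pm\delta g\in\mathcal{C}$. By linearity it suffices to treat monomials $g=x^\alpha$ with $|\alpha|\le\ell$.

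For even parts one can use identities like $1\pm 2\delta x^\beta x^{\gamma} = (\delta x^\beta \pm x^{\gamma})^2 + \ldots$, combined with upper bounds on $x^{2\gamma}$ of the form $B^{|\gamma|} - x^{2\gamma}\in\mathcal{C}$. The latter are obtained inductively by multiplying the certificate of $B-\sum_i x_i^2\ge 0$ by squares of monomials. Each such product raises the degree, and this is the delicate point: the certificate has degree $\ell-r$, which leaves exactly the slack needed so that these products (together with the level-$r$ padding) stay within degree $\ell$. The bookkeeping must be checked carefully for monomials of top degree.

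\textbf{Step 2 (separation).} Suppose $-1\notin\mathcal{C}$. I first claim that $-1\notin\overline{\mathcal{C}}$. Indeed, if $c_k\to -1$ with $c_k\in\mathcal{C}$, then $g_k\coloneq -1-c_k\to 0$. By Step 1, applied on a basis of $\R[x]_{\le\ell}$ and combined using convexity, for $k$ large we have $\tfrac12+g_k\in\mathcal{C}$. Hence
\[
-\tfrac12 = c_k + \Paren{\tfrac12 + g_k}\in\mathcal{C},
\]
and rescaling gives $-1\in\mathcal{C}$, a contradiction.

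Now apply the separating hyperplane theorem to the closed convex cone $\overline{\mathcal{C}}$ and the point $-1\notin\overline{\mathcal{C}}$. This yields a linear functional $L$ with $L\ge 0$ on $\mathcal{C}$ and $L(-1)<0$, so $L(1)>0$.

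\textbf{Step 3 (normalization).} Set $\tilde{\E}\coloneq L/L(1)$. Then $\tilde{\E}\,1=1$, and $\tilde{\E}\bigl[h^2\prod_{i\in S}f_i\bigr]\ge 0$ for every admissible $h$ and $S$, since each such polynomial lies in $\mathcal{C}$. Taking the list of pseudo-moments $\tilde{\E}\,x^\alpha$ for $|\alpha|\le\ell$, this is a degree-$\ell$ pseudo-distribution with $\mu\sdtstile{r}{}\cA$, contradicting the assumption. Hence $-1\in\mathcal{C}$, i.e.\ a refutation $\cA\sststile{\ell-r}{}\{-1\ge 0\}$ exists, with degree accounting as in Step 1.
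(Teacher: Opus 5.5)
The paper gives no proof of this fact. It is imported background (pointing to Fleming--Kothari--Pitassi) and explicitly not used. So the benchmark is the standard conic-duality argument, which your skeleton follows, and Steps~2 and~3 are correct \emph{given} Step~1. The gap is Step~1, and it is not a bookkeeping matter. To get $1\pm\delta x^\alpha\in\mathcal C$ for $|\alpha|=\ell$ you need bounds $B'-x^{2\gamma}\in\mathcal C$ with $|\gamma|=\ell/2$. That means multiplying the boundedness certificate by squares of degree up to $\ell-2$, while the certificate may itself have degree $\ell-r$: the slack available is $r$, but you need $\ell-2$. Under the level-$r$ budget it is worse, since each factor is charged $\max\{\deg f_i,r\}$. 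Hence a constraint of degree $<r$, or any product of two or more constraints, never reaches degree $\ell$ inside $\mathcal C$.

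Concretely, take one variable, $\ell=6$, $r=4$ and $\cA=\{1-x^2\ge 0,\ -x^2\ge 0,\ x^5-1\ge 0\}$. The hypothesis holds, since $1-x^2$ is itself a constraint of degree $2=\ell-r$. Products of constraints are excluded from $\mathcal C$, so every element of $\mathcal C$ has the form $\sigma_0+\sigma_1(1-x^2)+\sigma_2(-x^2)+c\,(x^5-1)$ with $\sigma_i$ sums of squares, $\deg\sigma_1,\deg\sigma_2\le 2$ and $c\ge 0$. Its $x^6$-coefficient is that of $\sigma_0$, hence nonnegative, so $1-\delta x^6\notin\mathcal C$ for every $\delta>0$. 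Thus $1$ is not an interior point.

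In this example the conclusion itself fails, so no repair of Step~1 can work under the stated hypothesis. First, there is no degree-$6$ pseudo-distribution with $\mu\sdtstile{4}{}\cA$. Indeed, $\tilde{\E}(-x^2)\ge 0$ forces $\tilde{\E}x^2=0$, and positive semidefiniteness of the $4\times 4$ Hankel moment matrix then forces $\tilde{\E}x^4=0$ and $\tilde{\E}x^5=0$, contradicting $\tilde{\E}(x^5-1)\ge 0$. Yet $-1\notin\mathcal C$. In a representation of $-1$ the $x^6$-coefficient of $\sigma_0$ must vanish, so $\deg\sigma_0\le 4$; comparing $x^5$-coefficients gives $c=0$; evaluating at $x=0$ then gives $-1\ge 0$. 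On the other hand, $-1+b^2x^6=(x^5-1)+\bigl(\tfrac{1}{2b}x^2-bx^3\bigr)^2+\tfrac{1}{4b^2}x^2\cdot(-x^2)\in\mathcal C$ for every $b>0$. So $-1\in\overline{\mathcal C}\setminus\mathcal C$: the system is weakly infeasible, and at level $r$ only approximate refutations exist. There is also no degree-$2$ refutation, since $x^5-1$ cannot occur in one and $\{1-x^2\ge 0,\,-x^2\ge 0\}$ is satisfiable.

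Independently, your identification of $-1\in\mathcal C$ with $\cA\sststile{\ell-r}{}\{-1\ge 0\}$ is wrong, because $\mathcal C$ contains terms of degree up to $\ell$. For $\{1-x^2\ge 0,\ x^4-2\ge 0\}$ with $\ell=4$, $r=2$, one has $-1=(x^4-2)+x^2(1-x^2)+(1-x^2)\in\mathcal C$, but no degree-$2$ refutation exists.

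Your argument does go through under a hypothesis that actually makes Step~1 true. An example is $B-\sum_i x_i^2\in\cA$ with $r\le 2$ and $\ell$ even, the usual ball-constraint setting for strong duality of the Lasserre hierarchy (Josz--Henrion). There it yields a refutation within the level-$r$, degree-$\ell$ budget, not one of degree $\ell-r$.
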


\paragraph{Negative type metrics}
Let $(V,d)$ be a finite pseudo-metric space. $(V,d)$ is of negative type if and only if $(V,\sqrt{d})$ is an Euclidean pseudo-metric.
More precisely, for any negative-type metric, there is a map $\psi: V \to\R^n$ such that $\Snorm{\psi(i)-\psi(j)}=d(i,j)\,,$ for every $i,j\in V\,.$
Let $\mu$ be degree-4 pseudo-distribution consistent with $\Set{x_i^2=x_i\,, \forall i\in V}$ and consider the function $d:V\times V\to\R$ given by $d(i,j)=\tilde{\E}\Brac{(x_i-x_j)^2}$ for all $i,j\in V\,.$
By \cref{cor:triangle-eq} $(V,d)$ is a pseudo-metric. Furthermore, the mapping $\psi:V\to\R$ can be constructed taking the Gram vectors of the matrix $\tilde{\E}_\mu \Brac{\dyad{x}}\,.$

\paragraph{Implementation of sos}
The sum-of-squares algorithm can be implemented as a semidefinite program (SDP) which can then be solved using, for example, the ellipsoid method.
Associated with a degree-$\ell$ pseudo-distribution $\mu$ is the \emph{moment tensor} which is the tensor $\tilde{\E}_{\mu} (1,x_1, x_2,\ldots, x_n)^{\otimes \ell}$.
When $\ell$ is even, this tensor can be flattened into the \emph{moment matrix}, which has rows and columns indexed by multisets of $[n]$ with size at most $\ell/2$ and whose $(I,J)$ entry is $\tilde \E_\mu x^I x^J$.
Moment matrices can now be characterized as positive semidefinite matrices with simple symmetry constraints from flattening.

\begin{fact}
    A matrix $\Lambda$ with rows and columns indexed by multisets of $[n]$ with size at most $\ell$ is a moment matrix of a degree-$2\ell$ pseudo-distribution if and only if:
    \begin{enumerate}[(i)] \setlength{\itemsep}{0pt}
        \item $\Lambda \sge 0$
        \item $\Lambda_{I,J} = \Lambda_{I', J'}$ whenever $I \cup J = I' \cup J'$ as multisets
        \item $\Lambda_{\{\}, \{\}} = 1$
    \end{enumerate}
\end{fact}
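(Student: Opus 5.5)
The plan is to prove the two directions separately, using only the definition of a degree-$2\ell$ pseudo-distribution as a finitely supported signed function $\mu$ with $\sum_x \mu(x)=1$ and $\tilde\E_\mu f^2\ge 0$ for every $f$ of degree at most $\ell$.

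\emph{Necessity.} Given $\mu$, set $\Lambda_{I,J}=\tilde\E_\mu x^Ix^J$.
\begin{enumerate}[(i)]
\item For positive semidefiniteness, take any real vector $v$ indexed by multisets of size at most $\ell$. Then $v^\top\Lambda v=\tilde\E_\mu\bigl(\sum_I v_Ix^I\bigr)^2$. The polynomial $\sum_I v_Ix^I$ has degree at most $\ell$, so this quantity is nonnegative by definition, and $\Lambda\sge 0$.
\item The monomial $x^Ix^J$ equals $x^{I\cup J}$ (multiset union). Hence $\Lambda_{I,J}$ depends only on $I\cup J$, which is the symmetry condition.
\item $\Lambda_{\{\},\{\}}=\tilde\E_\mu 1=\sum_x\mu(x)=1$.
\end{enumerate}

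\emph{Sufficiency, step 1: a linear functional.} Given $\Lambda$ satisfying (i)--(iii), define a linear functional $L$ on $\R[x]_{\le 2\ell}$ on monomials by $L(x^K)=\Lambda_{I,J}$, where $K=I\cup J$ is any split with $|I|,|J|\le\ell$.
\begin{itemize}
\item Such a split always exists since $|K|\le 2\ell$.
\item By (ii) the value does not depend on the split, so $L$ is well defined.
\item By (iii), $L(1)=1$.
\item For $f=\sum_I v_Ix^I$ of degree at most $\ell$, $L(f^2)=v^\top\Lambda v\ge 0$ by (i).
\end{itemize}
So $L$ has all the properties of a pseudo-expectation. It remains to realise it by a finitely supported $\mu$.

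\emph{Sufficiency, step 2: realising $L$.} Let $N=\dim\R[x]_{\le 2\ell}$. For each point $a\in\R^n$, consider the evaluation functional $\mathrm{ev}_a\colon f\mapsto f(a)$ on $\R[x]_{\le 2\ell}$.
\begin{itemize}
\item The functionals $\{\mathrm{ev}_a\}_{a\in\R^n}$ span the dual space. Otherwise there would be a nonzero $f$ with $f(a)=0$ for all $a\in\R^n$, which is impossible for a nonzero real polynomial.
\item Hence there are points $a_1,\dots,a_N$ such that $\mathrm{ev}_{a_1},\dots,\mathrm{ev}_{a_N}$ form a basis of the dual.
\item So there are unique reals $w_1,\dots,w_N$ with $L=\sum_k w_k\,\mathrm{ev}_{a_k}$. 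Define $\mu(a_k)=w_k$ and $\mu=0$ elsewhere.
\item Then $\sum_x\mu(x)=L(1)=1$, and $\sum_x\mu(x)f(x)^2=L(f^2)\ge 0$ whenever $\deg f\le\ell$.
\end{itemize}
Thus $\mu$ is a degree-$2\ell$ pseudo-distribution with $\tilde\E_\mu x^Ix^J=\Lambda_{I,J}$, so $\Lambda$ is its moment matrix.

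\emph{Main obstacle.} I expect the realisation step to be the only substantive point. It has two parts: justifying that evaluation functionals span the dual space, and noting that the weights $w_k$ are allowed to be negative. The latter is exactly what the definition of a pseudo-distribution permits, so no positivity (moment-problem) argument is needed.
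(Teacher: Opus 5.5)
Your proof is correct and complete. The paper gives no proof of this fact: it quotes it as standard background and defers to the survey of Fleming, Kothari and Pitassi. So there is no argument in the paper to compare against, and yours is a self-contained proof.

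Your necessity direction and your step 1 are the usual argument. Conditions (i)--(iii) say exactly that $L(x^{I\cup J}) := \Lambda_{I,J}$ is a well-defined linear functional on $\R[x]_{\le 2\ell}$ with $L(1)=1$ and $L(f^2)\ge 0$ whenever $\deg f\le \ell$.

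Your step 2 fills in the point the paper glosses over when it says a pseudo-distribution can be ``equivalently specified by the list of pseudo-moments''. The paper defines a pseudo-distribution as a finitely supported signed function $\mu$ on $\R^n$. One therefore has to check that every such functional has the form $f\mapsto \sum_x \mu(x) f(x)$. Your dual-basis argument does exactly this: evaluations at points span the dual of $\R[x]_{\le 2\ell}$, since no nonzero real polynomial vanishes on all of $\R^n$. It yields a $\mu$ supported on at most $\binom{n+2\ell}{2\ell}$ distinct points.

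As you observe, the weights may be negative. This is precisely what the definition allows, so no positivity or moment-problem argument is needed.
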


The above characterization of pseudo-distributions in terms of the cone of positive semidefinite matrices is a formulation of the sos algorithm as an SDP.

We can deduce \cref{fact:running-time-sos} from the general theory of convex optimization \cite{grotschel2012geometric}.
The above fact leads to an $n^{O(\ell)}$-time weak separation oracle
for the convex set of all moment tensors of degree-$\ell$ pseudo-distributions over $\R^n$.
By the results of \cite{grotschel1981ellipsoid}, we can optimize over the set of pseudo-distributions in time $n^{O(\ell)}$, assuming numerical conditions.

The first numerical condition is that the bit complexity of the input to the sos algorithm is polynomial.
The second numerical condition is that we assume an upper bound on the norm of feasible solutions. This is guaranteed
if the input polynomial system $\cA$ is \emph{explicitly bounded},
meaning that it contains a constraint of the form $\|x\|^2 \leq M$ for some $M \geq 0$ with polynomial bit length,
or if $\cA \sststile{\ell}{} \{\|x\|^2 \leq M\}$. For example, Boolean constraints satisfy this since
\begin{equation}
    \{x_i^2 = x_i\}_{i \in [n]} \sststile{2}{} \{\|x\|^2 \leq n\} \;.
\end{equation}

Due to finite numerical precision, the output of the sos algorithm can only be computed approximately, not exactly. For a pseudo-distribution $\mu\,$, we say that $\mu\sdtstile{r}{}\cA$ holds \emph{approximately} if the inequalities in \cref{def:constrained-pd} are satisfied up to an error of $2^{-n^\ell}\cdot \norm{h}\cdot\prod_{i\in S}\norm{f_i}$, where $\norm{\cdot}$ denotes the Euclidean norm of the coefficients of a polynomial in the monomial basis.\footnote{The choice of norm is not important here because the factor $2^{-n^\ell}$ swamps the effect of choosing another norm.} In our analysis, the approximation error is so minuscule that it can be ignored and we will simply assume that the pseudo-distribution $\mu$ computed by the sos algorithm satisfies $\cA$ without error.

\end{document}